\documentclass[review]{article}

\usepackage{amsmath}
\usepackage{amsfonts}
\usepackage{amssymb}
\usepackage{mathtools}
\usepackage{amsthm}
\usepackage{bm}
\usepackage{indentfirst}
\usepackage{hyperref}
\usepackage{graphicx}
\usepackage{subfigure}
\usepackage{array}
\usepackage{authblk}
\usepackage{fullpage}
\usepackage{color}
\usepackage{algorithm}
\usepackage{algpseudocode}
\usepackage{cleveref}
\usepackage{url}

\ifpdf
  \DeclareGraphicsExtensions{.eps,.pdf,.png,.jpg}
\else
  \DeclareGraphicsExtensions{.eps}
\fi


\newcommand{\TheTitle}{Physics-Information-Aided Kriging: Constructing Covariance Functions using Stochastic Simulation Models}


\newcommand{\tensor}[1]{\mathbf{#1}}
\newcommand{\mexp}[1]{\mathbb{E}\left\{#1\right\}}
\newcommand{\trans}{\mathsf{T}}
\newcommand{\cov}{\text{Cov}}

\newcommand{\cor}{\text{Cor}}

\newcommand{\half}{\frac{1}{2}}
\newcommand{\ey}{y(\bm x)}
\newcommand{\hy}{\hat{y}(\bm x)}
\newcommand{\Hy}{\{Y^m(\bm x)\}_{m=1}^M}
\newcommand{\Hyl}{\{Y_L^m(\bm x)\}_{m=1}^{M_L}}
\newcommand{\Hyh}{\{Y_H^m(\bm x)\}_{m=1}^{M_H}}
\newcommand{\hs}{\hat{s}(\bm x)}
\newcommand{\smc}{\sigma_{_{MC}}(\bm x)}

\newtheorem{thm}{Theorem}[section]

\newtheorem{corollary}[thm]{Corollary}

\title{\TheTitle}


\author{Xiu Yang\footnote{Email: xiy518@lehigh.edu}}
\affil{Department of Industrial and Systems Engineering, Lehigh University, Bethlehem, PA 18015}
\author{Guzel D. Tartakovsky}
\affil{Intera}
\author{Alexandre M. Tartakovsky}
\affil{Civil annd Environmental Engineering, University of Illinois,
Urbana-Champion, IL 61801 and Pacific Northwest National Laboratory, Richland, WA
99354}
  
\begin{document}
\maketitle

\begin{abstract}
In this work, we propose a new Gaussian process regression (GPR) method:
physics information aided Kriging (PhIK). In the standard data-driven Kriging, the unknown
function of interest is usually treated as a Gaussian process with assumed stationary
covariance with hyperparameters estimated from data. In PhIK, we compute the mean
and covariance function from realizations of available stochastic models, e.g.,
from realizations of governing stochastic partial differential equations
solutions. Such constructed Gaussian process generally is
non-stationary, and does not assume a specific form of the covariance function.
Our approach avoids the optimization step in data-driven GPR methods to
identify the hyperparameters. More importantly, we prove that the physical
constraints in the form of a deterministic linear operator are guaranteed in the
resulting prediction. We also provide an error estimate in preserving 
the physical constraints when errors are included in the stochastic model 
realizations. To reduce the computational cost of obtaining stochastic model 
realizations, we propose a multilevel Monte Carlo estimate of the mean and 
covariance functions. Further, we present an active learning algorithm that 
guides the selection of additional observation locations. The efficiency and 
accuracy of PhIK are demonstrated for reconstructing a partially known modified 
Branin function, studying a three-dimensional heat transfer problem and learning a 
conservative tracer distribution from sparse
concentration measurements.

\noindent\textbf{Keywords:}
physics-informed, Gaussian process regression, active learning, error bound.
\end{abstract}


\section{Introduction}

Gaussian process regression (GPR), also known as \emph{Kriging} in geostatistics, is a 
widely used method in applied mathematics, statistics and machine learning for
constructing surrogate models, interpolation, classification, supervised
learning, and active learning \cite{forrester2008engineering,sacks1989design,stein2012interpolation}. 
GPR constructs a statistical model of a partially observed function (of time and/or 
space) assuming this function is a realization
of a Gaussian process (GP). GP is uniquely described by its mean and covariance
function. In the standard (here referred to as \emph{data-driven}) GP, 
prescribed forms of mean and covariance
functions are assumed, and the hyperparameters (e.g., variance and correlation
length) are computed from data via negative log-marginal likelihood function 
minimization. There are several variants of GPR, including simple, ordinary, and
universal Kriging \cite{kitanidis1997introduction}. GPR is also closely related
to kernel machines in machine learning, but it includes more information as it 
provides the uncertainty estimate \cite{williams2006gaussian}. 

In the ordinary Kriging, a constant mean and a prescribed form of the stationary
covariance function (also known as \emph{kernel}) are used. 
This stationarity assumption reduces the number of hyperparameters and model
complexity. However, in many practical problems, state variable are not
stationary. Hence, using a stationary GP may not yield accurate approximation.
Furthermore, even if the state variable is stationary, there are many choices
of the covariance functions with different smoothness
properties~\cite{durrande2012additive,dietrich1997fast}.

In addition to (measurement) data, physical models that reflect
(partial) understanding of the underlying system are available in many areas.
Especially, when obtaining measurements or conducting experiments
is very costly, which is quite common in practical problems,
these physical models and simulation tools are critical in research and
applications.
For example, in hydrology study, partial physical knowledge is available in the
form of partial differential differential equations (e.g., the Darcy law 
governing the hydraulic head) with unknown boundary, initial condition, and/or 
space-dependent coefficient (e.g., the hydraulic conductivity). Subsequently, 
the standard treatment 
is to model the unknown parameters as random variables/fields to formulate 
stochastic partial differential equations (SPDEs). In general, the covariance of
the state variable of SPDEs defined on a bounded domain with 
non-periodic boundary conditions is non-stationary~\cite{Tart2003PoF,Jarman2013SIAMUQ}. This non-stationarity can help
reflecting inhomogeneity of the state variables in the underlying physical 
system. 
Some success has been shown to incorporate 
physical knowledge in kernels, 
e.g.,~\cite{schober2014probabilistic,raissi2018numerical,owhadi2015bayesian} computed 
kernels for linear and weakly nonlinear (allowing accurate linearization)
ordinary and partial differential equations by substituting a GPR approximation
of the state variables in governing equations and obtaining 
a likelihood to be solved by optimization methods. 
However, for more complex systems, such as those involving strongly
nonlinear terms and random variables/fields,
it can be difficult to implement GPR in this manner in that the GP representing
the system is no longer Gaussian after a nonlinear mapping. 

In this work, we propose to incorporate physical knowledge in Kriging by 
computing mean and covariance function from a physics model. Therefore, we call
this method the \emph{physics-information-aided Kriging}, or \emph{PhIK}. 
Specifically, we compute the mean and covariance function used in Kriging 
from simulation ensembles.
%
Because computational tools, including commercial and open source packages,
have achieved a significant degree of maturity for many science applications
(e.g., climate modeling, hydrology, aerospace engineering, electrical 
engineering, etc.), we can run parallel simulations in the ``Monte Carlo 
(MC) mode'' efficiently. 
This could be achieved by treating unknown parameters as
random parameters or random fields.
This is a common step in uncertainty quantification (UQ) and sensitivity analysis~\cite{murphy2004quantification,
witteveen2011uncertainty, hou2012sensitivity, yang2013uncertainty,
zhang2014enabling, dai2017geostatistics}, and existing data from these research
can be reused in the proposed method.
We will show in this work that using empirical mean and covariance from these MC
(or other sampling techniques) simulations to construct a GP (i.e., to estimate
the mean and covariance function) helps integrating physical knowledge in the 
resulting model. In addition to making GPR prediction more accurate in terms of
preserving some physical constraints, this approach removes the need for 
assuming a specific form of the kernel and solving an optimization problem
for its hyperparameters. A similar idea is adopted in the ensemble Kalman filter
(EnKF) \cite{evensen2003ensemble} for data assimilation in time-dependent 
problems, where the ensemble mean and variance are used.

The cost of estimating mean and
covariance depends on the size and complexity of the physical model. We propose
to reduce this cost by using multilevel Monte Carlo (MLMC)
\cite{giles2008multilevel}. Traditionally, MLMC has been used to approximate the
mean and one-point moments  by combining a relatively few 
high-resolution simulations with a (larger) number of coarse resolution
simulations to compute moments with the desired accuracy. We extend MLMC for
approximating covariance function, a two-point second moment. Then, we provide 
error estimates for PhIK and MLMC-based PhIK describing how well physics 
constraints are preserved. 
Finally, we apply PhIK for active learning  (i.e., choosing additional 
measurement locations) using the posterior variance in the PhIK
method.

This work is organized as follows: Section~\ref{sec:method} introduces
PhIK, the MLMC method for estimating statistics, the error estimates for PhIK,
and active learning. Section~\ref{sec:numeric} provides two numerical examples 
to demonstrate the efficiency of the proposed method. Conclusions are presented 
in Section~\ref{sec:concl}.


\section{Methodology}
\label{sec:method}

This section begins by reviewing the general GPR 
framework \cite{forrester2008engineering} and the Kriging method based on the
assumption of stationary GP \cite{abrahamsen1997review}. Next, we introduce the
PhIK and MLMC-based PhIK. 
Finally, we present an active learning algorithm based on PhIK.

\subsection{GPR framework}
\label{subsec:gpr}

We denote the observation locations as 
$\bm X = \{\bm x^{(i)}\}_{i=1}^N$ ($\bm x^{(i)}$
are $d$-dimensional vectors in $D\subseteq\mathbb{R}^d$) and the observed state
values at these locations as
$\bm y=(y^{(1)}, y^{(2)},\cdots, y^{(N)})^\trans$ ($y^{(i)}\in\mathbb{R}$). For
simplicity, we assume that $y^{(i)}$ are scalars. We aim to predict $y$ at any
new location $\bm x^* \in D$. The GPR method 
treats the observed data as a realization of random field
$Y:D\times\Omega\rightarrow\mathbb{R}$:
\begin{equation}
  \label{eq:gp0}
Y(\bm x;\omega) \sim \mathcal{GP}\left(\mu(\bm x), k(\bm x, \bm x')\right),
\end{equation}
where and $k:D\times D\rightarrow\mathbb{R}$ are the mean and covariance
(kernel) functions:
  \begin{align}
    \mu(\bm x) & = \mexp{Y(\bm x)}\\ 
    k(\bm x,\bm x') & = \cov\left\{Y(\bm x), Y(\bm x')\right\}
                      = \mexp{(Y(\bm x)-\mu(\bm x))(Y(\bm x')-\mu(\bm x'))},
  \end{align}
and $Y(\bm x)$ is the concise notation of $Y(\bm x; \omega)$.
The variance of $Y(\bm x)$ is $k(\bm x, \bm x)$, and its standard deviation is
$\sigma(\bm x)=\sqrt{k(\bm x,\bm x)}$.  

In practice, one can parametrize forms of $\mu(\bm x)$ and 
$k(\bm x, \bm x)$. 
For example, in the widely used ordinary Kriging method, a stationary GP is assumed. In this
case, $\mu$ is set as a constant $\mu(\bm x)\equiv\mu$. 
More importantly, it is assumed that $k(\bm x, \bm x')=k(\bm\tau)$, where 
$\bm\tau=\bm x-\bm x'$, and $\sigma^2(\bm x)=k(\bm x,\bm x)=k(\bm 0)=\sigma^2$
is a constant. 
Popular forms of kernels include polynomial, exponential, Gaussian, and 
Mat\'{e}rn functions. For example, the Gaussian kernel can be written as
$k(\bm\tau)=\sigma^2\exp\left(-\frac{1}{2}\Vert \bm x-\bm x'\Vert^2_w\right)$,
where the weighted norm is defined as 
$\displaystyle\Vert \bm x-\bm x'\Vert^2_w=\sum_{i=1}^d
\left(\frac{x_i-x'_i}{l_i}\right)^2$ and $l_i$'s are correlation lengths.
Taking a classical frequentist stance, we have the 
following best linear unbiased predictor (BLUP)~\cite{sacks1989design, jones1998efficient}
\begin{equation}
\label{eq:krig_pred0}
  \hat y(\bm x^*) = \hat\mu + \bm c^\trans\tensor{C}^{-1}(\bm y-\hat\mu\bm 1), 
\end{equation}
and its mean squared error 
\begin{equation}
\label{eq:mse}
  \hat s^2(\bm x^*) = \hat\sigma^2-\bm c^\trans \tensor{C}^{-1}\bm c +
  \dfrac{(1-\bm 1^\trans\tensor {C}^{-1}\bm c)^2}{\bm 1^\trans \tensor C^{-1}\bm 1},
\end{equation}
where $\tensor C$ is the following covariance matrix:
\begin{equation}
\label{eq:cov_matrix0}
\tensor C = 
\begin{pmatrix}
k(\bm x^{(1)}, \bm x^{(1)}) & \cdots & k(\bm x^{(1)}, \bm x^{(N)}) \\
    \vdots & \ddots & \vdots  \\
k(\bm x^{(N)}, \bm x^{(1)}) & \cdots & k(\bm x^{(N)}, \bm x^{(N)})
\end{pmatrix},
\end{equation}
and 
$  \bm c=(k(\bm x^{(1)},\bm x^*), 
             \cdots,k(\bm x^{(N)},\bm x^*))^\trans, \bm 1=(1, 1, \dotsc,
             1)^\trans, \hat\mu = \frac{\bm 1^\trans\tensor C^{-1}\bm c}{\bm 1^\trans \tensor
  C^{-1} \bm 1}$. Hyperparameters $\hat\sigma$ and $\hat l_i$ can be obtained by maximum
  likelihood estimate. Of note, in most literature,
  Eqs.~\eqref{eq:krig_pred0} and ~\eqref{eq:mse} are based on correlation 
  function and matrix. Here, we use covariance function and matrix in the
  expressions (which is more general) to keep consistency with PhIK description,
  which does not assume $Y(\bm x;\omega)$ is stationary.

Alternatively, given complete information of the prior GPR 
  $Y(\bm x)$ (i.e., $\mu(\bm x)$ and $k(\bm x, \bm x)$ are given), a Bayesian
approach uses the posterior distribution
$Y(\bm x^*)| \bm X, \bm y \sim\mathcal{N}(\hat y(\bm x^*), \hat s^2(\bm x^*))$
for the prediction~\cite{currin1988bayesian, rasmussen2004gaussian}, 
where 
  \begin{align}
    &  \hat y(\bm x^*) = \mu(\bm x^*) + \bm c^\trans\tensor{C}^{-1}(\bm y-\bm\mu), \\
    & \hat s^2(\bm x^*) = \sigma^2(\bm x^*)-\bm c^\trans \tensor{C}^{-1}\bm c,
  \end{align}
  and $\bm \mu=(\mu(\bm x^{(1)}),\cdots,\mu(\bm x^{(N)}))^\trans$.

Moreover, to account for  
the observation noise, one can assume that the noise is independent and
identically distributed (i.i.d.) Gaussian random variables with zero mean and variance
$\delta^2$, and replace $\tensor C$ with $\tensor C+\delta^2\tensor I$, 
which, from the Bayesian perspective, results in a posterior
distribution of the normal random $Y(\bm x^*)$.
Moreover, in the noiseless case, if $\tensor C$ is not invertible, following the
common GPR approach, one can add a small regularization term $\alpha\tensor I$
($\alpha$ is a small positive real number) to $\tensor C$ such that it becomes
full rank. Adding the regularization term is equivalent to assuming there is a
measurement noise. 

A more general approach is to assume a non-stationary covariance function, which
potentially increases the number of hyperparameters
\cite{paciorek2004nonstationary, plagemann2008nonstationary, brahim2004gaussian}.
However, these methods still need to assume a specific form of the correlation
functions according to experience. The key computational challenge in the
data-driven GPR is the optimization step of maximizing the (marginal) likelihood.
In many practical cases, this is a non-convex optimization problem, and the 
condition number of $\tensor C$ can be quite large. A more 
fundamental challenge in the data-driven GPR is that it does not explicitly 
account for physical constraints and requires a large amount of data to
accurately model the physics. The PhIK introduced in the next section aims
to address both of these challenges.  


\subsection{PhIK}
\label{subsec:enkrig}
We design PhIK based on the Bayesian interpretation of the GPR
method. Namely, we specify a prior distribution of $Y(\bm x)$, which
is achieved by taking advantage of the existing domain knowledge in the form of 
realizations of a stochastic model of the underlying system. As such, there is no
need to assume a specific form of the correlation functions and solve an 
optimization problem for the hyperparameters. This idea is motivated by many 
scientific and engineering problems, where numerical or analytical 
physics-based models are available. These models typically include random 
parameters or random processes/fields to reflect the lack of understanding
(e.g., physical laws) or information (e.g., coefficients, parameters, etc.) of 
the real system. Then, MC type of simulations are conducted to 
generate an ensemble of state variables, from which the statistics of these 
state variables, e.g., mean and standard deviation, are estimated. 
We can compute the empirical mean and covariance of the ensemble, and use them
to construct a GP. Finally, the results of PhIK is this GP conditioned on
the observed state variables.

Specifically, assume that we have $M$ realizations, 
denoted as $\{Y^m(\bm x)\}_{m=1}^M (\bm x\in D)$, 
of a stochastic model $u:D\times\Omega\rightarrow \mathbb{R}$.
The empirical mean of these realizations is 
\begin{equation}
  \label{eq:mc_mean}
  \mu_{_{MC}}(\bm x)=\dfrac{1}{M}\sum_{m=1}^M Y^m(\bm x).
  \mu_{_{MC}}(\bm x)=\dfrac{1}{M}\sum_{m=1}^M Y^m(\bm x),
\end{equation}
and the empirical covariance is
\begin{equation}
  \label{eq:mc_kernel}
  k_{_{MC}}(\bm x, \bm x')=\dfrac{1}{M-1} \sum_{m=1}^M
  \left(Y^m(\bm x)-\mu_{_{MC}}(\bm x)\right) \left(Y^m(\bm x')-\mu_{_{MC}}(\bm x')\right).
\end{equation}
It is clear that $\mu_{_{MC}(\bm x)}$ and $k_{_{MC}}(\bm x, \bm x')$
are unbiased estimates of $\mexp{u(\bm x)}$ and $\cov\left\{u(\bm x),
u(\bm x')\right\}$, respectively.
Also, the covariance matrix of $\tensor C$ in Eq.~\eqref{eq:cov_matrix0}
has the following empirical counterpart:
\begin{equation}
  \label{eq:mc_cov}
  \tensor C_{_{MC}}=\dfrac{1}{M-1} \sum_{m=1}^M
  \left(\bm Y^m-\bm\mu_{_{MC}} \right)
  \left(\bm Y^m-\bm\mu_{_{MC}} \right)^\trans,
\end{equation}
where $\bm Y^m=(Y^m(\bm x^{(1)}), \cdots, Y^m(\bm x^{(N)}))^\trans$, 
$\bm\mu_{_{MC}}=(\mu_{_{MC}}(\bm x^{(1)}), \cdots, \mu_{_{MC}}(\bm x^{(N)}))^\trans$.
Similar to the standard GPR, PhIK results in a posterior distribution
at location $\bm x^*$ as $Y(\bm x^*)|\bm X, \bm y, \alpha \sim \mathcal{N}(\hat
y(\bm x^*), \hat s^2(\bm x^*))$, where
\begin{equation}
\label{eq:krig_pred2}
  \hat y(\bm x^*) = \mu_{_{MC}}(\bm x^*) + 
  \bm c_{_{MC}}^\trans(\tensor{C}_{_{MC}}+\alpha \tensor I)^{-1}(\bm y-\bm\mu_{_{MC}}), 
\end{equation}
\begin{equation}
\label{eq:mse_mc}
  \hat s^2(\bm x^*) = \hat\sigma_{_{MC}}^2(\bm x^*)-\bm c_{_{MC}}^\trans 
      (\tensor{C}_{_{MC}}+\alpha \tensor I)^{-1}\bm c_{_{MC}},
\end{equation}
where $\hat\sigma^2_{_{MC}}(\bm x^*)=k_{_{MC}}(\bm x^*,\bm x^*)$ is the
variance of data set $\{Y^m(\bm x^*)\}_{m=1}^M$, 
$\bm c_{_{MC}}=(k_{_{MC}}(\bm x^{(1)}, \bm x^*), \cdots,
k_{_{MC}}(\bm x^{(N)}, \bm x^*))$, and $\alpha \tensor I$ is the regularization 
term because $\tensor C_{_{MC}}$ is not full rank.
By construction, PhIK can be considered as a Kriging method using empirical
statistics. More importantly, PhIK has several advantages: 
\begin{itemize}
  \item It does not need to assume stationarity of the GP.
  \item It does not need to assume a specific form of the covariance relation.
        Thus, the form of the resulting GP is more flexible. 
  \item It does not need to solve the optimization problem to identify 
        hyperparameters, which can be a challenging problem often suffering from
        the ill-conditioned covariance matrix.
  \item It incorporates physical constraints via the mean and covariance function.
\end{itemize}

Next, we present a theorem that details how well PhIK prediction preserves linear
physical constraints. In this analysis, we treat the stochastic
model $u(\bm x;\omega)$ as a random field.
\begin{thm}
  \label{thm:err_bound}
  Consider random fields $u(\bm x;\omega):\mathbb{R}^d\times\Omega\rightarrow
  \mathbb{R}$ and
  $g(\bm x;\omega):\mathbb{R}^{d'}\times\Omega\rightarrow\mathbb{R}$, where $d$ and $d'$ are positive
  integers. Each realization of $u(\bm x;\omega)$ is in a normed space $(U,
  \Vert\cdot\Vert_U)$ and each realization of $g(\bm x;\omega)$ is in a normed space
  $(V,\Vert\cdot\Vert_V)$. Assume that $\mathcal{L}:
  (U,\Vert\cdot\Vert_U)\rightarrow (V, \Vert\cdot\Vert_V)$ is a linear operator
  such that $\Vert\mathcal{L}u(\bm x;\omega)-g(\bm x;\omega)\Vert\leq\epsilon$ 
  for any $\omega\in\Omega$.
  Let $\{Y^m(\bm x)\}_{m=1}^M$ be a finite number of realizations of $u(\bm x;\omega)$, i.e., 
  $Y^m(\bm x)=u(\bm x;\omega^m)$. Then, the prediction
  $\hat y(\bm x)$ from PhIK satisfies
  \begin{equation}
  \label{eq:err_bound}
  \begin{aligned}\Vert\mathcal{L}\hat y(\bm x)-\overline{g(\bm x)}\Vert_V
    & \leq \epsilon+\left[ 2\epsilon\sqrt{\dfrac{M}{M-1}}+\sigma\left(g(\bm x;\omega^m)\right) \right] \cdot \\
    &\qquad \left\Vert(\tensor C+\alpha\tensor I)^{-1}_{_{MC}}(\bm y-\bm\mu_{_{MC}})\right
  \Vert_{\infty}\sum_{i=1}^N  \sigma(Y^m(\bm x^{(i)})),\end{aligned}
\end{equation}
where $\sigma(Y^m(\bm x^{(i)}))$ is the standard deviation of the data set
$\{Y^m(\bm x^{(i)})\}_{m=1}^M$ for each fixed $\bm x^{(i)}$, 
$\overline{g(\bm x)}=\frac{1}{M}\sum_{m=1}^M g(\bm x;\omega^m)$, and
$\sigma(g(\bm x;\omega^m))=\left(\frac{1}{M-1}\sum_{m=1}^M \Vert g(\bm x;\omega^m)-\overline{g(\bm
x)}\Vert^2_V \right)^{\half}$.
\end{thm}
We present the proof of this theorem in Appendix~\ref{sec:app_proof1}. 
In the following discussion, we rewrite $\Vert\cdot\Vert_V$ as
$\Vert\cdot\Vert$ to simplify the notation.


This theorem holds for various norms, e.g., $L_2$ norm, $L_{\infty}$
norm, and $H^1$ norm. In practice, the realizations $Y^m(\bm x)$ are obtained by 
numerical simulations and are subject to numerical errors, model errors, etc. 
Thus, the theorem includes $\epsilon$ in the upper bound. It also indicates that
the standard deviation of ensemble member $Y^m$ at all observation locations 
$\bm x^{(i)}$ affects the upper bound of 
$\Vert\mathcal{L}(\hat y(\bm x))-\overline{g(\bm x)}\Vert$. If the variance 
of $Y^m(\bm x^{(i)})$ is small at every $\bm x^{(i)}$, e.g., when the
physical model is less uncertain, the resulting prediction $\hat y(\bm x)$ will
not violate the linear constraint much, i.e., 
$\Vert\mathcal{L}\hat y(\bm x)-\overline{g(\bm x)}\Vert$ is small. 
Moreover, if $g(\bm x;\omega)$ is a deterministic function $g(\bm x)$, then
$\sigma(g(\bm x;\omega^m))=0$ in the upper bound (see
Eq.~\eqref{eq:err_bound}).
Another important factor for the error bound is $\max_i |\tilde a_i|$, i.e., 
$\Vert(\tensor C+\alpha\tensor I)^{-1}_{_{MC}}(\bm y-\bm\mu_{_{MC}})\Vert_{\infty}$.
Because
$\Vert (\tensor C_{_{MC}}+\alpha \tensor I)^{-1}(\bm y-\bm \mu_{_{MC}}) \Vert_{\infty} \leq 
    \Vert (\tensor C_{_{MC}}+\alpha \tensor I)^{-1}(\bm y-\bm \mu_{_{MC}}) \Vert_2 \leq 
    \Vert (\tensor C_{_{MC}}+\alpha \tensor I)^{-1}\Vert_2\Vert\bm y-\bm \mu_{_{MC}}\Vert_2$,
the upper bound can be affected by the
difference between the physical model output and the observation, i.e.,
$\Vert\bm y-\bm \mu_{_{MC}}\Vert_2$, which is affected by the physical model's
accuracy, and the reciprocal of the smallest 
eigenvalue of $\tensor C_{_{MC}}+\alpha\tensor I$, which is bounded by
$\alpha^{-1}$.
In addition, the following corollary describes a special case.
\begin{corollary}
 In Theorem~\ref{thm:err_bound}, if $g$ is a deterministic function, i.e., 
 $g(\bm x;\omega)=g(\bm x)$, and $\mathcal{L}u(\bm x;\omega)=g(\bm x)$ for any
 $\omega\in\Omega$, then $\mathcal{L}\hat y(\bm x)=g(\bm x)$.
\end{corollary}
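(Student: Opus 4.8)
The plan is to treat this statement as the degenerate case of Theorem~\ref{thm:err_bound} in which the linear constraint is satisfied exactly by every realization. The hypothesis $\mathcal{L}u(\bm x;\omega)=g(\bm x)$ for all $\omega\in\Omega$ gives $\Vert\mathcal{L}u(\bm x;\omega)-g(\bm x)\Vert=0$, so the assumption of the theorem holds with $\epsilon=0$. Because $g$ is deterministic, its ensemble average is $\overline{g(\bm x)}=g(\bm x)$ and its ensemble standard deviation satisfies $\sigma(g(\bm x;\omega^m))=0$. Substituting $\epsilon=0$ and $\sigma(g(\bm x;\omega^m))=0$ into the error bound \eqref{eq:err_bound} collapses both the leading term and the bracketed factor to zero, so that $\Vert\mathcal{L}\hat y(\bm x)-g(\bm x)\Vert\leq 0$. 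Since $\Vert\cdot\Vert$ is a genuine norm, this forces $\mathcal{L}\hat y(\bm x)=g(\bm x)$.

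For a self-contained verification that does not route through the full bound, I would instead apply $\mathcal{L}$ directly to the PhIK prediction in its functional form \eqref{eq:krig_form1} and use linearity of $\mathcal{L}$ term by term. The mean contribution is $\mathcal{L}\mu_{_{MC}}(\bm x)=\frac{1}{M}\sum_{m=1}^M\mathcal{L}Y^m(\bm x)=\frac{1}{M}\sum_{m=1}^M g(\bm x)=g(\bm x)$, using \eqref{eq:mc_mean} together with $\mathcal{L}Y^m=\mathcal{L}u(\cdot\,;\omega^m)=g$. For each covariance term, linearity acting on the $\bm x$ slot gives $\mathcal{L}k_{_{MC}}(\bm x,\bm x^{(i)})=\frac{1}{M-1}\sum_{m=1}^M(Y^m(\bm x^{(i)})-\mu_{_{MC}}(\bm x^{(i)}))\,\mathcal{L}(Y^m(\bm x)-\mu_{_{MC}}(\bm x))$, and the inner factor vanishes identically because $\mathcal{L}(Y^m(\bm x)-\mu_{_{MC}}(\bm x))=g(\bm x)-g(\bm x)=0$. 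Hence every covariance term is annihilated and $\mathcal{L}\hat y(\bm x)=g(\bm x)+\sum_{i=1}^N\tilde a_i\cdot 0=g(\bm x)$, independently of the coefficients $\tilde a_i$.

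I expect no genuine obstacle: the result is essentially immediate once one observes that the linear operator annihilates each centered realization. The only point requiring a word of care is the final implication $\Vert\mathcal{L}\hat y(\bm x)-g(\bm x)\Vert=0\Rightarrow\mathcal{L}\hat y(\bm x)=g(\bm x)$, which relies on $\Vert\cdot\Vert$ being a norm (positive definiteness) rather than a seminorm; this holds for the $L_2$, $L_\infty$, and $H^1$ norms discussed in the remark after the theorem. I would present the direct computation as the primary argument, since it makes transparent that exactness is a structural consequence of linearity and does not depend on the conditioning of $\tensor C_{_{MC}}$ or on the particular observation locations, and then note the specialization of \eqref{eq:err_bound} as a one-line confirmation.
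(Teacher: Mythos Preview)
Your direct computation---applying $\mathcal{L}$ to \eqref{eq:krig_form1}, showing $\mathcal{L}\mu_{_{MC}}(\bm x)=g(\bm x)$ and $\mathcal{L}k_{_{MC}}(\bm x,\bm x^{(i)})=0$ because $\mathcal{L}(Y^m-\mu_{_{MC}})=g-g=0$---is exactly the paper's proof. Your additional route via specializing \eqref{eq:err_bound} with $\epsilon=0$ and $\sigma(g(\bm x;\omega^m))=0$ is a valid one-line confirmation that the paper does not spell out, but the core argument is the same.
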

\begin{proof}
  Because $\mathcal{L}Y^m(\bm x)=g(\bm x)$ and 
  $\mathcal{L}\mu_{_{MC}}(\bm x)=\overline{g(\bm x)}=g(\bm x)$, we have
  \begin{equation*}
    \begin{aligned}
      \mathcal{L}k_{_{MC}}(\bm x, \bm x^{(i)}) & =\mathcal{L}\Big[\dfrac{1}{M-1} \sum_{m=1}^M
    \Big(Y^m(\bm x)-\mu_{_{MC}}(\bm x)\Big) \left(Y^m(\bm x^{(i)})-\mu_{_{MC}}(\bm x^{(i)})\right)\Big] \\
   & =\dfrac{1}{M-1} \sum_{m=1}^M\mathcal{L}\Big(Y^m(\bm x)-\mu_{_{MC}}(\bm
      x)\Big)\left(Y^m(\bm x^{(i)})-\mu_{_{MC}}(\bm x^{(i)})\right)=0. 
  \end{aligned}
  \end{equation*}
 Therefore, 
  \begin{equation*} 
    \mathcal{L}\hat y(\bm x)=\mathcal{L}\Big(\mu_{_{MC}}(\bm x) + \sum_{i=1}^N
    \tilde a_i k_{_{MC}}(\bm x, \bm x^{(i)})\Big)=\mathcal{L}\mu_{_{MC}}(\bm x)
    =g(\bm x).
  \end{equation*}
\end{proof}

For example, if $u(\bm x;\omega)$ satisfies the Dirichlet boundary condition 
$u(\bm x;\omega)=g(\bm x)$, $\bm x\in\partial D_D$ for any
$\omega\in\Omega$, then $\hat y(\bm x)=g(\bm x), \bm x\in\partial D_D$.
Similarly, if $u(\bm x;\omega)$ satisfies the Neumann boundary condition, 
$\partial u(\bm x;\omega)/\partial {\bm n}=0, \bm x\in\partial D_N$, then
$\partial \hat{y}(\bm x;\omega)/\partial {\bm n}=0, \bm x\in\partial D_N$. 
Another example is 
if $u$ satisfies 
$\nabla\cdot u(\bm x;\omega)=0$ for any $\omega\in\Omega$, $\hat y(\bm x)$
is also a divergence-free field. In general cases, i.e., $g$ is random
and $\epsilon\neq 0$, the upper bound in Theorem~\ref{thm:err_bound}
describes how well a physical constraint is preserved.

In this work, we choose to use the MC method to compute the mean and covariance 
because of its robustness. Here, the criteria for choosing $M$ are similar to the
standard MC method. In our numerical examples, the size of $M$ is
around $100$ for demonstration purpose. 
When the system's number of degrees of freedom (DOF) is large
(e.g., a high-resolution 3D model), the cost of estimating the mean and
covariance using MC method can be very high. Several methods can be used to reduce the cost
of estimating empirical statistics, including quasi-Monte
Carlo~\cite{niederreiter1992random},
probabilistic collocation~\cite{XiuH05}, Analysis Of Variance 
(ANOVA)~\cite{YangCLK12}, and compressive sensing~\cite{YangK13}, as well as 
mode reduction methods, e.g., the moment equation method~\cite{Tart2017WRR}. 
Depending on the applications, these methods could be significantly more 
efficient than MC
, and some of them have been implemented in studying uncertainty and sensitivity
of complex systems such as 
climate~\cite{qian2016uncertainty}, hydrology~\cite{hou2012sensitivity}, and urban 
flow~\cite{margheri2016hybrid}.
It is not difficult to show that conclusions similar to
Theorem~\ref{thm:err_bound} hold if 
$\mu(\bm x)$ and $k(\bm x,\bm x')$ are approximated using a \emph{linear 
combination} of $\{Y^m(\bm x)\}_{m=1}^M$, where these realizations are based on
a different sampling strategy. 


\subsection{Estimating statistics using MLMC}
\label{subsec:mlmc}

The MC method requires a sufficiently large ensemble of $Y$ to
accurately estimate the mean and covariance matrix, which, in some
applications, can be unpractical. 
To address this issue, instead of using aforementioned different sampling
strategies, we estimate empirical statistics using MLMC.
For simplicity, we demonstrate the idea via two-level MLMC simulations. We use 
$Y_L^m(\bm x)$ ($m=1,...,M_L$) and $Y_H^m(\bm x)$ ($m=1,...,M_H$) to denote
$M_L$ low-accuracy and $M_H$ high-accuracy realizations of
the stochastic model for the system. We assume that $Y_L^m$ and $Y_H^m$
are realizations of stochastic models $u_L:D_L\times\Omega\rightarrow\mathbb{R}$
and $u_H:D_H\times\Omega\rightarrow\mathbb{R}$, respectively.
We also denote $\overline u(\bm x)=u_H(\bm x)-u_L(\bm x)$.
For example, $D_L\subset\mathbb{R}^d$ and $D_H=D\subseteq\mathbb{R}^d$ can be 
coarse and fine grids in numerical simulations, respectively. Thus,
$u_L$ and $u_H$ are low- and high-resolution random processes. In this case, when
computing $\overline u$, we interpolate $u_L$ from $D_L$ to $D_H$. To simplify 
notations, we use $u_L$ to denote both the low-resolution random process on $D_L$
and the interpolated random process from $D_L$ to $D_H$ in the MLMC formula. 
The mean of $u_H(\bm x)$ is estimated as
\begin{equation}
  \label{eq:mlmc_mean}
  \mexp{u_H(\bm x)}\approx\mu_{_{MLMC}}(\bm x)=\dfrac{1}{M_L}\sum_{m=1}^{M_L} Y_L^m(\bm x) +
      \dfrac{1}{M_H}\sum_{m=1}^{M_H}\overline Y^m(\bm x),
\end{equation}
which is the standard MLMC estimate of the mean \cite{giles2008multilevel}. In 
the past, MLMC was used only to estimate single point statistics, e.g.,
\cite{barth2011multi, bierig2015convergence, bierig2016estimation}. 
Here, we propose an MLMC estimate of the covariance function of $u_H(\bm x)$
based on the following relationship:
\begin{equation}
  \begin{aligned}
    \cov\left\{u_H(\bm x), u_H(\bm x')\right\} & = \cov\left\{u_L(\bm x) + \overline u(\bm x),
  u_L(\bm x') + \overline u(\bm x')\right\} \\
  & = \cov\left\{u_L(\bm x), u_L(\bm x')\right\} + \cov\left\{u_L(\bm x), \overline
u(\bm x') \right\} \\
&  +  \cov\left\{\overline u(\bm x), u_L(\bm x')\right\} + \cov\left\{\overline
u(\bm x), \overline u(\bm x') \right\}.
  \end{aligned}
\end{equation}
Because $u_L$ and $\overline u$ are sampled independently in MLMC, we have
\[\cov\left\{u_L(\bm x), \overline u(\bm x')\right\}=\cov\left\{\overline u(\bm x),
u_L(\bm x')\right\}=0.\]
Thus,
\begin{equation}
  \label{eq:mlmc_cov0}
  \begin{aligned}
    \cov\left\{u_H(\bm x), u_H(\bm x')\right\} = 
    \cov\left\{u_L(\bm x), u_L(\bm x')\right\} + \cov\left\{\overline u(\bm x),
    \overline u(\bm x') \right\},
  \end{aligned}
\end{equation}
and the unbiased MLMC approximation of the covariance is
\begin{equation}
\label{eq:mlmc_cov}
  \begin{aligned}
    & \cov\left\{u_H(\bm x), u_H(\bm x')\right\} 
    \approx k_{_{MLMC}}(\bm x, \bm x') \\
    = &\dfrac{1}{M_L-1} \sum_{m=1}^{M_L} \bigg(Y_L^m(\bm
  x)-\dfrac{1}{M_L}\sum_{m=1}^{M_L}Y_L^m(\bm x) \bigg) 
    \bigg(Y_L^m(\bm x')-\dfrac{1}{M_L}\sum_{m=1}^{M_L}Y_L^m(\bm x')\bigg)  \\
    & + \dfrac{1}{M_H-1} \sum_{m=1}^{M_H} \bigg(\overline Y^m(\bm
    x)-\dfrac{1}{M_H}\sum_{m=1}^{M_H}\overline Y^m(\bm x)\bigg)
  \bigg(\overline Y^m(\bm x')-\dfrac{1}{M_H}\sum_{m=1}^{M_H}\overline Y^m(\bm x')\bigg).
 \end{aligned}
\end{equation}
Finally, the MLMC-based PhIK model yields similar results as MC-based PhIK by
replacing all ``MC'' terms with ``MLMC'' terms:
\begin{equation}
\label{eq:mlmc_pred}
  \hat y(\bm x^*) = \mu_{_{MLMC}}(\bm x^*) + 
  \bm c_{_{MLMC}}^\trans (\tensor{C}_{_{MLMC}}+\alpha\tensor I)^{-1}(\bm y-\bm\mu_{_{MLMC}}), 
\end{equation}
\begin{equation}
\label{eq:mse_mlmc}
  \hat s^2(\bm x^*) = \sigma^2_{_{MLMC}}(\bm x^*)-\bm c_{_{MLMC}}^\trans 
  (\tensor{C}_{_{MLMC}}+\alpha \tensor I)^{-1}\bm c_{_{MLMC}},
\end{equation}
where $(\bm\mu_{_{MLMC}})_i=\mu_{_{MLMC}}(\bm x^{(i)})$, 
 $(\bm c_{_{MLMC}})_i=k_{_{MLMC}}(\bm x^{(i)}, \bm x^*)$, and
 $(\tensor C_{_{MLMC}})_{ij}=k_{_{MLMC}}(\bm x^{(i)}, \bm x^{(j)})$.

The following corollary is a straightforward extension of 
Theorem~\ref{thm:err_bound} for PhIK with the
mean and covariance obtained from MLMC.
\begin{corollary}
  \label{cor:err_bound2}
  Assume that $u_{_H}(\bm x;\omega)$ and $u_{_L}(\bm x;\omega)$ are random fields
  defined on $\mathbb{R}^d\times\Omega$ such that each realization of them is in
  a normed space $(U,\Vert\cdot\Vert_U)$. Let $\{Y_H^m(\bm x)\}_{m=1}^{M_H}$ 
  and $\{Y_L^m(\bm x)\}_{m=1}^{M_L}$ be finite ensembles of realizations of
  $u_{_H}(\bm x;\omega)$ and $u_{_L}(\bm x;\omega)$, respectively.
  $\Vert\mathcal{L}u_{_H}(\bm x;\omega)-g(\bm x;\omega)\Vert<\epsilon_{_H}$ and
  $\Vert\mathcal{L}u_{_L}(\bm x;\omega)-g(\bm x;\omega)\Vert<\epsilon_{_L}$ for
  any $\omega\in\Omega$, where $\mathcal{L}$, $\Vert\cdot\Vert$ and
  $g(\bm x;\omega)$  are defined in Theorem~\ref{thm:err_bound}.
The MLMC-based PhIK prediction $\hat y(\bm x)$ satisfies 
\begin{equation}\Vert\mathcal{L}\hat y(\bm x)-\overline{g(\bm x)}\Vert\leq C_{_H}\epsilon_{_H}
+ C_{_L}\epsilon_{_L} + \sigma(g(\bm x;\omega^m))\sum_{i=1}^N\tilde
a_i\sigma(Y_L^m(\bm x^{(i)})),\end{equation}
 where $\overline{g(\bm x)}$ is defined in Theorem~\ref{thm:err_bound},
 \[\begin{aligned}
     & C_{_H}=1+2\sum_{i=1}^N\tilde a_i\sqrt{\dfrac{M_H}{M_H-1}}\sigma(\overline
   Y^m(\bm x^{(i)})),  \\
   & C_{_L}= 2+2\sum_{i=1}^N\tilde a_i\left(\sqrt{\dfrac{M_L}{M_L-1}}\sigma(Y_L^m(\bm
  x^{(i)}))+\sqrt{\dfrac{M_H}{M_H-1}}\sigma(\overline Y^m(\bm x^{(i)}))\right),
\end{aligned}
 \]
  and $\tilde a_i$ is the $i$-th entry of $(\tensor C_{_{MLMC}}+\alpha\tensor I)^{-1}(\bm
  y-\bm\mu_{_{MLMC}})$.
  Here, $\sigma(g(\bm x;\omega^m))$ is defined in Theorem~\ref{thm:err_bound},
  $\sigma(Y_L^m(\bm x^{(i)}))$ and $\sigma(\overline Y^m(\bm x^{(i)}))$ are
  standard deviation of data sets $\{Y_L^m(\bm x^{(i)})\}_{m=1}^{M_L}$ and
$\{\overline Y^m(\bm x^{(i)})\}_{m=1}^{M_H}$, respectively.
\end{corollary}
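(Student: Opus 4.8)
The plan is to mirror the proof of Theorem~\ref{thm:err_bound}, exploiting the additive splitting of the MLMC covariance in Eq.~\eqref{eq:mlmc_cov} into a low-fidelity piece $k_{_{MC}}^{(L)}$ (the empirical covariance of $\{Y_L^m\}$, the first line of Eq.~\eqref{eq:mlmc_cov}) and a correction piece $k_{_{MC}}^{(\overline Y)}$ (that of $\{\overline Y^m\}$, the second line). First I would write the MLMC-based PhIK prediction in the functional form $\hat y(\bm x) = \mu_{_{MLMC}}(\bm x) + \sum_{i=1}^N\tilde a_i k_{_{MLMC}}(\bm x, \bm x^{(i)})$, with $\tilde a_i$ the $i$-th entry of $\tensor C_{_{MLMC}}^{-1}(\bm y - \bm\mu_{_{MLMC}})$, and apply the bounded linear operator $\mathcal{L}$ (acting on the first argument $\bm x$) to obtain $\mathcal{L}\hat y(\bm x) - \overline{g(\bm x)} = (\mathcal{L}\mu_{_{MLMC}}(\bm x) - \overline{g(\bm x)}) + \sum_{i=1}^N\tilde a_i\mathcal{L}k_{_{MLMC}}(\bm x, \bm x^{(i)})$. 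Since $\mathcal{L}k_{_{MLMC}} = \mathcal{L}k_{_{MC}}^{(L)} + \mathcal{L}k_{_{MC}}^{(\overline Y)}$, the covariance contribution splits into two terms, each treatable by the chain of inequalities already used to derive Eq.~\eqref{eq:comp_err_bound}.

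Second, I would bound the mean term. Writing $\mathcal{L}\mu_{_{MLMC}} = \frac{1}{M_L}\sum_m\mathcal{L}Y_L^m + \frac{1}{M_H}\sum_m\mathcal{L}\overline Y^m$ and taking $\overline{g(\bm x)}$ to be the low-level empirical average $\frac{1}{M_L}\sum_m g(\bm x;\omega^m)$, the first sum lies within $\epsilon_{_L}$ of $\overline{g(\bm x)}$ by the triangle inequality, exactly as in the mean estimate of Theorem~\ref{thm:err_bound}. The correction sum rests on the observation that $\mathcal{L}\overline Y^m = (\mathcal{L}Y_H^m - g(\bm x;\omega^m)) - (\mathcal{L}Y_L^m - g(\bm x;\omega^m))$ uses the \emph{same} realization $\omega^m$ in each paired MLMC sample, so the common $g$ cancels and $\Vert\mathcal{L}\overline Y^m\Vert \leq \epsilon_{_H} + \epsilon_{_L}$, whence $\Vert\frac{1}{M_H}\sum_m\mathcal{L}\overline Y^m\Vert \leq \epsilon_{_H} + \epsilon_{_L}$. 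Combining gives $\Vert\mathcal{L}\mu_{_{MLMC}} - \overline{g(\bm x)}\Vert \leq \epsilon_{_H} + 2\epsilon_{_L}$, which supplies the leading constants $1$ (on $\epsilon_{_H}$) and $2$ (on $\epsilon_{_L}$) in $C_{_H}$ and $C_{_L}$.

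Third, I would bound the two covariance pieces. For $\mathcal{L}k_{_{MC}}^{(L)}(\bm x, \bm x^{(i)})$ I apply the estimate in Eq.~\eqref{eq:comp_err_bound} verbatim with $M\to M_L$, $\epsilon\to\epsilon_{_L}$ and the low-fidelity mean, obtaining $\big(2\epsilon_{_L}\sqrt{M_L/(M_L-1)} + \sigma(g(\bm x;\omega^m))\big)\sigma(Y_L^m(\bm x^{(i)}))$; this is the sole source of the residual $\sigma(g)$ term. For $\mathcal{L}k_{_{MC}}^{(\overline Y)}(\bm x, \bm x^{(i)})$ the effective target is the zero function (the $g$'s cancel in $\overline Y$), so no $\sigma(g)$ contribution appears; using $\Vert\mathcal{L}(\overline Y^m - \frac{1}{M_H}\sum_n\overline Y^n)\Vert \leq 2(\epsilon_{_H}+\epsilon_{_L})$ together with the same Cauchy--Schwarz step yields $2(\epsilon_{_H}+\epsilon_{_L})\sqrt{M_H/(M_H-1)}\,\sigma(\overline Y^m(\bm x^{(i)}))$. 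Finally I sum over $i$ with weights $\tilde a_i$ (bounded by $\Vert\tensor C_{_{MLMC}}^{-1}\Vert_2\Vert\bm y-\bm\mu_{_{MLMC}}\Vert_2$ exactly as in Corollary~\ref{cor:err_bound}) and collect the coefficients of $\epsilon_{_H}$, $\epsilon_{_L}$, and $\sigma(g)$; the $\epsilon_{_H}$ pieces assemble into $C_{_H}$, the $\epsilon_{_L}$ pieces into $C_{_L}$, and the remainder is the stated residual term.

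The main obstacle, and really the only nonroutine point, is the correction term $\overline Y = Y_H - Y_L$: one must recognize that it satisfies a constraint against the \emph{zero} function rather than against $g$, and that its bound is the \emph{sum} $\epsilon_{_H}+\epsilon_{_L}$ precisely because the high- and low-fidelity realizations in each MLMC pair share the same $\omega^m$, forcing the common $g(\bm x;\omega^m)$ to cancel. Getting this cancellation right is what produces the asymmetric ``$1$ versus $2$'' structure of the leading constants and the appearance of $\epsilon_{_H}+\epsilon_{_L}$ (not separate $\epsilon_{_H}$, $\epsilon_{_L}$) multiplying the $\sigma(\overline Y^m)$ factor; everything else is a bookkeeping repetition of the Theorem~\ref{thm:err_bound} estimates applied level by level.
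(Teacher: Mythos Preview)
Your proposal is correct and follows essentially the same approach as the paper: the paper likewise bounds $\Vert\mathcal{L}\overline Y^m\Vert\le\epsilon_{_H}+\epsilon_{_L}$ via the shared-$\omega^m$ cancellation, obtains $\Vert\mathcal{L}\mu_{_{MLMC}}-\overline{g(\bm x)}\Vert\le 2\epsilon_{_L}+\epsilon_{_H}$, applies the Eq.~\eqref{eq:comp_err_bound} estimate separately to the low-fidelity and correction pieces of $k_{_{MLMC}}$ (the latter against the zero function, yielding no $\sigma(g)$ term), and then collects coefficients; the $\tilde a_i$ bound is deferred to Corollary~\ref{cor:err_bound} exactly as you do.
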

We present the proof in Appendix~\ref{sec:app_proof2}.


It is not complicated to extend the two-level MC to a general $L$-level
MLMC. We present the following theorem for the $L$-level ($L>2$) MLMC-based PhIK
error bounds. The proof of this theorem immediately follows from 
Theorem~\ref{thm:err_bound}.
\begin{thm}
  \label{thm:general_mlmc}
  Assume that $\{Y_l^m(\bm x)\}_{m=1}^{M_l}, l=1,\cdots,L$ 
  are finite ensembles of realizations of random fields $u_l(\bm x;\omega)$,
  where each realization of any $u_l$ belongs to a normed space
  $(U,\Vert\cdot\Vert_U)$. Denote $\overline Y_l=Y_{l}-Y_{l-1}$ for $l=2,\cdots, L$ and
  $\overline Y_1=Y_1$. The MLMC-based PhIK prediction $\hat y(\bm x)$ can 
  be given as 
  \begin{equation}
  \label{eq:gen_mlmc_pred}
  \hat y(\bm x) = \mu_{_{MLMC}}(\bm x) + 
    \sum_{i=1}^N\tilde a_i k_{_{MLMC}}(\bm x, \bm x^{(i)}), 
\end{equation}
where
  \[\mu_{_{MLMC}}(\bm x)=\sum_{l=0}^L\dfrac{1}{M_l}\sum_{m=1}^{M_L}\overline Y_l(\bm x); \]
  \begin{multline*}k_{_{MLMC}}(\bm x,\bm x')\\=\sum_{l=0}^L \dfrac{1}{M_l-1} \sum_{m=1}^{M_L}
  \bigg(\overline Y_l^m(\bm x)-\dfrac{1}{M_l}\sum_{m=1}^{M_L}\overline Y_l^m(\bm
  x)\bigg)
  \bigg(\overline Y_l^m(\bm x')-\dfrac{1}{M_l}\sum_{m=1}^{M_L}\overline
  Y_l^m(\bm x')\bigg);
\end{multline*}
  and $\tilde a_i=((\tensor C_{_{MLMC}}+\alpha\tensor I)^{-1}(\bm y-\bm\mu_{_{MLMC}}))_i$,
$\bm\mu_{_{MLMC}}=(\mu_{_{MLMC}}(\bm x^{(1)}),\cdots,\mu_{_{MLMC}}(\bm x^{(N)}))^\trans$, 
  $(\tensor C_{_{MLMC}})_{ij}=k_{_{MLMC}}(\bm x^{(i)}, \bm x^{(j)})$.
  Let $\mathcal{L}$, $g(\bm x;\omega)$, $\overline{g(\bm x)}$, and
  $\Vert\cdot\Vert$ be given as in Theorem~\ref{thm:err_bound}.

  1) If $g(\bm x;\omega)$ is a deterministic function, i.e., $g(\bm x;\omega)=g(\bm x)$,
  and $u_l$ satisfies $\mathcal{L}u_l(\bm x;\omega)=g(\bm x)$ for any 
  $\omega\in\Omega$ and for $l=1,\cdots, L$, then $\mathcal{L}\hat y(\bm x)=g(\bm x)$.

  2) If $Y_l$ satisfies
  $\Vert\mathcal{L}Y_l(\bm x;\omega)-g(\bm x;\omega)\Vert\leq\epsilon_l$ for
  $l=1,\cdots, L$, then
  \begin{equation}\Vert\mathcal{L}\hat y(\bm x)\Vert\leq \sum_{l=1}^L C_l\epsilon_{l} +
  \sigma(g(\bm x;\omega^m))\sum_{i=1}^N\tilde a_i\sigma(Y_L^m(\bm x^{(i)})),\end{equation}
  where
 \[ C_l=\begin{dcases}
   1+2\sum_{i=1}^N\tilde a_i\sqrt{\dfrac{M_l}{M_l-1}}\sigma(\overline
     Y_l^m(\bm x^{(i)})),  & l=L; \\
     2+2\sum_{i=1}^N\tilde
    a_i\bigg(\sqrt{\dfrac{M_l}{M_l-1}}\sigma(\overline Y_l^m(\bm
   x^{(i)}))+\sqrt{\dfrac{M_{l+1}}{M_{l+1}-1}}\sigma(\overline Y_{l+1}^m(\bm
   x^{(i)}))\bigg), & 1\leq l<L.
 \end{dcases}\]
\end{thm}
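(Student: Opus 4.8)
The plan is to view the $L$-level estimator as a single PhIK model of the form \eqref{eq:krig_form1}, namely $\hat y(\bm x)=\mu_{_{MLMC}}(\bm x)+\sum_{i=1}^N\tilde a_i k_{_{MLMC}}(\bm x,\bm x^{(i)})$ with $k_{_{MLMC}}=\sum_{l=1}^L k_l$ a sum of independent level covariances, and then to bound $\Vert\mathcal{L}\hat y(\bm x)-\overline{g(\bm x)}\Vert$ by the triangle inequality applied separately to the mean term and to each level of the covariance sum, reusing the chain of estimates in the proofs of Theorem~\ref{thm:err_bound} and Corollary~\ref{cor:err_bound2} level by level. The only genuinely new ingredient beyond the two-level case is the bookkeeping of how each tolerance $\epsilon_l$ is shared between two neighboring telescoping increments $\overline Y_l$ and $\overline Y_{l+1}$.

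First I would record the per-level bound on the increments. For $l=1$ we have $\overline Y_1=Y_1$, so $\Vert\mathcal{L}\overline Y_1^m(\bm x)-g(\bm x;\omega^m)\Vert\leq\epsilon_1$; for $2\leq l\leq L$, writing $\mathcal{L}\overline Y_l^m=(\mathcal{L}Y_l^m-g)-(\mathcal{L}Y_{l-1}^m-g)$ and using the triangle inequality gives $\Vert\mathcal{L}\overline Y_l^m(\bm x)\Vert\leq\epsilon_l+\epsilon_{l-1}$. For Part 1, when $g$ is deterministic and $\mathcal{L}u_l(\bm x;\omega)=g(\bm x)$ for every $l$, these estimates collapse: $\mathcal{L}\overline Y_l^m=0$ for $l\geq2$, and $\mathcal{L}\overline Y_1^m=g(\bm x)$ is independent of $m$. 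Hence every centered factor $\mathcal{L}\big(\overline Y_l^m-\tfrac{1}{M_l}\sum_m\overline Y_l^m\big)$ vanishes, so $\mathcal{L}k_{_{MLMC}}(\bm x,\bm x^{(i)})=0$, while only the $l=1$ term survives in $\mathcal{L}\mu_{_{MLMC}}$ and yields $g(\bm x)$; therefore $\mathcal{L}\hat y(\bm x)=g(\bm x)$, which is the deterministic-constraint corollary applied level by level.

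For Part 2 I would split the bound into its mean and covariance contributions. The mean telescopes: the $l=1$ increment lies within $\epsilon_1$ of $\overline{g(\bm x)}$, and each higher increment $\overline\mu_l=\tfrac{1}{M_l}\sum_m\overline Y_l^m$ satisfies $\Vert\mathcal{L}\overline\mu_l\Vert\leq\epsilon_l+\epsilon_{l-1}$, so summing gives $\Vert\mathcal{L}\mu_{_{MLMC}}(\bm x)-\overline{g(\bm x)}\Vert\leq\epsilon_L+2\sum_{l=1}^{L-1}\epsilon_l$, which supplies the constant $1$ (for $l=L$) and $2$ (for $l<L$) in each $C_l$. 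For the covariance, I would apply the inequalities in Eq.~\eqref{eq:comp_err_bound} to each level-$l$ sample covariance $k_l$: the coarsest level is governed by $\epsilon_1$ and retains the $\sigma(g(\bm x;\omega^m))$ term, contributing $(2\epsilon_1\sqrt{M_1/(M_1-1)}+\sigma(g(\bm x;\omega^m)))\sigma(\overline Y_1^m(\bm x^{(i)}))$, whereas for $l\geq2$ the increment $\overline Y_l$ plays the role of a field with deterministic (zero) constraint and tolerance $\epsilon_l+\epsilon_{l-1}$, yielding $2(\epsilon_l+\epsilon_{l-1})\sqrt{M_l/(M_l-1)}\,\sigma(\overline Y_l^m(\bm x^{(i)}))$.

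The decisive step is then the reindexing: multiply each level's covariance bound by $\tilde a_i$, sum over $i$ and over $l$, and regroup the result by the tolerance $\epsilon_l$ rather than by the level. Because $\epsilon_l$ enters both the level-$l$ difference (through $\sigma(\overline Y_l^m)$) and the level-$(l+1)$ difference (through $\sigma(\overline Y_{l+1}^m)$), its total coefficient becomes $2\sum_{i=1}^N\tilde a_i(\sqrt{M_l/(M_l-1)}\sigma(\overline Y_l^m(\bm x^{(i)}))+\sqrt{M_{l+1}/(M_{l+1}-1)}\sigma(\overline Y_{l+1}^m(\bm x^{(i)})))$ for $1\leq l<L$ and only the single level-$L$ term for $l=L$; adding the constant mean contribution reproduces exactly the stated $C_l$, and the residual $\sigma(g(\bm x;\omega^m))\sum_i\tilde a_i\sigma(\overline Y_1^m(\bm x^{(i)}))$ is the isolated coarsest-level term. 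I expect this bookkeeping — correctly pairing the shared tolerances across adjacent levels — to be the main obstacle, since everything else is a level-by-level repetition of Eq.~\eqref{eq:comp_err_bound}. Finally, the bound $\tilde a_i\leq\Vert\tensor C_{_{MLMC}}^{-1}\Vert_2\Vert\bm y-\bm\mu_{_{MLMC}}\Vert_2$ follows verbatim from Corollary~\ref{cor:err_bound} with $\tensor C_{_{MC}}$ replaced by $\tensor C_{_{MLMC}}$.
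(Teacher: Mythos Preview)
Your proposal is correct and matches the paper's approach exactly: the paper gives no detailed proof but simply states that the result ``immediately follows from Theorem~\ref{thm:err_bound} and Corollary~\ref{cor:err_bound2},'' and your level-by-level application of Eq.~\eqref{eq:comp_err_bound} followed by regrouping the shared tolerances $\epsilon_l$ across adjacent increments $\overline Y_l$ and $\overline Y_{l+1}$ is precisely that generalization. Your derivation also correctly produces the residual coarsest-level term $\sigma(g)\sum_i\tilde a_i\sigma(\overline Y_1^m(\bm x^{(i)}))$, revealing that the $\sigma(Y_L^m(\bm x^{(i)}))$ appearing in the stated bound is a notational slip carried over from the two-level corollary, where the subscript $L$ denoted the \emph{low}-resolution ensemble rather than the $L$-th (finest) level.
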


Of note, MLMC estimates for variance and higher-order single point (i.e., a fixed 
$\bm x\in\mathbb{R}^d$) statistical moments was proposed 
in \cite{barth2011multi, bierig2015convergence, bierig2016estimation}. 
The covariance estimate Eq.~\eqref{eq:mlmc_cov} proposed herein
also can be used to estimate variance by setting $\bm x'=\bm x$. The systematic
convergence analysis of the MLMC can be found 
in \cite{giles2008multilevel,barth2011multi, cliffe2011multilevel, bierig2015convergence, bierig2016estimation}.
Other multifidelity methods, e.g., \cite{giles2008improved, zhu2017multi},
also can be used as long as they compute the mean and covariance efficiently. 


\subsection{Active learning}
\label{sec:act}
In this context, \emph{active learning} 
(e.g., \cite{cohn1996active,jones1998efficient,tong2001support,collet2015optimism})
is a process of identifying locations for additional observations that minimize 
the prediction error and reduce MSE or uncertainty. In the GPR framework, a 
natural way is to add observations at the locations corresponding to local 
maxima in $s^2(\bm x)$, e.g., \cite{forrester2008engineering, raissi2017machine}. 
Then, we can make a new prediction $\hat y(\bm x)$ for $\bm x\in D$ and compute
a new $\hat s^2(\bm x)$ to select the next location for additional observation (see
Algorithm~\ref{algo:act}). Such treatment differs from other sensor placement 
methods based on deterministic approximation of unknown fields 
(e.g., \cite{zhang2008efficient, yang2010eof}). This selection 
criterion is based on the statistical interpretation of the interpolation. 
\begin{algorithm}[!h]
  \caption{Active learning based on GPR}
  \label{algo:act}
  \begin{algorithmic}[1]
  \State Specify the locations $\bm X$, corresponding observation $\bm y$, and 
         the maximum number of observation $N_{\max}$ affordable. The number of 
         available observations is denoted as $N$.
    \While {$N_{\max}>N$}
  \State Compute the MSE $\hat s^2(\bm x)$ of MLE prediction $\hat y(\bm x)$ for 
         $\bm x\in D$.
  \State Locate the location $\bm x_m$ for the maximum of $\hat s^2(\bm x)$ for
         $\bm x\in D$. 
  \State Obtain observation $y_m$ at $\bm x_m$ and set 
         $\bm X = \{\bm X, \bm x_m\}, \bm y = (\bm y^\trans, y_m)^\trans, N=N+1$.
  \EndWhile
  \State Construct the MLE prediction of $\hat y(\bm x)$ on $D$ using $\bm X$ 
         and $\bm y$.
  \end{algorithmic}
\end{algorithm}

Notably, Algorithm \ref{algo:act} is a greedy algorithm to identify additional
observation locations when some observations are affordable. It cannot guarantee
to identify the optimal new observation locations. More sophisticated algorithms
can be found in literature, e.g., \cite{jones1998efficient, krause2008near,xiao2018new},
and PhIK is complementary to these methods because it provides the GP.
Also, it is not necessary that the new observations are added one by one. Roughly 
speaking, if there are several maxima of $\hat s^2(\bm x)$ and they are not clustered (to avoid 
potential ill-conditioning of $\tensor C$ in some cases), the observations at
these locations can be added simultaneously. In this work, we add
new observations one by one in the numerical examples for demonstration purposes.
The efficiency of the active learning algorithm depends on the correlation
$\cor\left\{Y(\bm x), Y(\bm x')\right\}=\text{Cov}\{Y(\bm x), Y(\bm
x')\}/(\sigma(Y(\bm x))\sigma(Y(\bm x')))$. Intuitively, if the correlation is
large, then adding a new observation will provide information in a large
neighborhood of this location, reducing the MSE in a large region. An
extreme example is that when $\cor\left\{Y(\bm x), Y(\bm x')\right\}\equiv 1$
(e.g., the correlation length of the GP is infinite), only one observation is
needed to reconstruct the field. On the other hand, if the correlation is small 
(e.g., the correlation length of the GP is small), an observation can only influence a small
neighborhood, then more observations are required to reduce the uncertainty in
the prediction of the entire domain. An extreme example of this scenario is 
$\cor\{Y(\bm x), Y(\bm x')\}\equiv 0$. Unless we have observations everywhere,
the MSE in the prediction at the locations with no observations is
unchanged no matter how many observations we have because at these locations
$\bm c=\bm 0$ in Eq.~\eqref{eq:mse}.

There is an important difference between performance of Kriging and
PhIK in the context of active learning. PhIK prediction on entire $D$ can be
written as
$\hat{\bm y}=\bm\mu_{_{MC}} + \sum_{i=1}^N \tilde a_i \bm k_{_{MC}}^{(i)}$ 
(see Eq.~\eqref{eq:krig_form1}), where $\hat{\bm y}=\hat y(\bm x)$, and
$\bm\mu_{_{MC}}=\mu_{_{MC}}(\bm x)$
$\bm k_{_{MC}}^{(i)}=k(\bm x,\bm x^{(i)})$. Because $\bm\mu_{_{MC}}$ and 
$\bm k_{_{MC}}^{(i)}$ are linear combination of $\{\hat Y^m(\bm x)\}_{m=1}^M$,
PhIK prediction on the entire field $D$, i.e., $\hat{\bm y}$, is a linear 
combination of ensemble $\{\hat Y^m(\bm x)\}_{m=1}^M$. Therefore, once this
ensemble is given, the approximation space for the exact field is fixed, and 
adding more observation will not enlarge this space. It is expected that a
stochastic model that can better describe the underlying system will result in
more accurate PhIK prediction via providng a more appropriate approximation space
$\text{span}\{\hat Y^m(\bm x)\}_{m=1}^M$. In contrast, in Kriging method,
adding a new observation $y^{(N+1)}=y(\bm x^{(N+1)})$ enlarges the 
approximation space from $\text{span}\{k(\bm x, \bm x^{(i)})\}_{i=1}^N$
to $\text{span}\{k(\bm x, \bm x^{(i)})\}_{i=1}^{N+1}$ (see
Eq.~\eqref{eq:krig_form0}), which has the potential to increse the accuracy of
the prediction if the kernel $k(\cdot, \cdot)$ is chosen appropriately. We refer
interested readers to literatures on reproducing kernel 
Hilbert space for theoretical results on approximation, 
e.g.,~\cite{yuan2010reproducing, berlinet2011reproducing}. 

In both methods, the error $\Vert\hat{\bm y} - \bm y\Vert$ converges to zeros
asymptotically because both Kriging and PhIK predictions coincide with the exact
field wherever noiseless observation is available. However, in general, there is
no guarantee the error decreases monotonically for each newly added observation.

There is a large body of literature in statistics and machine learning on the
\emph{learning curve} that describes the average MSE over $D$ as a function of
$N$, the number of available observations, e.g., 
\cite{ylvisaker1975designs, micchelli1979design, williams2000upper, ritter2007average}. 
Both noisy and noiseless scenarios have been studied, 
and we refer interested readers to the aforementioned literatures.

\subsection{Some discussions}
\subsubsection{Potential drawbacks}
The PhIK methods requires using multiple simulation results of a stochastic
model, which can be costly when the model is complex.
There are different types of methods, e.g., multifidelity methods, that help to
reduce the cost of estimating statistics by
incorporating information from costly high-fidelity simulations and cheaper 
low-fidelity simulations. The MLMC used in this work is one of such methods.
We note that running simulations with different sets of random 
parameters is a typical step in studies like model calibration, uncertainty
quantification, sensitivity analysis as pointed out in 
Section~\ref{subsec:enkrig}. Therefore, we can either use the simulation 
results in these studies directly, or adjust
approaches in these areas that reduce the computational cost to PhIK. For 
example, a bifidelity method is used to dramatically reduce the cost of 
running simulations~\cite{yang2020bifidelity}. Again, the motivation of PhIK is
to exploit existing physical models to maximally extracted information very limited
measurement data. 

Another drawback is that PhIK results in a GP conditioned on observation data
and this GP depends on the stochastic model we use. If the model can not
appropriately reflect the known information or model unknowns using random
variables/fields, then PhIK may not work well. Further, when implementing the
active learning using PhIK, the locations of new observations is mainly decided 
by the pattern of the standard deviation of simulation reuslts. Therefore, it is
possible that some important locations are not selected for acquring new
observation if this information is not reflected in the model.

\subsubsection{Computational cost}
After obtaining simulations results, PhIK
compute the posterior mean and variance at a new location $\bm x^*$, i.e.,
\[\hat y(\bm x^*) = \mu(\bm x^*) + \bm c_{N\times 1}'\tensor C^{-1}_{N\times N}(\bm y-\bm \mu)_{N\times 1}, \]
\[\hat s^2(\bm x^*) = \sigma^2(\bm x^*) - \bm c_{N\times 1}'\tensor
C^{-1}_{N\times N}\bm c_{N\times 1}, \]
where we omit the notation ``MC'' or ``MLMC'' and the subscripts describe the size 
of each vector or matrix. The cost of obtaining $\tensor C$ is $O(M\times N^2)$.
We use the Cholesky decomposition for $\tensor
C$, the cost of which is $O(N^3)$. Vector $\bm c$ is obtained by a matrix-vector 
multiplication, the cost of which is $O(M\times N)$. Therefore, to compute the
$\hat y(\bm x^*)$ and $\hat s(\bm x^*)$ on all
grid points is $O(DOF\times M + M\times N^2+N^3 + DOF\times(M\times N + N^2))=O(DOF\times
M\times N + DOF\times N^2 + M\times N^2+ N^3)$. 
Since we consider complex system whose measurement is
very expensive, so $N$ is small (e.g., $N$ is no more than $30$
in our numerical examples) and
$M\ll DOF$ ($M$ is usually a few hundred).
Therefore, the complexity is approximately $O(DOF)$. 
This is the same as Kriging in this
scenario (small $N$). In our numerical examples to be shown in
the next section, the largest 
DOF is $32,768$ and it also has largest $M=500$.
Implementing PhIK method costs less than one second on a laptop with
2.9GHz Dual-Core Intel Core i5 using MATLAB without parallelization. 
For larger systems, one can use parallel computing package as well as
hardware like GPU to accelerate the computing. 

\subsubsection{Connection with other methods}

In general, a GPR model use the following regression
model~\cite{sacks1989design}:
\begin{equation}
  \label{eq:gpr_general}
  Y(\bm x) = \sum_{j=1}^K \beta_j f_j(\bm x) + Z(\bm x), 
\end{equation}
where $\beta_j$'s are constant scalar, $f_j$'s are deterministic functions, and
$Z$ is a GP with zeros mean. Apparently, $Y$ is a GP, and the observation data
are considered to be collected from a realization of $Y$. But usually only a few
data on this realization is collected. In ordinary Kriging,
$K=1, f_1(\bm x)\equiv 1$ and $\beta$ as well as hyperparamters of $Z$ are
estimated from data in the BLUP setting, i.e., $Y$ is identified from
partial data of one of its realization. In PhIK, $K=1, \beta=1$, $f_1$ is the
empirical mean function and $Z$'s covariance is the empirical covariance. 
In other words, we have several complete realizations of a random 
field (not necessarily Gaussian), and we use their empirical mean and 
covariance function to construct $Y$ directly. In a nutshell, Kriging constructs
GP using data (which is typically sparse) on a domain, while PhIK uses 
these data plus some realizations on the entire domain. In both scenarios, 
we can implement multifidelity methods to improve the computational 
efficiency.

The multifidelity may refer to low and high-resolution models based on the same equations 
(e.g., in the MLMC approach in the third numerical example) or  
models using different mathematical description~\cite{peherstorfer2018survey}, e.g.,
some low-fidelity model may neglect less important forces and average out fast processes. 
Multi-fidelity methods were shown to be efficient for computing one-point 
statistics (e.g., variance) for quantifying uncertainty \cite{ng2012multifidelity,geraci2017multifidelity,peherstorfer2018survey}.
In our work, we use MLMC to reduce the cost of estimating mean and two-point statistics (i.e., covariance). 

Another type of multifidelity framework, 
e.g.,~\cite{kennedy2000predicting,perdikaris2015multi,perdikaris2016multifidelity,parussini2017multi, owhadi2017multigrid},
uses linear or nonlinear combination of \emph{multiple} GPs to hierarchically construct
a high-fidelity model. For example, the multifidelity GPR 
in~\cite{kennedy2000predicting, perdikaris2015multi} assumes 
$Y(\bm x) = \rho(\bm x)Y_{\text{low}}(\bm x) + Z(\bm x)$ (we use two-fidelity case 
for illustration), where $Y_{\text{low}}$ is a GP regressing low-fidelity data
and $\rho(x)$ is a deterministic scalar function. It can be intepretted as 
replacing $\sum_{j=1}^K \beta_j f_j(\bm x)$ in Eq.~\eqref{eq:gpr_general} with
a GP $Y_{\text{low}}$. This procedure can be repeated recursively as in~\cite{
perdikaris2015multi,perdikaris2016multifidelity,parussini2017multi}. This idea
can be implemented in PhIK, too. For example, one can construct $Y_{\text{low}}$
using low-fidelity simulations first, then use high-fidelity simulations as well
as data to identify $\rho$ and $Z$.
Alternatively, one can construct $Y_{\text{low}}$ using PhIK with multifidelity
simulations as we show in this work, then identify $Z$ using a predecided form
of kernel function as in the aforementioned hierarchical method, an example of
which is shown in~\cite{yang2019physics}.

The main difference between PhIK and the aforementioned methods in
the context of multifidelity is that the latter considers multifidelity data
$\bm y=(y^{(1)}, \dotsc, y^{(N)})$, while the former considers single fidelity data. In the purely 
data-driven setting, the aforementioned hierarchical methods, each data set is 
a small subset of a \emph{single} realization of a random field at different
fidelity level. These data are
integrated to construct a random field on the entire domain. On the other hand, 
in PhIK, roughly speaking, we also have \emph{multiple} realizations of different
random fields on the entire domain, which are used to construct a random field
on the entire domain associated with the data. In this construction procedure, 
data is not used.
If we only have one realization from each stochastic model, then we have the
same setting as the purely data-driven setting, and PhIK does not apply
because we only have degenerate empirical statistics. Regarding the computational
cost, Kriging is known to be expensive when the data set is large. The
complexity in constructing a GP is $O(N^3)$, which can be further reduced, e.g.,
to $O(N)$, if special structure exists~\cite{quinonero2005unifying,
perdikaris2016multifidelity,parussini2017multi}. Of note, here $O(N)$ is the
cost to identify the kernel function, and then the cost to compute the covariance
matrix is $O(N^2)$. In the PhIK case, the cost of directly implementing the
algorithm to construct the covariance matrix is $O(MN^2)$. As we usually set 
$M$ to be a few hundred, when $N$ is large, the complexity is $O(N^2)$. The cost
of computing the empirical mean vector at observation locations is $MN$
multiplied by a constant, so the complexity is $O(N)$. Therefore, the complexity
of ``training'' PhIK is $O(N^2)$, which is the same as the purely data-driven methods.

We also note that merging model and data can be categorized as a multi-resolution 
and multifidelity modeling task, where a real system is represented as a set of 
models of different resolutions at different abstraction levels from the viewpoint of 
simulation objectives~\cite{davis1998experiments,hong2013specification,Rabelo2015MRM}.
In our cases, the data is considered as a high-fidelity ``model'' and the
stochastic model is treated as a low-fidelity model. 

Finally, the well-known work~\cite{lindgren2011explicit} built connection 
between a specific family of
fractional SPDE and the Matern random field in that the solution of the SPDE
is a Gaussian random field with a Matern kernel. Hence, one can solve the SPDE
to multiple times and use the empirical mean and covariance to construct the
corresponding GP. Our method can be considered as an extension of
this approach as it uses any stochastic model's empirical mean and
covariance to construct a GP. 


\section{Numerical examples}
\label{sec:numeric}

We present two numerical examples to demonstrate the performance of PhIK. Both
numerical examples are two-dimensional in physical space. In the first two
examples, we use the MC-based PhIK introduced in Section~\ref{subsec:enkrig}, 
and in the third one, we employ the MLMC-based PhIK presented in
Section~\ref{subsec:mlmc}. We compare PhIK with the ordinary Kriging (in the
following, we refer to it as Kriging). In the Kriging method,
we tested the Gaussian kernel and  Mat\'{e}rn kernel. We do not observe 
significant difference in the results and only report solutions obtained with 
the Gaussian kernel. All Kriging results are obtained by the MATLAB package
GPML~\cite{rasmussen2010gaussian}. In each example, we use $\ey$ to denote the
reference solution, $\hy$ to denote the posterior mean, which is typically
used as the prediction or reconstruction result, and $\hs$ to denote the posterior
standard deviation, which can be interpretted as the uncertainty in the prediction.
The simulations of stochastic model are denoted as
$\Hy$ for the MC-based PhIK. Similarly, in the MLMC-based PhIK, $\Hyl$ and $\Hyh$
denote simulations of low- and high-resolution simulations, respectively.

\subsection{Branin function}
We consider the following modified Branin function \cite{forrester2008engineering}:
\begin{equation}
\label{eq:branin_fun}
f(x, y) = a(\bar y-b\bar x^2+c\bar x-r)^2 + g(1-p)\cos(\bar x)+g + qx,
\end{equation}
where
\[\bar x=15x-5,~\bar y=15y,~(x,y)\in D=[0,1]\times [0,1],\]
and
\[a=1,~b=5.1/(4\pi^2),~c=5/\pi,~r=6,~g=10,~p=1/(8\pi),~q=5.\]
The contour of $f$ and eight randomly chosen observation locations are
presented in Figure~\ref{fig:branin_krig}(a). The function $f$ is evaluated on 
a $41\times 41$ uniform grid, and we denote the resulting discrete field (a $41\times 41$ matrix) as
$\ey$. We will compare $\hy$, $\hs$ and $\hy-\ey$ by different
methods. 

\subsubsection{Field reconstruction}

We first use  Kriging with the eight
observation data sets. Figure~\ref{fig:branin_krig}~(b) presents
the resulting $\hy$, and Figure~\ref{fig:branin_krig}~(c) depicts corresponding
$\hs$.
The difference $\hy-\ey$ is shown in Figure~\ref{fig:branin_krig}~(d), 
which quantifies the $\hy$ deviation from the ground truth.
Apparently, this reconstruction deviates considerably from $\ey$ in 
Figure~\ref{fig:branin_krig}~(a), especially in the region $[0, 0.5]\times [0.5, 1]$.
This is consistent with Figure~\ref{fig:branin_krig}(c) as $\hs$ is large in
this region. This is because there is no observation in this region. 
\begin{figure}[thbp]
\centering
\subfigure[$\ey$]{
\includegraphics[width=0.4\textwidth]{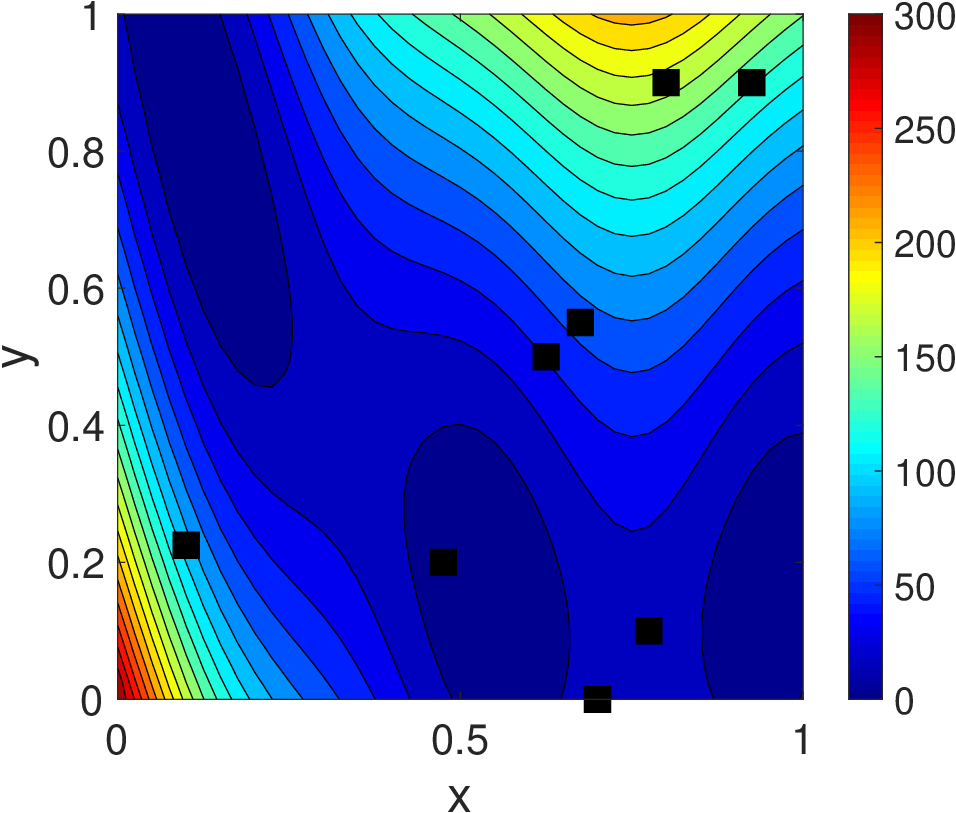}}\quad
\subfigure[$\hy$]{
\includegraphics[width=0.4\textwidth]{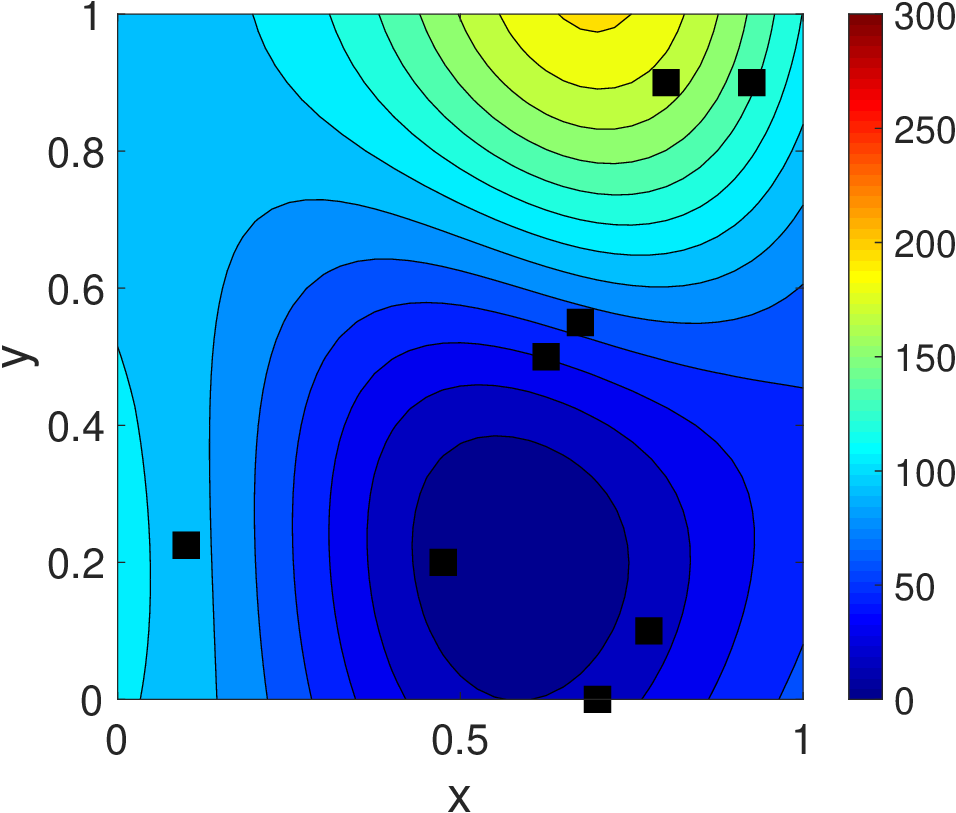}}\\
\subfigure[$\hs$]{
\includegraphics[width=0.4\textwidth]{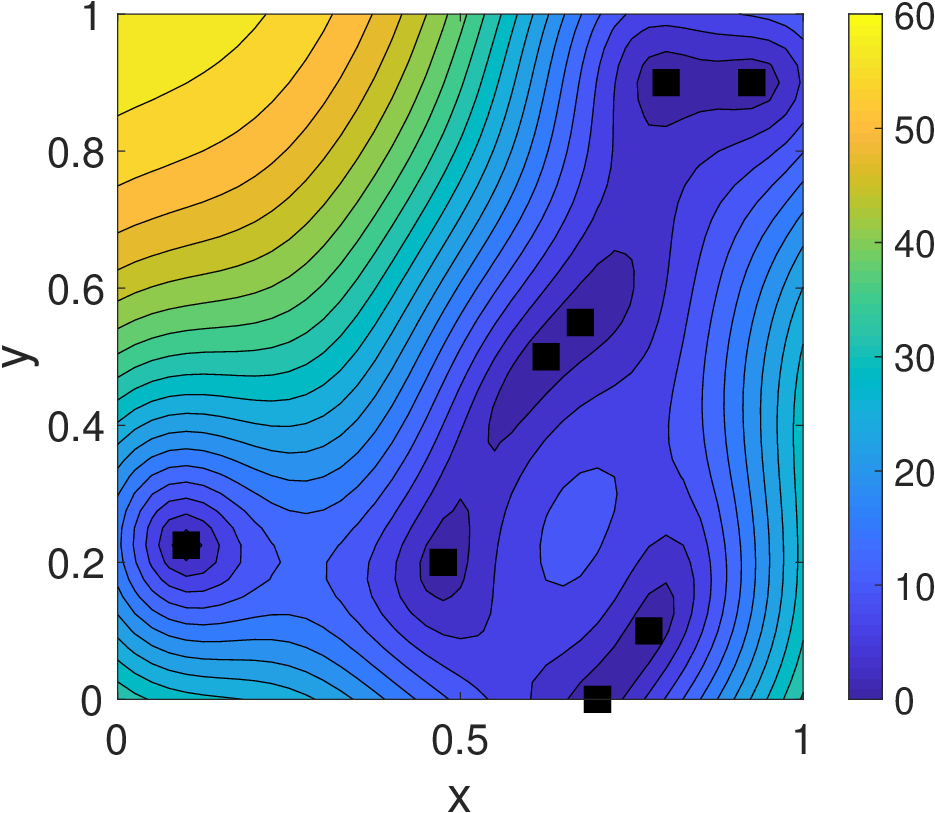}}\quad
\subfigure[$\hy-\ey$]{
\includegraphics[width=0.415\textwidth]{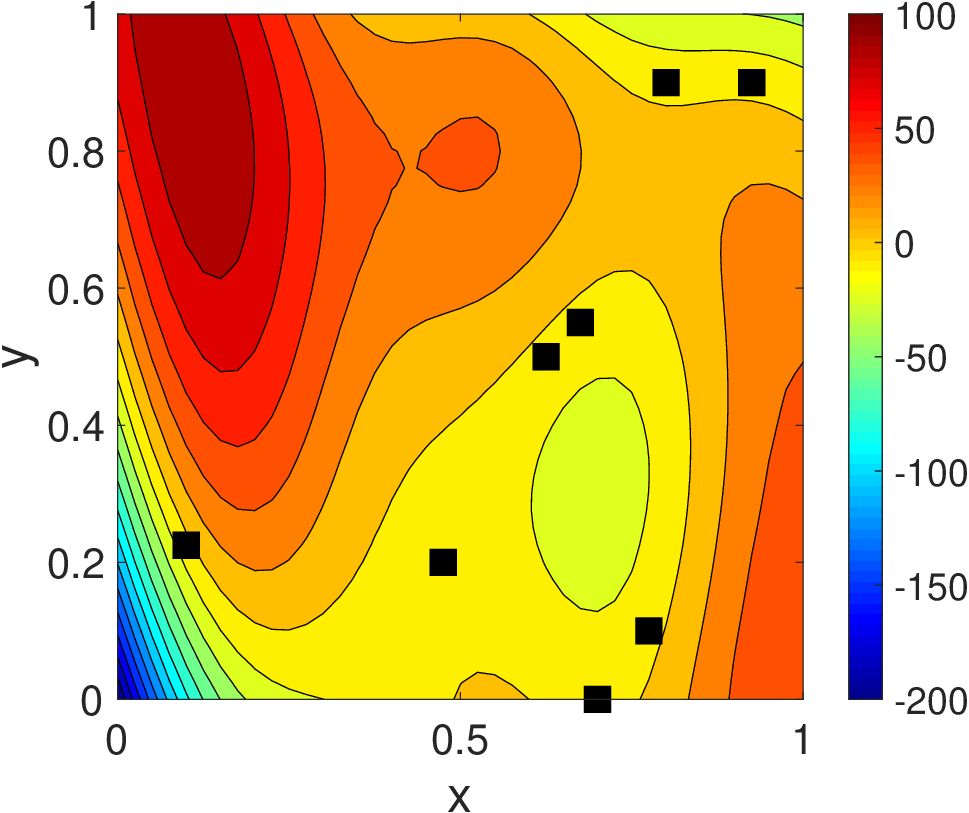}}
\caption{Branin function: reconstruction of the modified Branin function 
         by Kriging: (a) exact field $\ey$; (b) $\hy$; 
         (c) $\hs$; (d) difference $\hy-\ey$.}
\label{fig:branin_krig}
\end{figure}

Next, we assume that based on ``domain knowledge",  $f(x, y)$ is
partially known, e.g., its form is known, but the coefficients $b$ and $q$ are 
unknown. Then, we treat these coefficients as random fields $\hat b$ and $\hat q$,
which indicates that the field $f$ is described by a random function 
$\hat f: D\times\Omega\rightarrow\mathbb{R}$:
\begin{equation}
\label{eq:branin_rf}
  \hat f(x, y;\omega) = a(\bar y-\hat b(x, y;\omega)\bar x^2+c\bar x-r)^2 
      + g(1-p)\cos(\bar x)+ \hat g + \hat q(x,y;\omega)x,
\end{equation}
\begin{multline*}
  \hat b(x,y;\omega) 
    = b\Big\{0.9+\dfrac{0.2}{\pi}\sum_{i=1}^3\Big[ \dfrac{1}{4i-1}\sin((2i-0.5)\pi x)\xi_{2i-1}(\omega)  \\
     + \dfrac{1}{4i+1}\sin((2i+0.5)\pi y)\xi_{2i}(\omega)\Big]\Big\}, 
\end{multline*}
\begin{multline*}
  \hat q(x,y;\omega) 
   = q\Big\{1.0+\dfrac{0.6}{\pi}\sum_{i=1}^3\Big[ \dfrac{1}{4i-3}\cos((2i-1.5)\pi x)\xi_{2i+5}(\omega)  \\
     + \dfrac{1}{4i-1}\cos((2i-0.5)\pi y)\xi_{2i+6}(\omega)\Big]\Big\},
\end{multline*}
and $\{\xi_i(\omega)\}_{i=1}^{12}$ are i.i.d. Gaussian random variables with 
zero mean and unit variance. Further, we allow for the model error by setting 
$\hat g=20$, which is different from $g=10$ in Eq.~\eqref{eq:branin_rf}. We use
this partial knowledge to compute the mean and covariance function of $\hat{f}$
by generating $M=100$ samples of $\xi_i(\omega)$ and evaluating $\hat f$ on the
$41\times 41$ uniform grid for each sample of  $\xi_i(\omega)$. 
Figure~\ref{fig:branin_enkrig}~(a)-(c) present
$\hy$, $\hs$, and $\hy-\ey$. These results are much better than those obtained
by Kriging as both $\hs$ and $|\hy-\ey |$ are much smaller. 
More significantly, the $\hs$ in PhIK is much 
smaller than Kiriging in the $[0, 0.5]\times [0.5, 1]$ subdomain with no 
observations. This is because in PhIK, the covariance matrix is computed
based on standard deviation of the physics-based model. 
Figure~\ref{fig:branin_enkrig}~(d) shows the standard deviation of $\Hy$
denoted as $\smc$.
Note that $\sigma_{_{MC}}$ is a
measure of uncertainty in the physical model $\hat{f}$. 
This plot demonstrates that $\hs$ (which is a measure of uncertainty in PhIK) 
has a similar pattern as $\smc$,
but with smaller magnitude. It indicates that PhIK reduces uncertainty by
conditioning the prediction model $\hat f$ on observations.  
\begin{figure}[!h]
\centering
\subfigure[$\hy$]{
\includegraphics[width=0.4\textwidth]{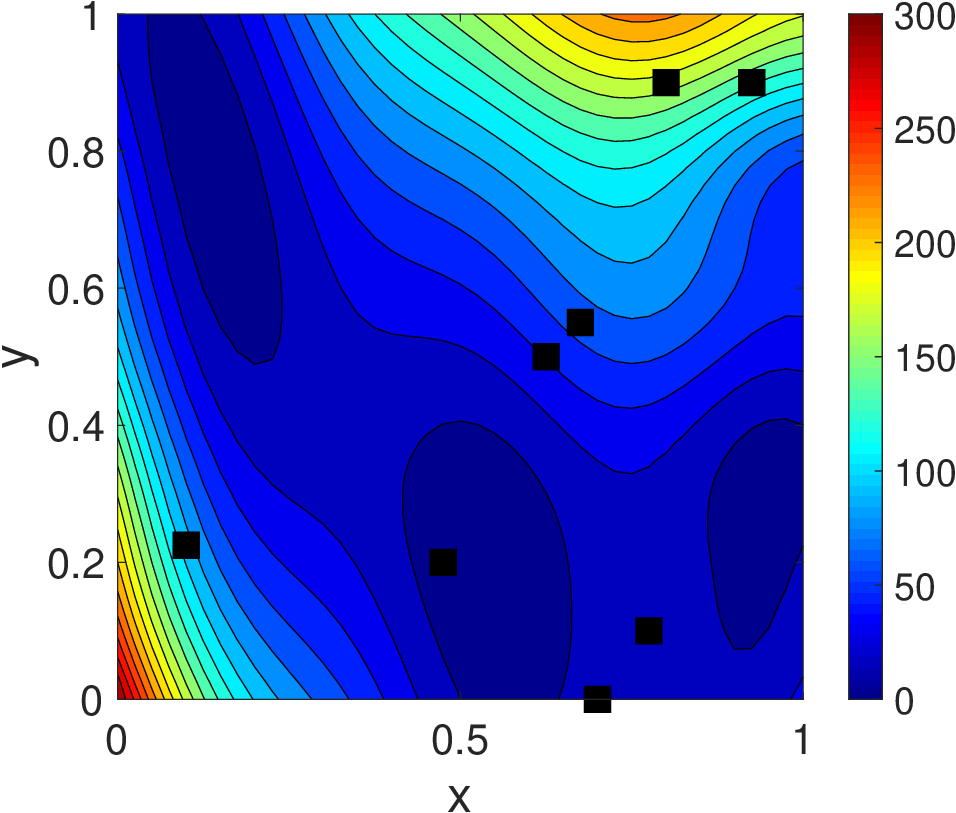}}\quad
\subfigure[$\hs$]{
\includegraphics[width=0.4\textwidth]{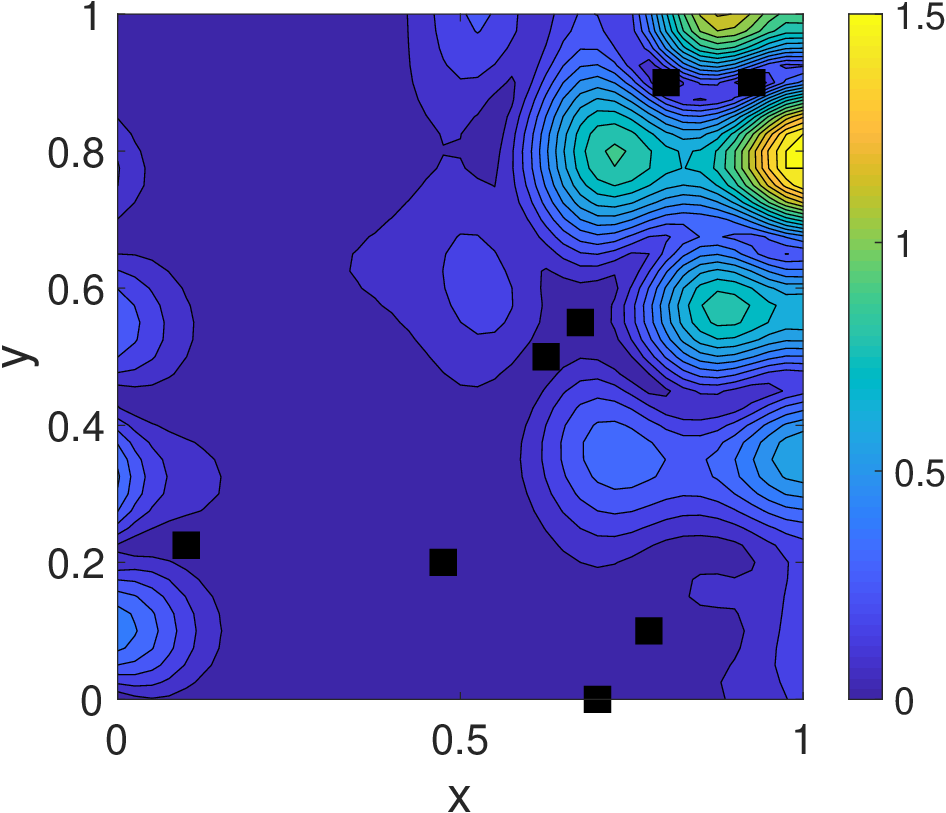}}\\
\subfigure[$\hy-\ey$]{
\includegraphics[width=0.4\textwidth]{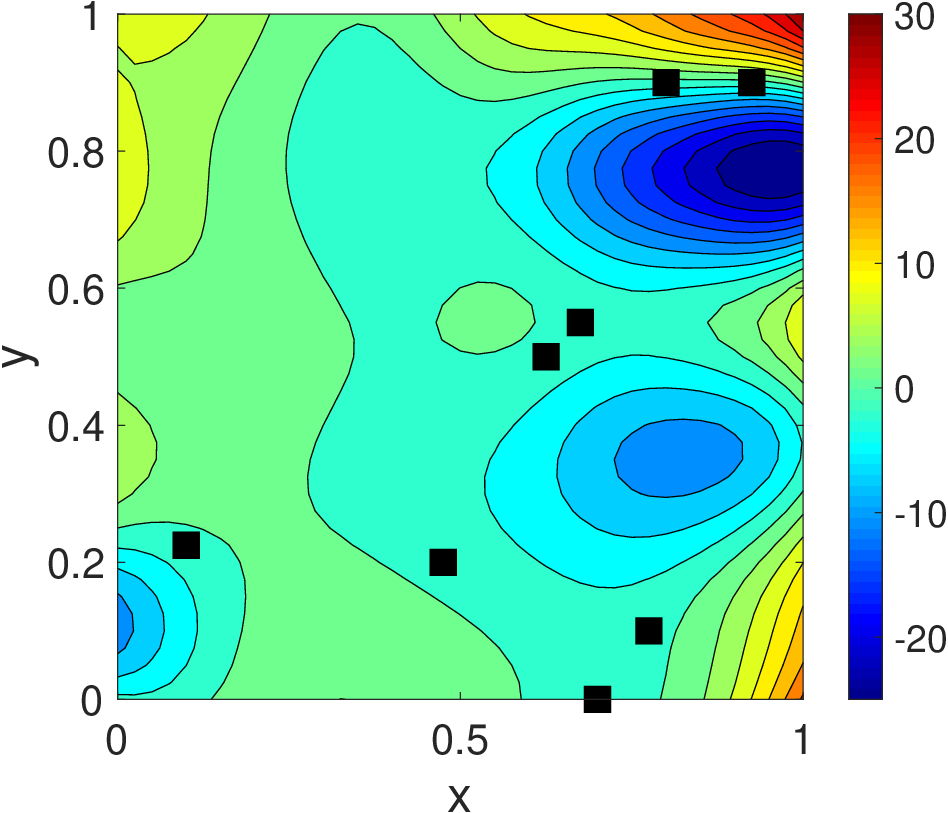}}\quad
\subfigure[empirical std.]{
\includegraphics[width=0.39\textwidth]{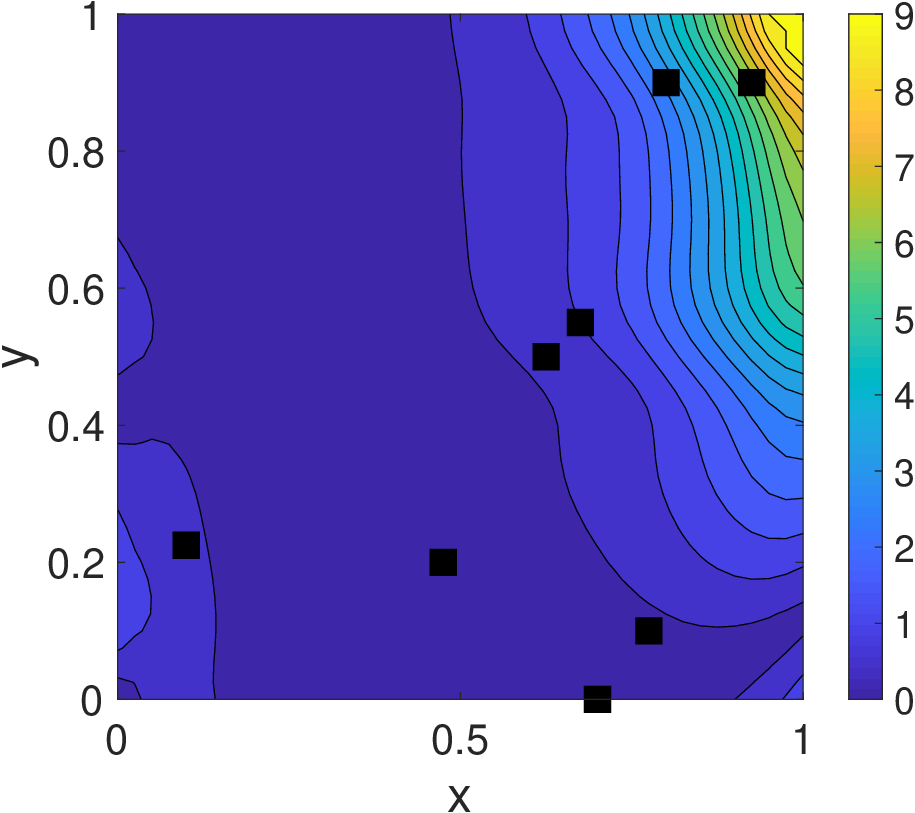}}
\caption{Branin function: reconstruction of the field by PhIK: (a) $\hy$; 
         (b) $\hs$; (c) $\hy-\ey$; 
         (d) standard deviation of $\Hy$.}
\label{fig:branin_enkrig}
\end{figure}

\subsubsection{Active learning}

After obtaining $\hs$, we use Algorithm~\ref{algo:act}
to perform active learning by adding one by one new observations of $f$ at
$(x,y)$ where $\hs$ has maximum. Figure~\ref{fig:branin_act} displays 
locations of additional observations (marked as stars) and results by Kriging
and PhIK when eight new observations are added. In this figure, the first row compares $\hy$ by these two methods, the
second row compares $\hs$, and the last row compares $\hy-\ey$.
As expected, the reconstruction accuracy increases as more observations are 
added. The PhIK still outperforms Kriging, but the difference is less significant
compared with results in Figures~\ref{fig:branin_krig}
and~\ref{fig:branin_enkrig}. Notably, the active learning with Kriging
places new observation points on all four boundaries.
This illustrates that the GPR is more accurate for interpolation than extrapolation.
Because most original observations are within the domain, the results are
extrapolated toward the boundary. In contrast, the active learning 
algorithm with PhIK places most new observation points near the right boundary,
where the $\smc$ is large. This further confirms the influence of the physical model on the PhIK algorithm.

\begin{figure}[!h]
\centering
\subfigure[Krig $\hy$]{
\includegraphics[width=0.4\textwidth]{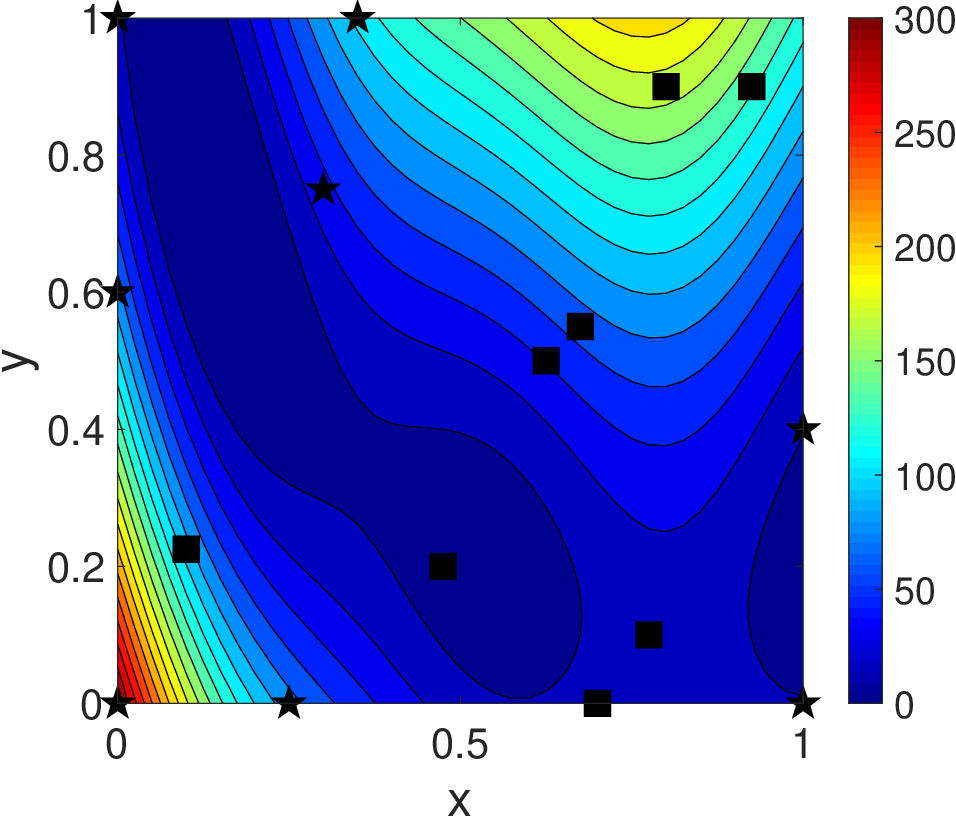}}\quad
\subfigure[PhIK $\hy$]{
\includegraphics[width=0.4\textwidth]{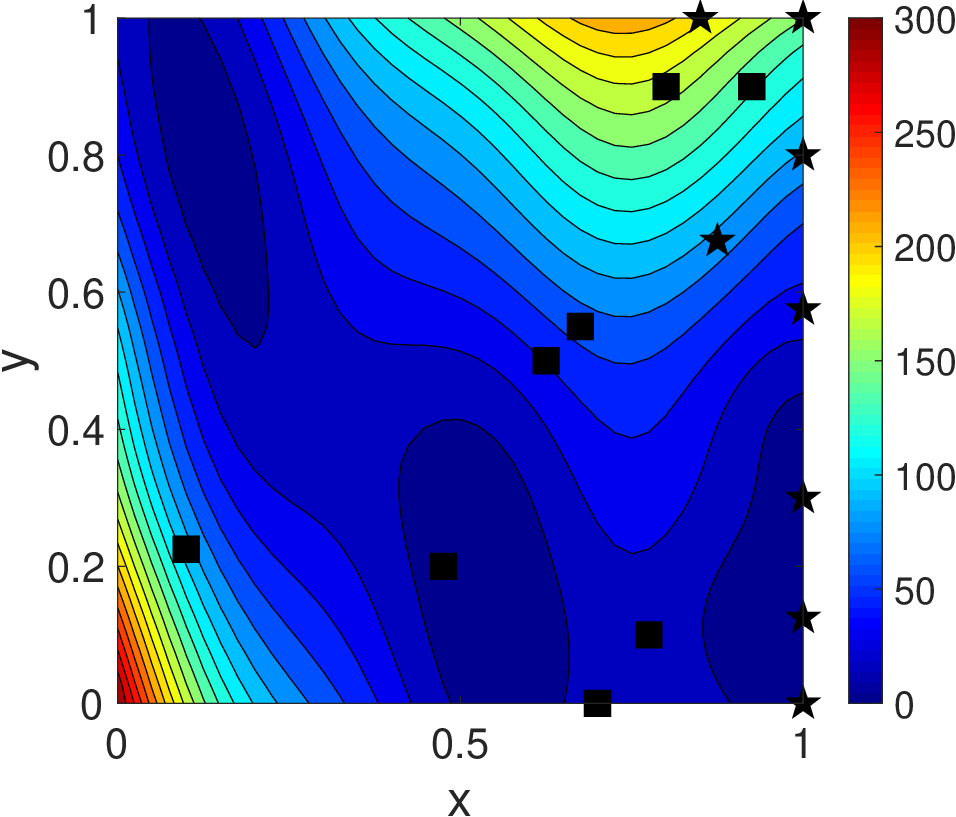}}  \\
\subfigure[Krig $\hs$]{
\includegraphics[width=0.4\textwidth]{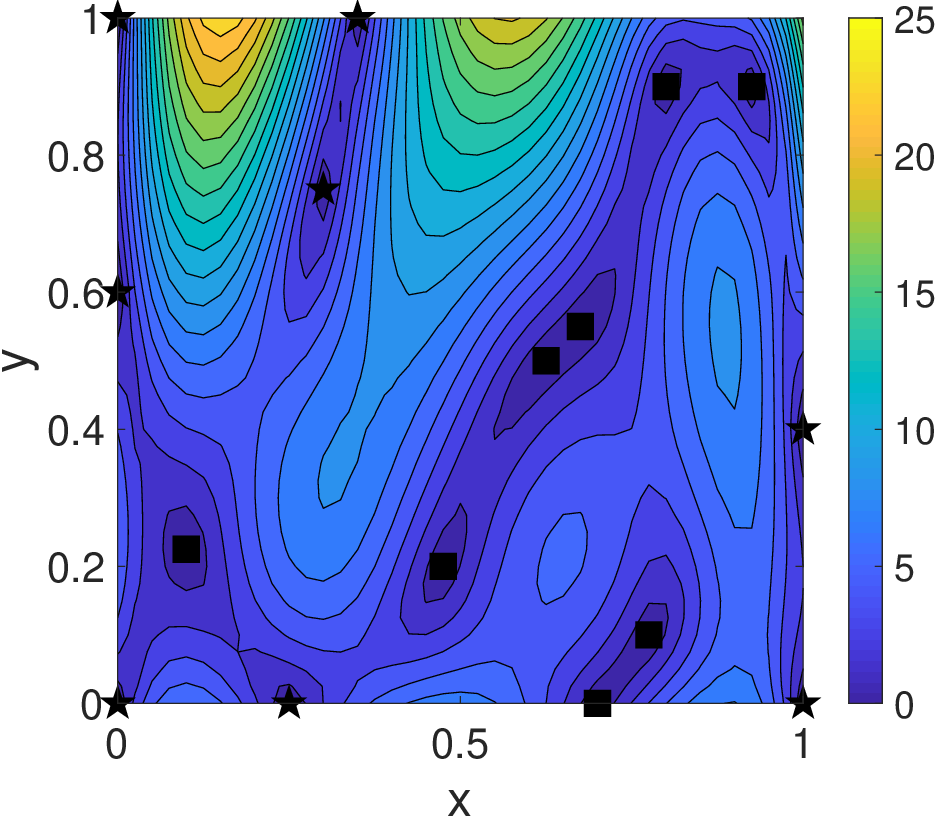}}\quad
\subfigure[PhIK $\hs$]{
\includegraphics[width=0.425\textwidth]{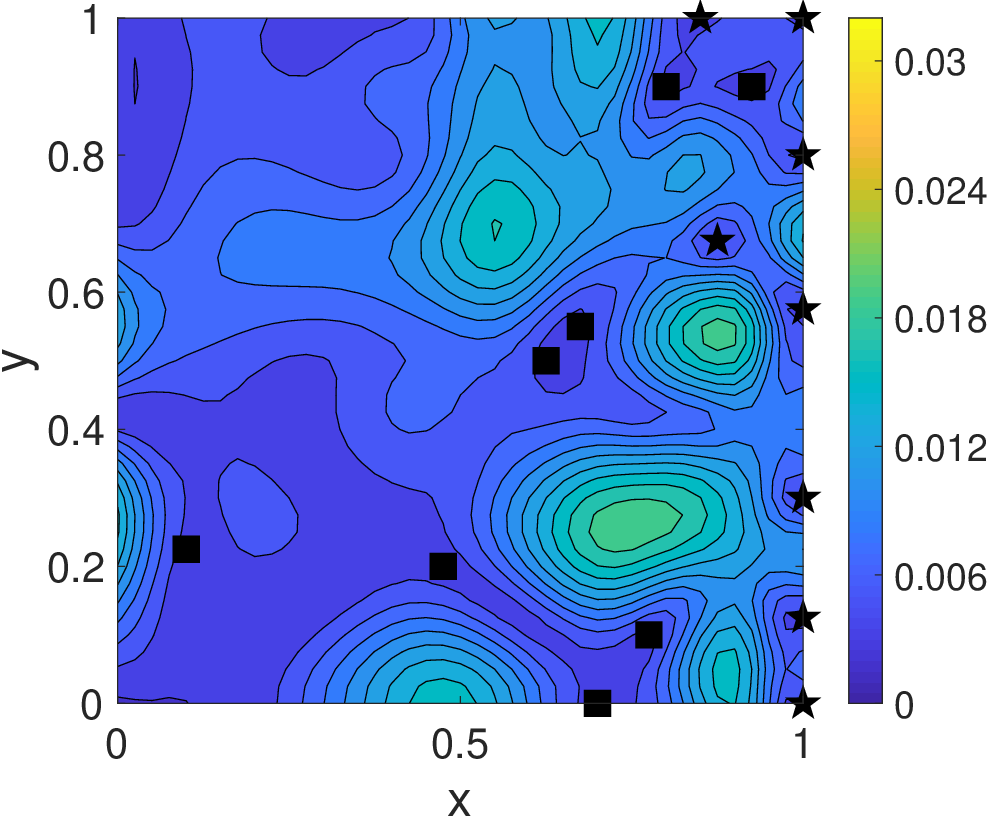}}\\
\subfigure[Krig $\hy-\ey$]{
\includegraphics[width=0.4\textwidth]{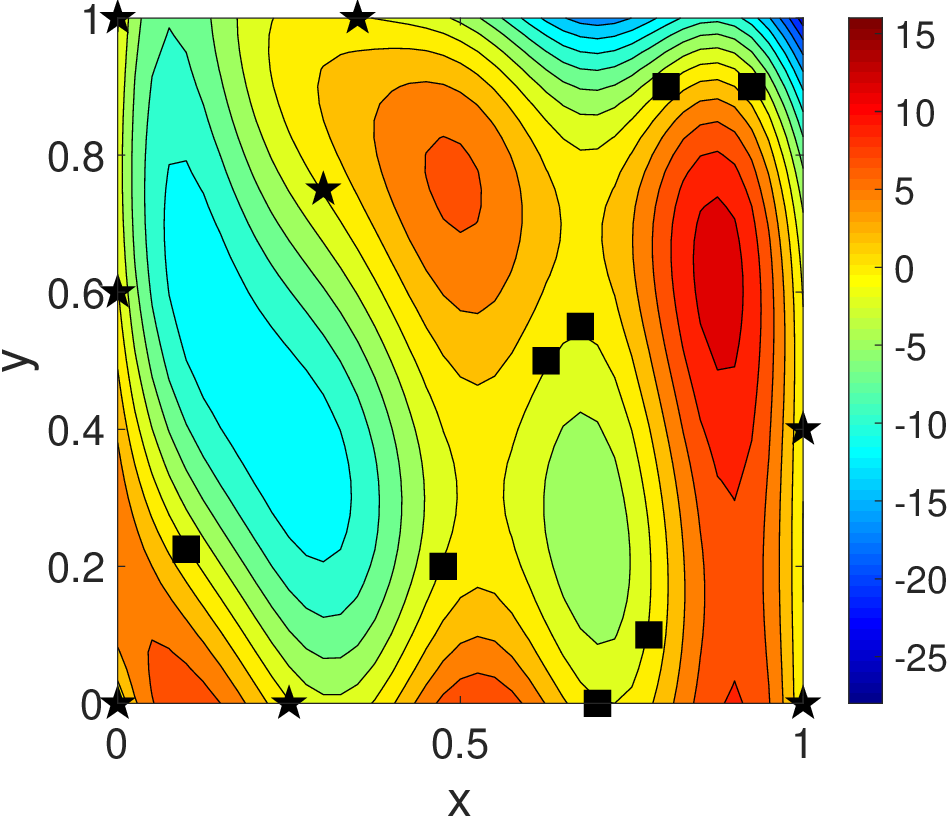}}\quad
\subfigure[PhIK $\hy-\ey$]{
\includegraphics[width=0.39\textwidth]{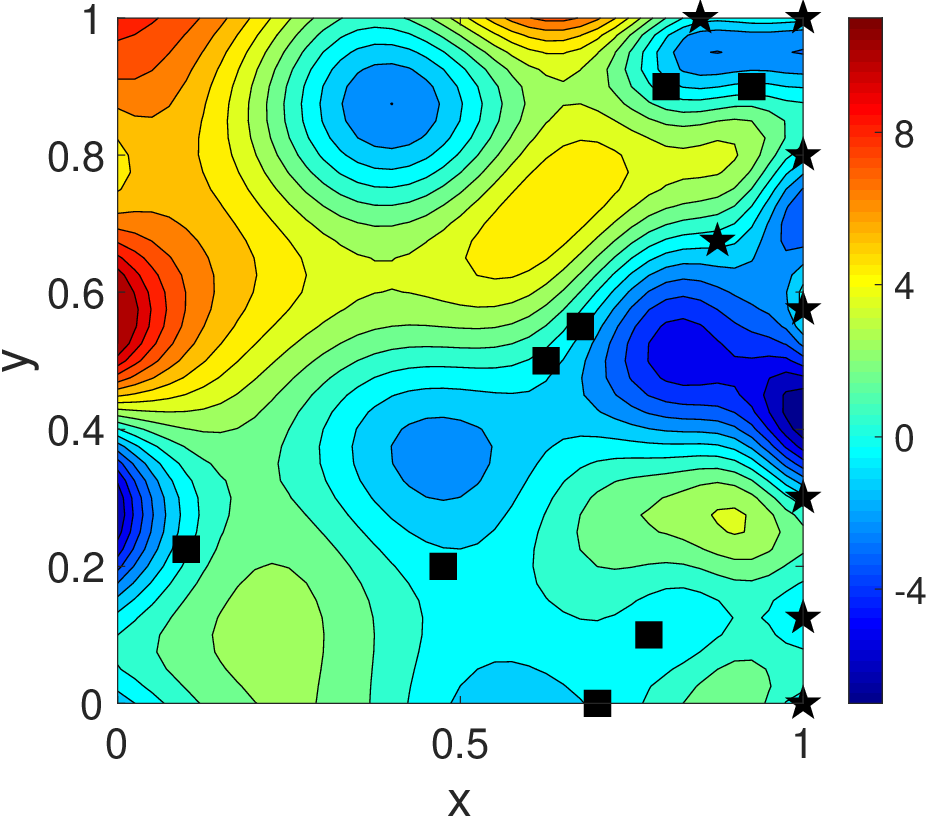}}
\caption{Branin function: reconstruction of the field by PhIK via active
  learning. Black squares mark the locations of the original eight observations, 
  and stars are newly added observations. First row: $\hy$; second row: $\hs$; 
  third row: $\hy-\ey$.}
\label{fig:branin_act}
\end{figure}

Figure~\ref{fig:branin_comp_l2} shows the relative error
$\Vert\hy-\ey\Vert_2/\Vert \ey \Vert_2$
in Kriging and PhIK as a function of the observation numbers,
where the first eight are the ``original" observations and the rest are added
according to the active learning algorithm. With the original eight observations,
the PhIK result (about $8\%$ error) is much better than the Kriging (more than 
$50\%$ error). As more observations are added by the active learning algorithm, 
the error of Kriging decreases 
quickly to nearly $1\%$ (24 observations).
The error of PhIK reduces from $8\%$ to $4\%$ (10 observations), 
then it slowly reduces to $3\%$ (24 observations).
According to the discussion in Section~\ref{sec:act}, this is because
Kriging introduces a new basis function $k(\bm x, \bm x^{(N+1)})$ when a new
observation at location $\bm x^{(N+1)}$ is available 
and the Gaussian kernel is suitable for approximating smooth functions like $f$. 
On the other hand, using $12$ observations is sufficient to identify an accurate
approximation of the exact field in the space spanned by $\Hy$, and adding few
observations doesn't result in significant
change. This is why the convergence is very slow with more than $12$ observations. 
We also note that if we use empirical mean only to approximate
$\ey$, the relative error is $18\%$. This indicates that observation data are
incorporated effectively to improve the prediction accuracy.

\begin{figure}[!h] \centering
  \includegraphics[width=0.4\textwidth]{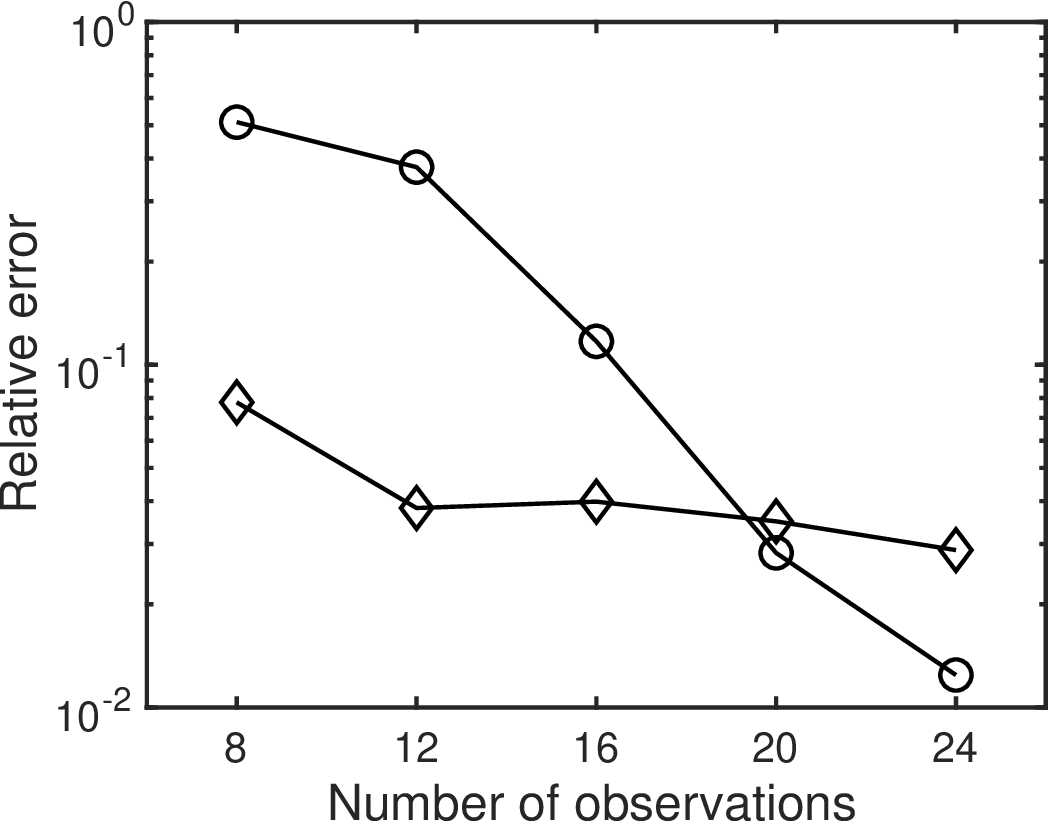}
  \caption{Branin function: relative error of the reconstructed field $\Vert\hy
  -\ey\Vert_2/\Vert\ey\Vert_2$ using Kriging (``$\circ$") and PhIK
  (``$\diamond$") with different numbers of total observations via active
learning.} 
\label{fig:branin_comp_l2} 
\end{figure}

\subsection{Heat transfer in a hallow sphere}
In the second example, we consider a heat transfer problem in a hallow sphere
$D = B_4(0) - B_2(0)$, where $B_r(0)$ is a ball with radius $r$ centered at $0$.
The governing equation is
\begin{equation}
  \label{eq:heat}
  \left\{
  \begin{aligned}
    & \dfrac{\partial u(\bm x,t)}{\partial t} - \nabla\cdot(\kappa \nabla u(\bm x,t)) = 0, \quad \bm x\in D, \\
    & \kappa\dfrac{\partial u(\bm x, t)}{\partial \bm n} =
      \theta^2(\pi-\theta)^2\phi^2(\pi-\phi)^2 , \quad  \text{if}~\Vert \bm
    x\Vert_2=4~\text{and}~\phi\geq 0, \\
    & \kappa\dfrac{\partial u(\bm x, t)}{\partial \bm n} = 0, \quad \text{if}~\Vert\bm
    x\Vert_2=4~\text{and}~\phi<0, \\
    & u(\bm x, t) = 0, \quad \text{if}~\Vert \bm x \Vert_2=2,
  \end{aligned}
  \right .
\end{equation}
where $\bm n$ is the outward-pointing normal, $\theta$ and $\phi$ are azimuthal
and elevation angles of points in the sphere. The initial condition is $0$ for
all points. The exact heat conductivity is $\kappa=1.0+\exp(0.05u)$. We solve
this equation using MATLAB's PDE Tool Box~\cite{matlabpde}. Specifically, we use
quadratic element with DOF being 12,854, and set $y(\bm x)=u(\bm x, 10)$, which
is shown in Figure~\ref{fig:heat_truth}. 
\begin{figure}[thbp]
  \centering
    \includegraphics[width=0.25\textwidth, trim=150 100 190 40, clip]{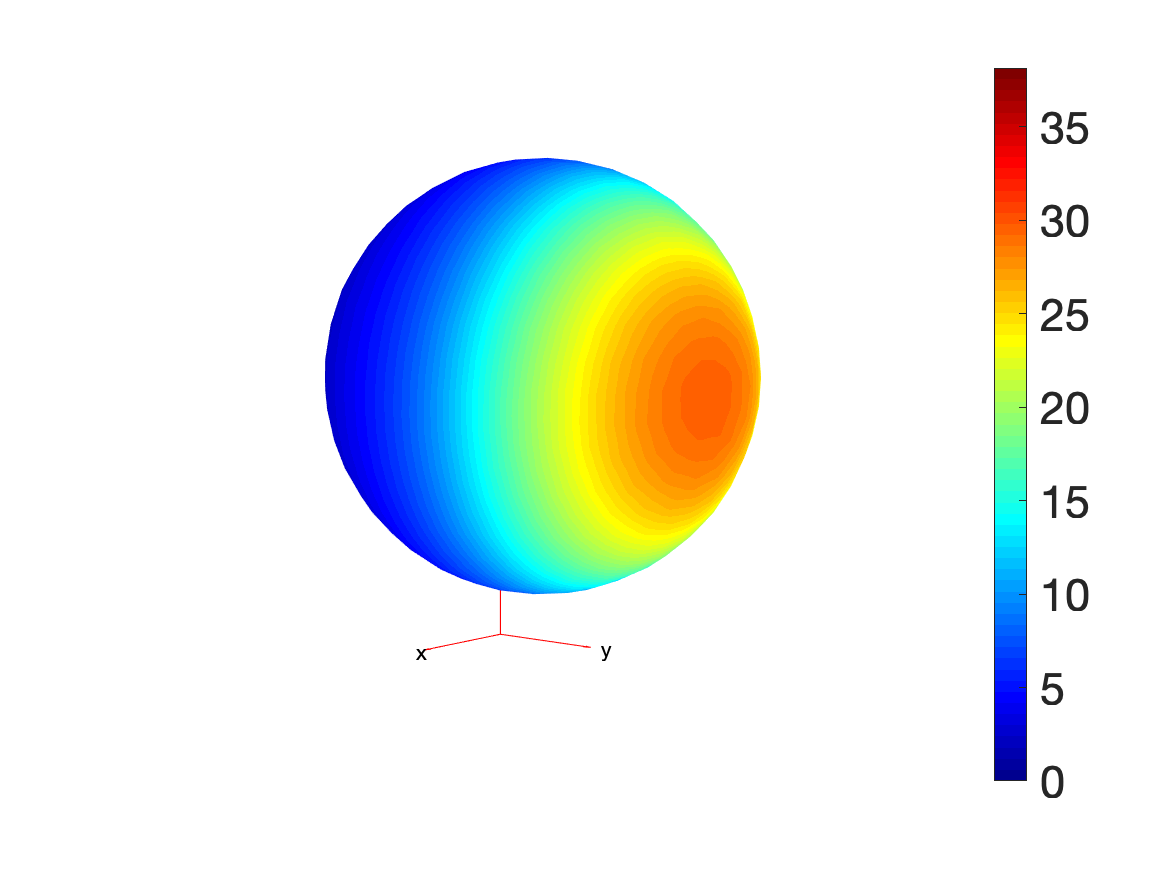}\quad
    \includegraphics[width=0.09\textwidth, trim=410 30 35 30, clip]{./figures/heat_exact_high}
  \caption{Heat transfer: ground truth $\ey$.} 
  \label{fig:heat_truth}
\end{figure}
\subsubsection{Field reconstruction}
We assume that accurate
observation data are available at six locations: 
$\bm X=\{(3,0,0), (-3,0,0), (0, 3, 0), (0, -3, 0), (0,0,3), (0,0,-3)\}$,
which enables us to use Kriging to reconstruct $\ey$.
Figure~\ref{fig:heat_mean}~(a) present the $\hy$ of Kriging, which
significantly deviates from $\ey$. Figure~\ref{fig:heat_std}~(a) shows the $\hs$
of Kriging, which is close to $2$ at many points.
Next, we assume that the exact $\kappa(u)$ is not known, and a stochastic model is
proposed as $\kappa(u;\omega)=2.0+u\xi(\omega)$, where 
$\xi\sim\mathcal{U}[0.2,0.4]$. We generate $M=100$ samples of $\xi$ and solve 
the heat equation $100$ times to obtain corresponding simulation results $\Hy$. 
Figure~\ref{fig:heat_mean}~(b) shows that the empirical mean of $\Hy$ deviate
significantly from $\ey$, the relative error is $17\%$.
Figure~\ref{fig:heat_std}~(b) illustrates the standard deviation
$\sigma_{MC}(\bm x)$ of $\Hy$ at each point. It demonstrates that
$\sigma_{MC}(\bm x)$ is large in the region with higher temperature, which is
consistent with the setup of $\kappa(u;\xi(\omega))$. Then, the PhIK results are
shown in Figure~\ref{fig:heat_mean}~(c) and Figure~\ref{fig:heat_std}~(c).
PhIK's $\hy$ matches the reference solution $\ey$ quite well and its $\hs$ is
rather small.
\begin{figure}[thbp]
  \centering
    \subfigure[Kriging $\hy$]{
    \includegraphics[width=0.25\textwidth, trim=150 100 160 40, clip]{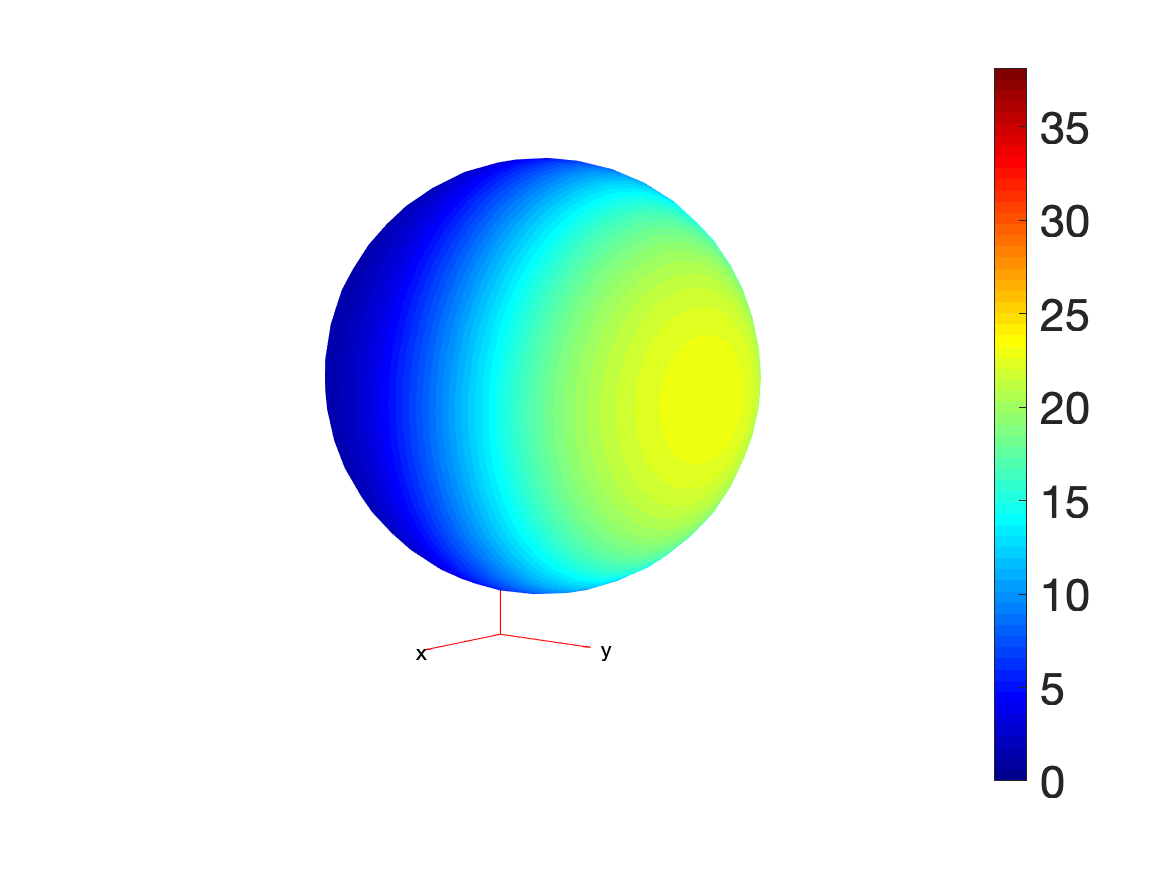}}\quad
  \subfigure[MC empirical mean]{
    \includegraphics[width=0.25\textwidth, trim=150 100 160 40, clip]{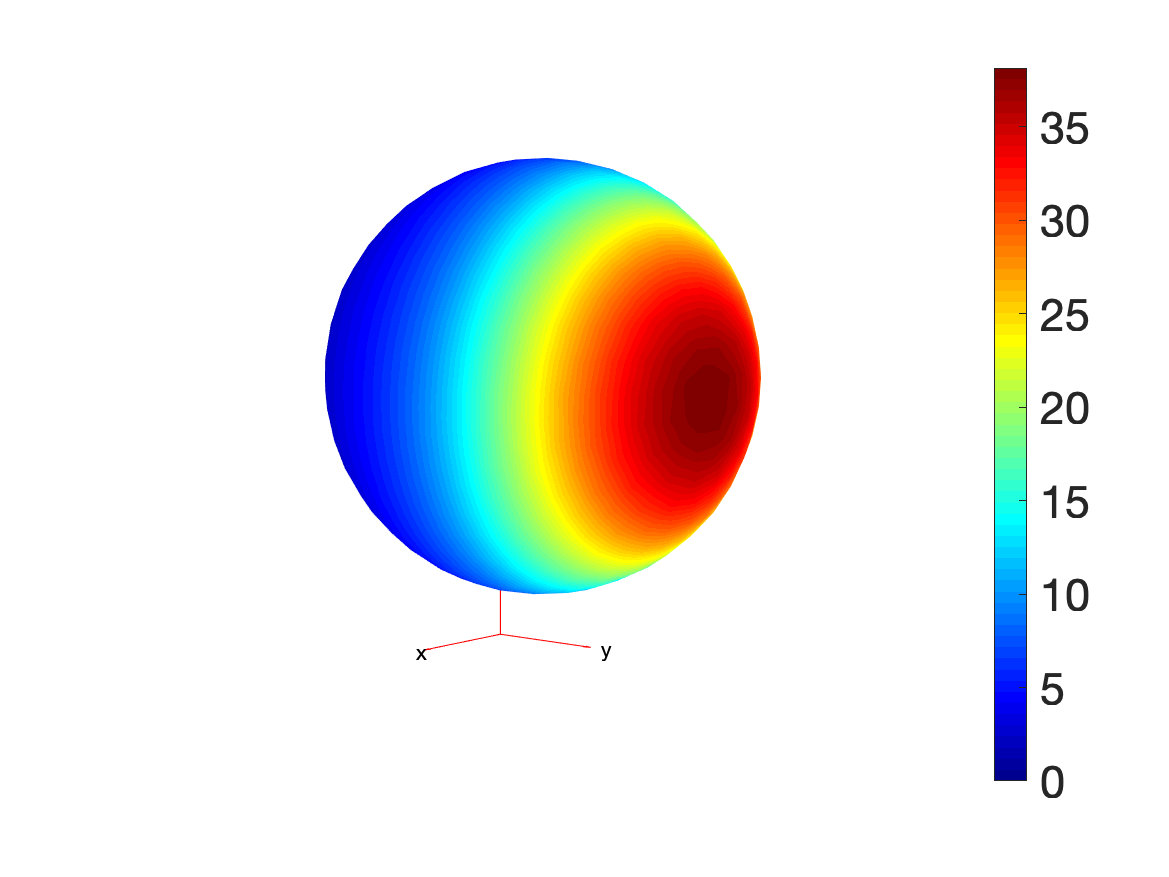}}\quad
  \subfigure[PhIK $\hy$]{
    \includegraphics[width=0.25\textwidth, trim=150 100 160 40, clip]{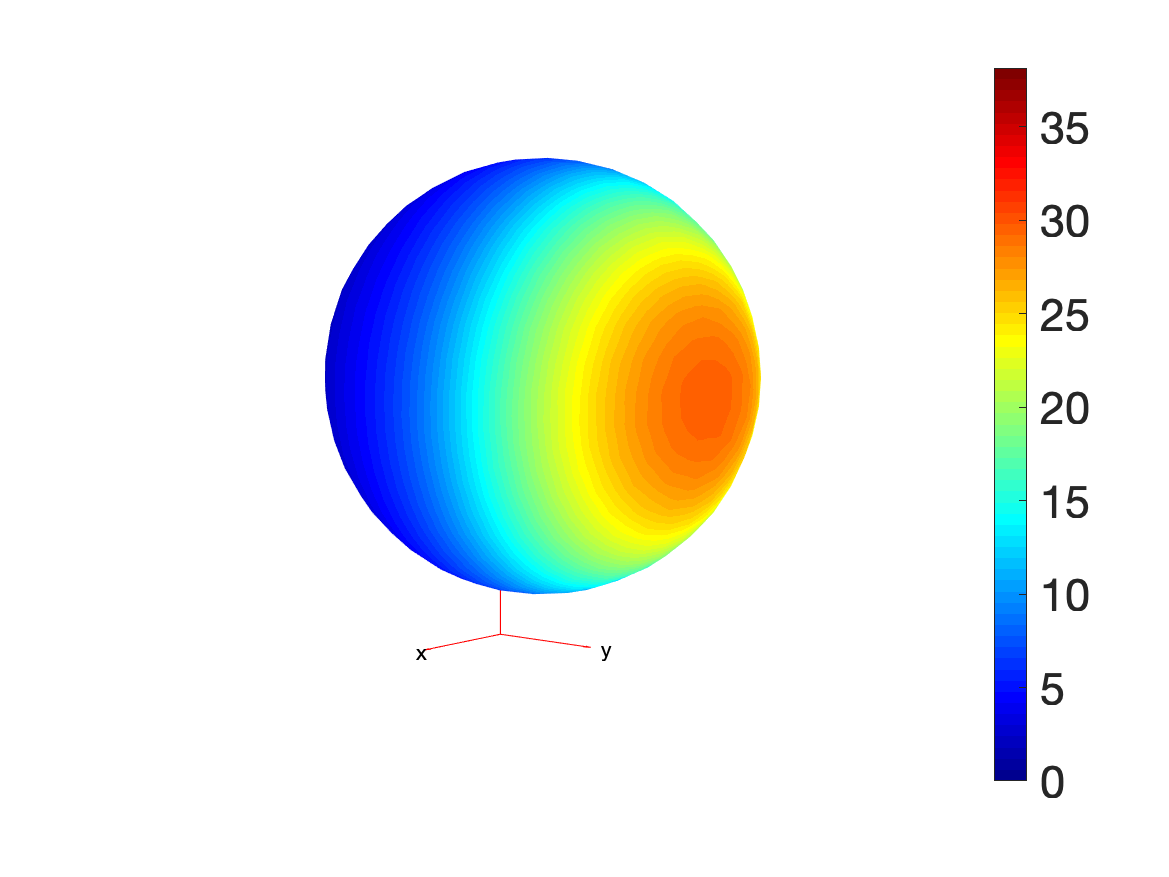}}
    \includegraphics[width=0.08\textwidth, trim=420 30 30 30, clip]{./figures/heat_phik_6obs_high}
  \caption{Heat transfer: mean values of different methods.}
  \label{fig:heat_mean}
\end{figure}

\begin{figure}[thbp]
  \centering
  \subfigure[Kriging]{
    \includegraphics[width=0.3\textwidth, trim=150 70 10 0, clip]{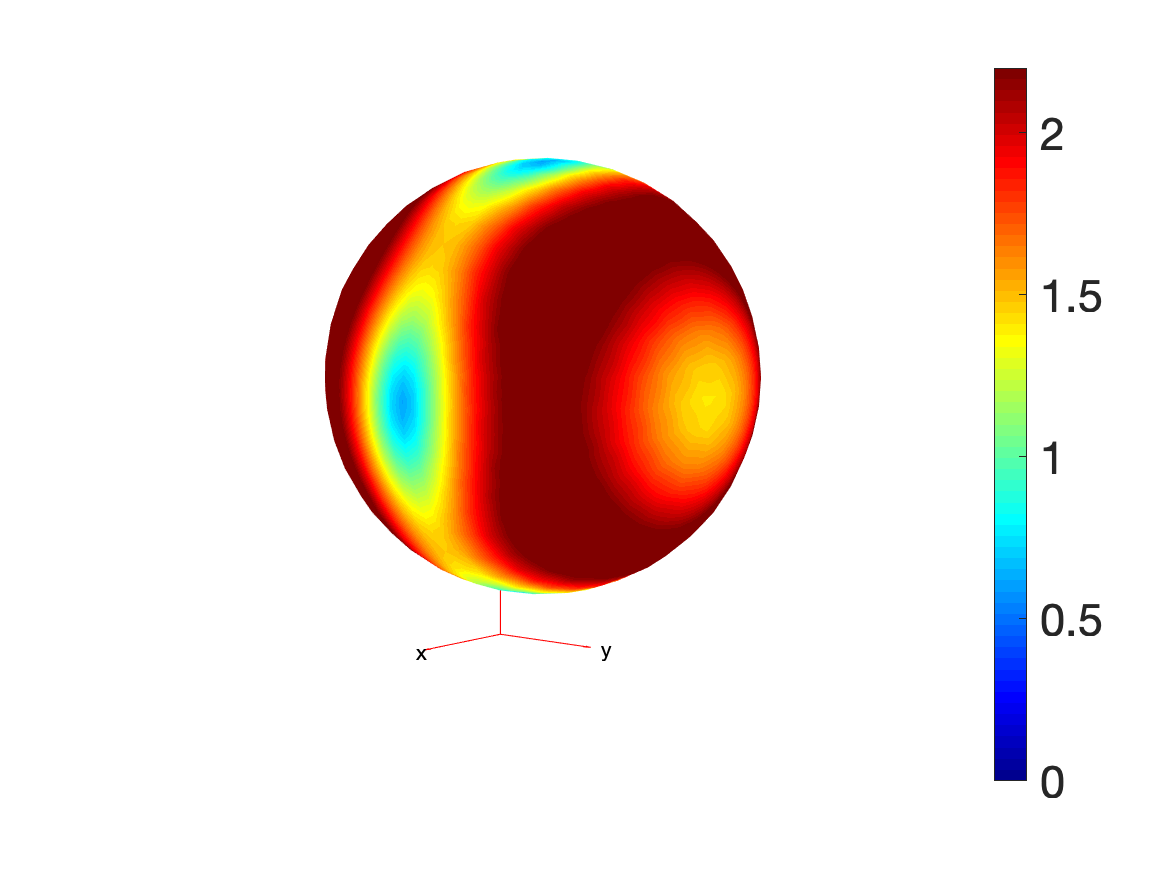}}\quad
  \subfigure[MC]{
    \includegraphics[width=0.3\textwidth, trim=150 70 10 0, clip]{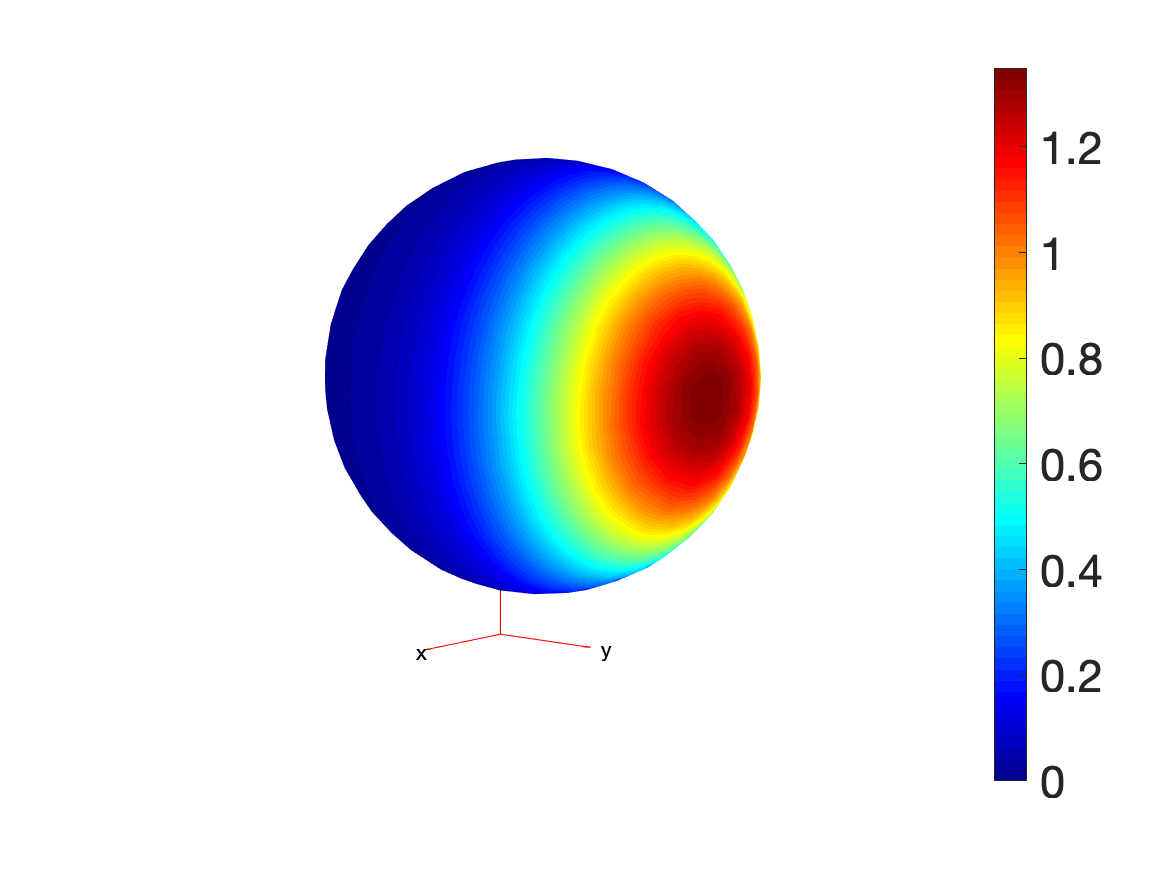}}\quad
  \subfigure[PhIK]{
    \includegraphics[width=0.3\textwidth, trim=150 70 10 0, clip]{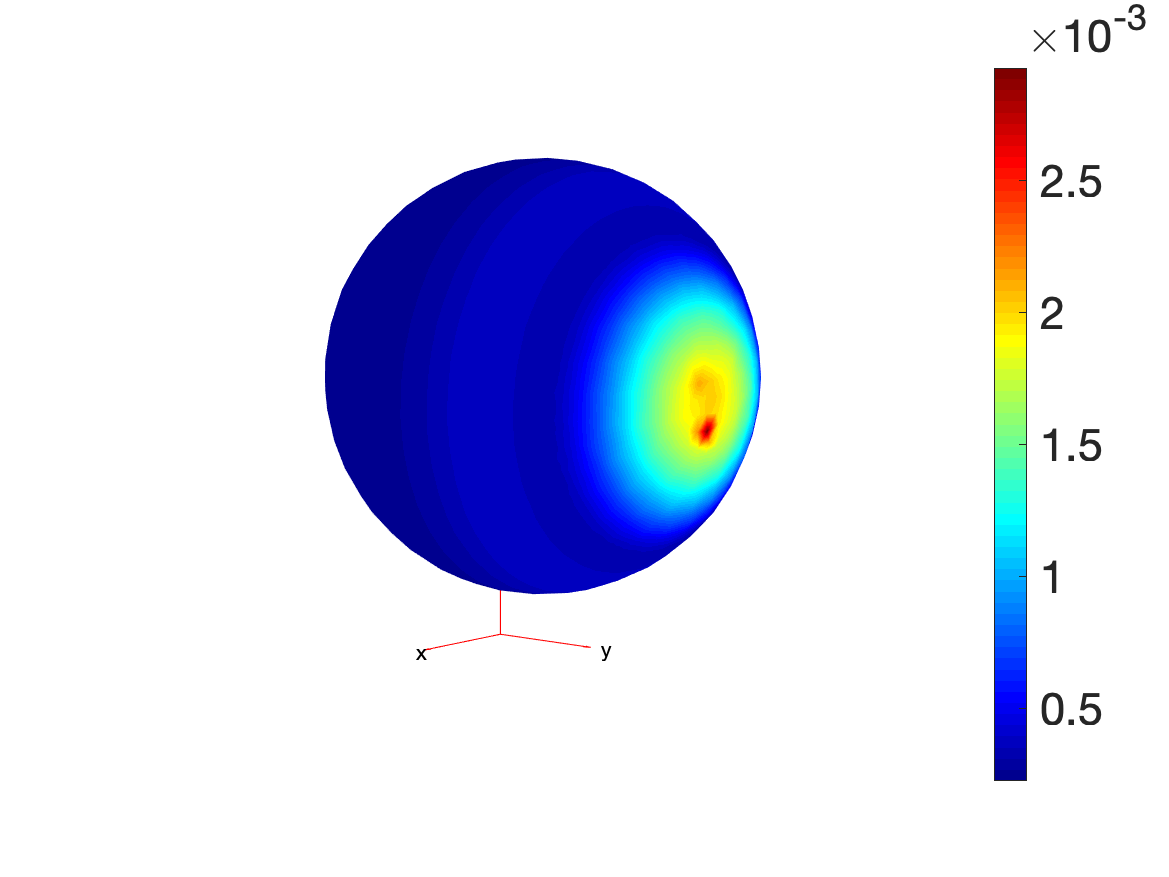}}
  \caption{Heat transfer: standard deviations of different methods.}
  \label{fig:heat_std}
\end{figure}

\subsubsection{Active learning}
Moreover, we use the active learning algorithm to add new observation data one
by one. The location of the six original locations are shown in
Figure~\ref{fig:heat_obs_loc}~(a). The first six new locations identified by
active learning with Kriging is shown in Figure~\ref{fig:heat_obs_loc}~(b)
marked as stars. These stars are actually near the boundary of domain $D$. On
the other hand, the new locations identified by active learning with PhIK,
illustrated in Figure~\ref{fig:heat_obs_loc}~(c) are
located in the region with large $\sigma_{MC}$ as shown in
Figure~\ref{fig:heat_std}~(b).
\begin{figure}[thbp]
  \centering
  \subfigure[Initial]{
    \includegraphics[ width=0.25\textwidth, trim=150 100 150 60, clip]{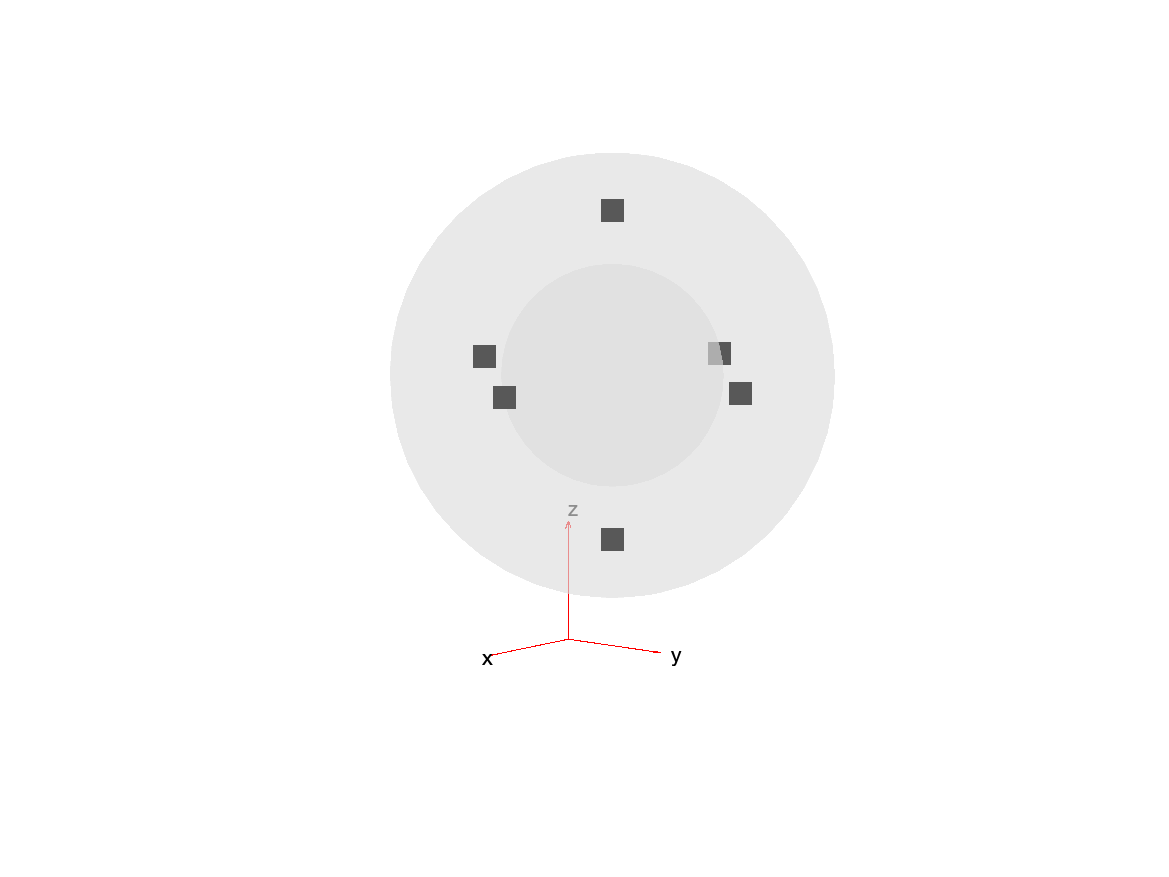}}\quad
  \subfigure[Krig]{
    \includegraphics[ width=0.25\textwidth, trim=150 100 150 60, clip]{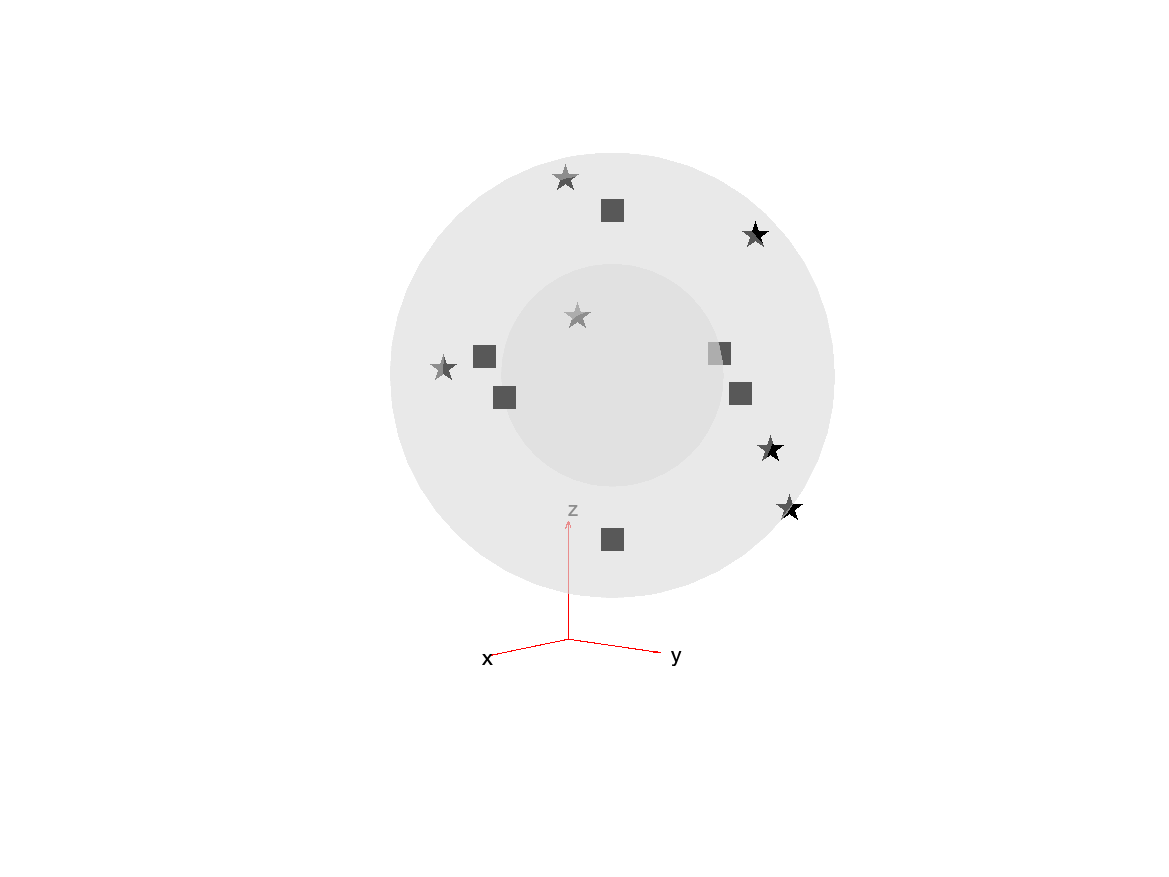}}\quad
  \subfigure[PhIK]{
    \includegraphics[ width=0.25\textwidth, trim=150 100 150 60, clip]{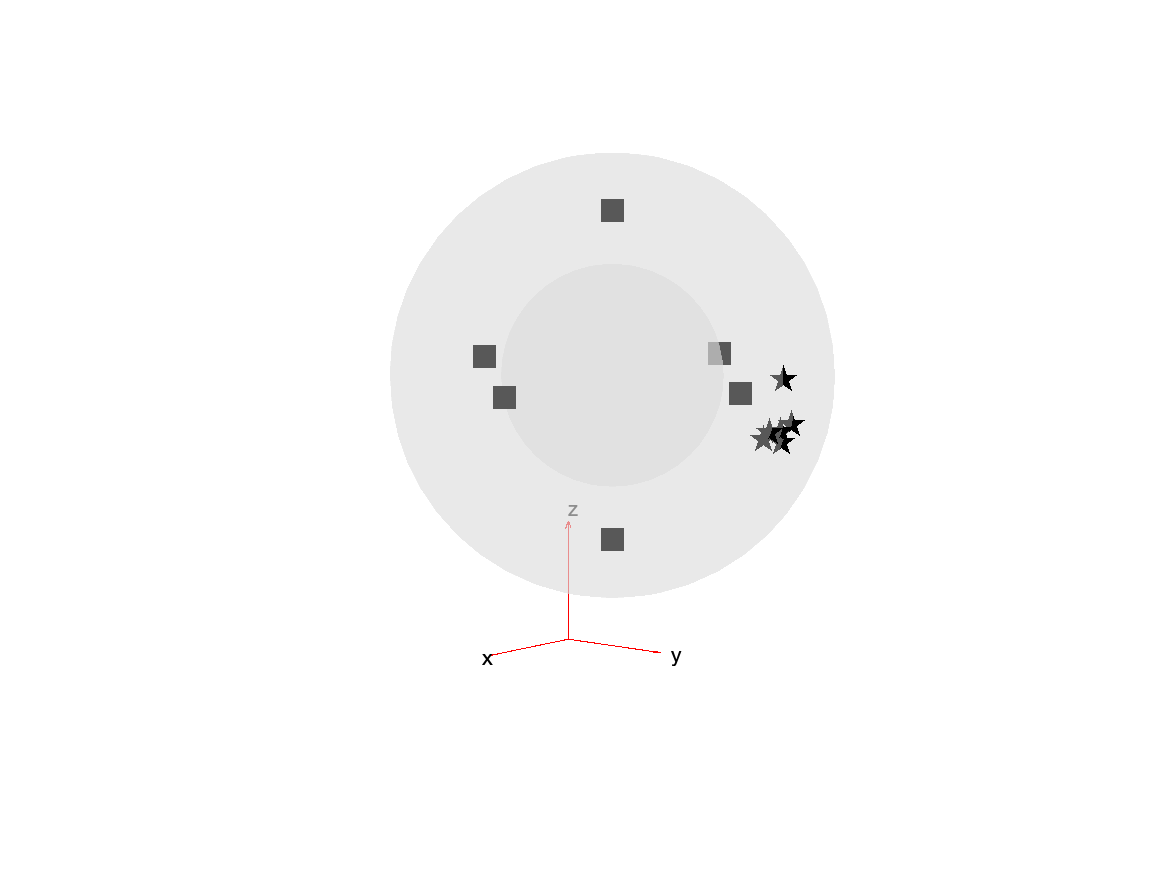}}
  \caption{Heat transfer: additional observation locations identified via
  active learning.}
  \label{fig:heat_obs_loc}
\end{figure}

Finally, we keep adding new observations and present the change of relative
error in Figure~\ref{fig:heat_comp_l2}. For this example, PhIK dramatically
outperforms Kriging, as the accuracy of the former is three orders of magnitude
better than the latter. Also, in both methods, the relative errors reduces as
more observation data are available.
\begin{figure}[thbp]
\centering
\includegraphics[width=0.4\textwidth]{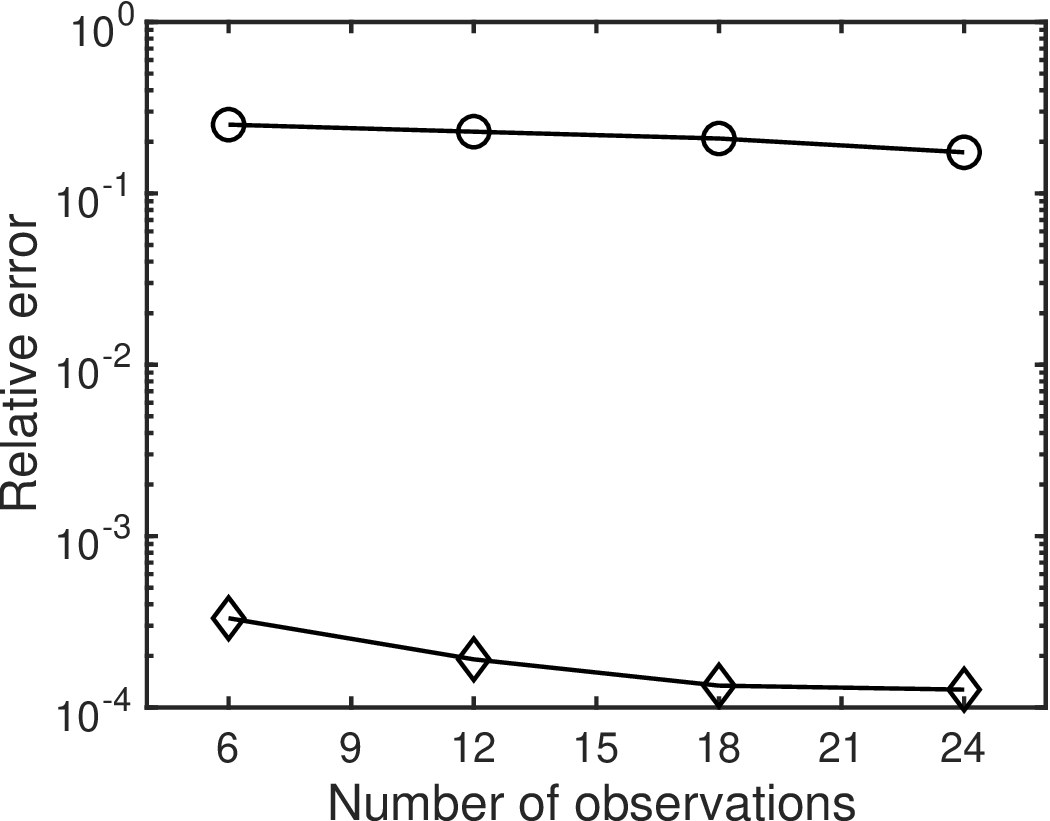}
\caption{Heat transfer: relative error of reconstructed field 
$\Vert\hy-\ey\Vert_2/\Vert\ey\Vert_2$ using Kriging 
  (``$\circ$") and PhIK (``$\diamond$") with different numbers of total observations via 
  active learning.}
\label{fig:heat_comp_l2}
\end{figure}
Moreover, as an example of preserving linear constraint, the Dirichlet boundary
condition is preserved quite well in $\hy$ as the maximum of $\hy$ at grids
located on the inner sphere is less than $0.01$.

\subsection{Solute transport in heterogeneous porous media}

In the third example, we consider steady-state flow, and advection and
dispersion of conservative tracer with concentration $C_e(\bm x,t)$ in a
heterogeneous porous medium with known initial and boundary conditions and the
unknown  hydraulic conductivity $K(\bm x)$. We assume that measurements of
$C_e(\bm x,t)$ are available at several locations at different times. The flow
and transport in porous media can be described by conservation laws, including a
combination of the continuity equation and Darcy law:
\begin{equation} \label{eq:head} \begin{cases} \nabla\cdot[K(\bm x;\omega)\nabla
  h(\bm x;\omega)]=0, \qquad \bm x  \in  D, \\ \dfrac{\partial h(\bm
  x;\omega)}{\partial \bm n} = 0, \qquad x_2=0 \quad\text{or}\quad x_2=L_2, \\
  h(x_1=0,x_2;\omega) = H_1 \quad \text{and} \quad h(x_1=L_1,x_2;\omega) = H_2,
\end{cases} \end{equation}
where $D=[0,L_1]\times [0, L_2]=[0,256]\times [0, 128]$, the unknown
conductivity is modeled as the random log-normally distributed field $K(\bm
x;\omega) = \exp(Z(\bm x; \omega))$ with the known exponential covariance
function $\cov\{Z(\bm x),Z(\bm x')\}= \sigma^2_Z\exp(-|\bm x - \bm x'|/l_z)$
with the variance $ \sigma^2_Z=2$, correlation length $l_z=5$, 
$h(\bm x;\omega)$ is the hydraulic head, and $\omega\in\Omega$. The solute transport is governed by the advection-dispersion equation \cite{emmanuel2005mixing, lin2009efficient}:
\begin{equation}
\label{eq:con}
\begin{cases}
  \begin{aligned}
    \dfrac{\partial C(\bm x,t;\omega)}{\partial t} + & \nabla\cdot(\bm v(\bm x;\omega) C(\bm x,t;\omega))  =  \\
 &\nabla\cdot\left[\left(\dfrac{D_w}{\tau}+ \boldsymbol{\alpha} \Vert\bm v(\bm x;\omega)\Vert_2\right)\nabla C(\bm x,t;\omega)\right], 
    \qquad \bm x~\text{in}~D, \\
\end{aligned} \\
 C(\bm x,t=0;\omega) = \delta(\bm x-\bm x^*), \\ 
 \dfrac{\partial C(\bm x;\omega)}{\partial\bm n} = 0, \qquad \quad x_2=0
 \quad\text{or}\quad x_2=L_2\quad\text{or}\quad x_1=L_1, \\  
  C(x_1 = 0, x_2;\omega) = 0,  
\end{cases}
\end{equation}
where $C(\bm x,t;\omega)$ is the solute concentration defined on 
$D\times [0, T]\times\Omega$, the solute is instantaneously injected at 
$\bm x^*=(50, 64)$, $\bm v(\bm x;\omega)=-K(\bm x;\omega)\nabla h(\bm x;\omega)/\phi$ 
is the average pore velocity, $\phi$ is the porosity, $D_w$ is the diffusion 
coefficient, $\tau$ is the tortuosity, and $\boldsymbol{\alpha}$ is the 
dispersivity tensor with the diagonal components $\alpha_L$ and $\alpha_T$.
In the present work, the transport parameters are set to $\phi=0.317$, 
$\tau=\phi^{1/3}$, $D_w=  2.5 \times 10^{-5}$ m$^2$/s, $\alpha_L=5$ m, and $\alpha_T=0.5$ m. 

We generate $M=1000$ realizations of $Z(\bm x)$ using the SGSIM (sequential 
Gaussian simulation) code \cite{deutsch1992gslib} and solve the governing 
equations for each realization of $K(\bm x)=\exp(Z(\bm x))$ using the finite 
volume code STOMP (subsurface transport over multiple phases) \cite{white2006stomp}
with the mesh size $1$m $\times$ $1$m, i.e., $256\times 128$
uniform grids. 
Here, we study the results at $t=8$ days. The reference solution is generated 
as one of the $1000$ solutions of the governing equations and is shown in 
Figure~\ref{fig:adv_truth} with observation locations. We assume that six 
uniformly spaced observations are available near the domain boundary, and nine 
randomly placed observations are given within the domain $D$. Because Kriging is
known to be less accurate for extrapolation (also illustrated in the first 
numerical example), it is common to collect data near the boundary of the domain
of interest in practice, e.g., \cite{dai2017geostatistics}. 
\begin{figure}[h]
\centering
\includegraphics[width=0.53\textwidth]{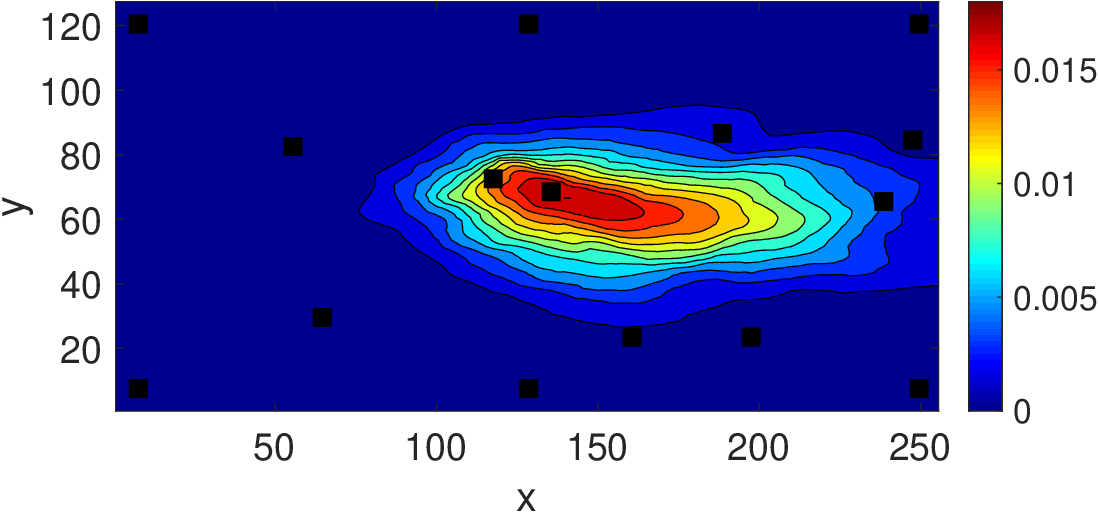}
\caption{Solute transport: reference solution of the solute concentration when $t=8$ days and 
  observation locations (black squares).}
\label{fig:adv_truth}
\end{figure}

\subsubsection{Field reconstruction}

We first use Kriging to reconstruct $\ey$ using 15 observations.
Figures~\ref{fig:adv_pred}~(a) and (b) present $\hy$ and
the error $\hy-\ey$. 
We can see that Kriging performs poorly, and the relative error 
is larger than $40\%$.
Next, we assume that only $10$ simulations (i.e., $M_H=10$) on
$256\times 128$ uniform grids are available and use them in the MC-based PhIK
to reconstruct $\ey$. Specifically, the mean and covariance matrix are computed from 
Eqs.~\eqref{eq:mc_mean} and \eqref{eq:mc_kernel} using ensembles
$\Hyh$. 
Figure~\ref{fig:adv_pred}(c) and (d) present corresponding $\hy$ and $\hy-\ey$,
respectively. These results are better than the Kriging as the relative
error is around $26\%$. Finally, we include $M_L=500$ coarse-resolution 
simulations with $64\times 32$ uniform grids to use MLMC 
Eqs.~\eqref{eq:mlmc_mean} and \eqref{eq:mlmc_cov}, to approximate the mean
and covariance. The corresponding $\hy$ and $\hy-\ey$ are presented in Figure~\ref{fig:adv_pred}(e) and (f), respectively.
Adding the coarse simulations significantly improves prediction as the
relative error reduces to approximately $14\%$. Of note, if we only use the MLMC
mean to approximate $\ey$, the relative error is $18\%$.
\begin{figure}[h]
\centering
\subfigure[$\hy$ by Kriging]{
\includegraphics[width=0.45\textwidth]{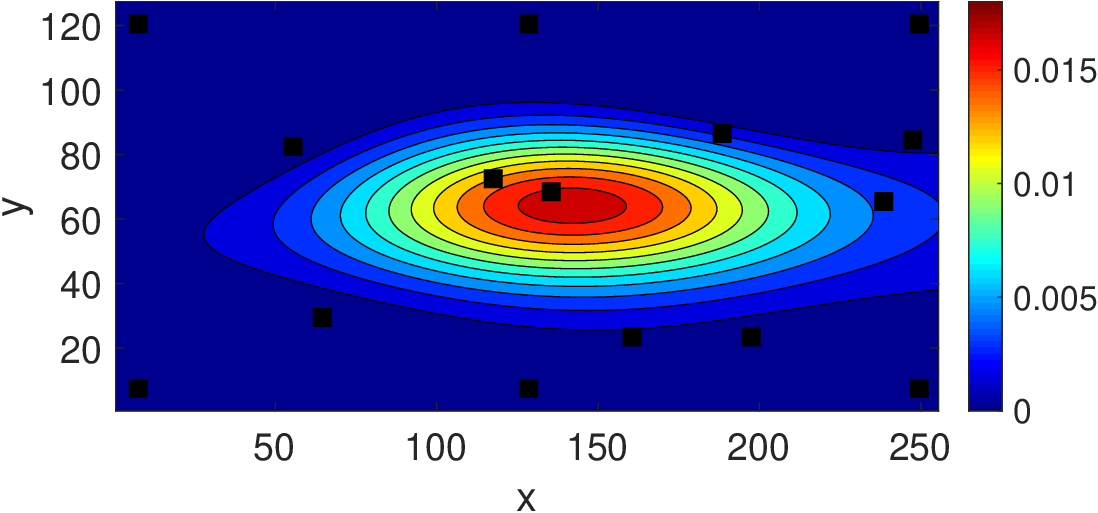}}\quad
\subfigure[$\hy-\ey$ by Kriging]{
\includegraphics[width=0.45\textwidth]{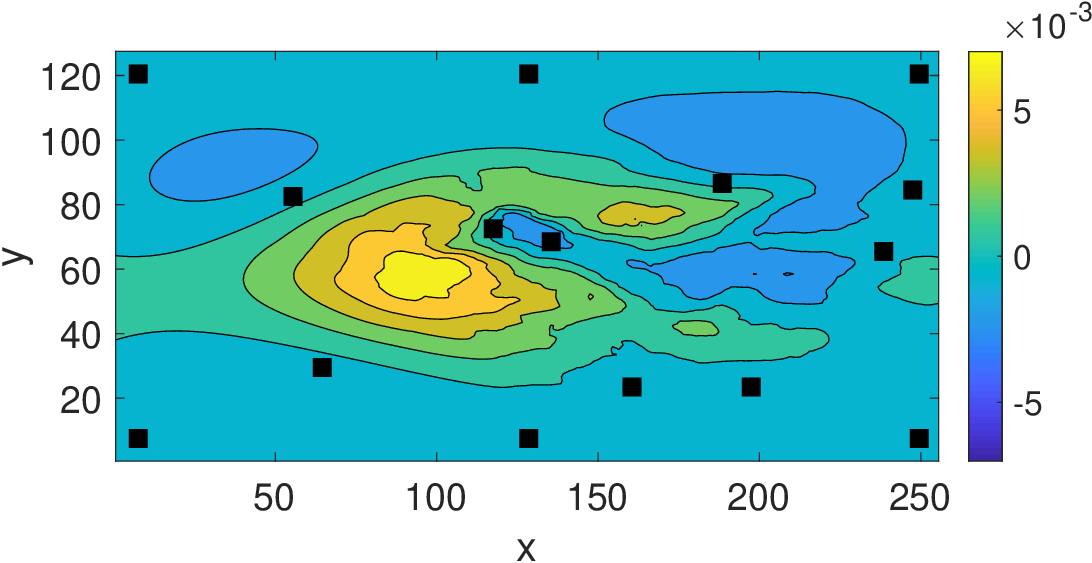}} \\
\subfigure[$\hy$ by MC-PhIK]{
\includegraphics[width=0.45\textwidth]{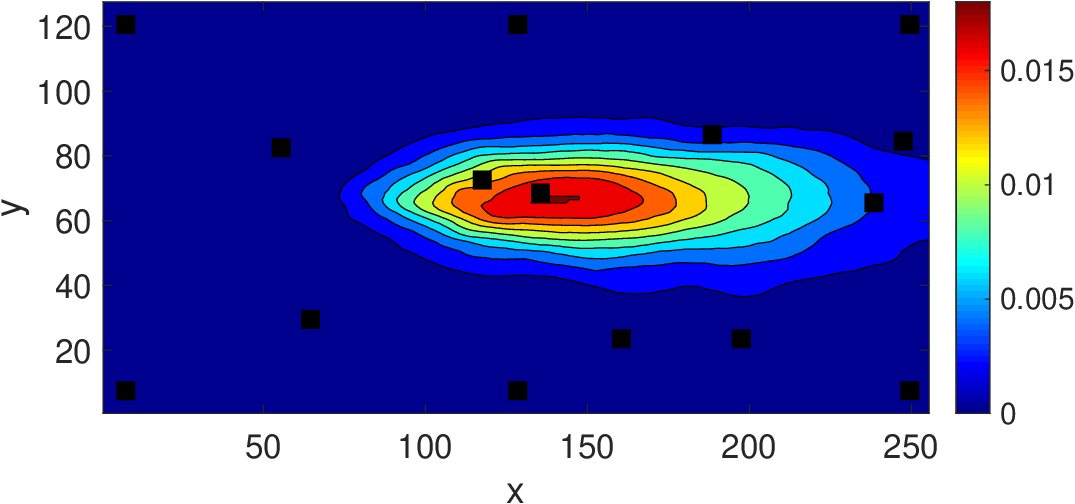}}\quad
\subfigure[$\hy-\ey$ by MC-PhIK]{
\includegraphics[width=0.45\textwidth]{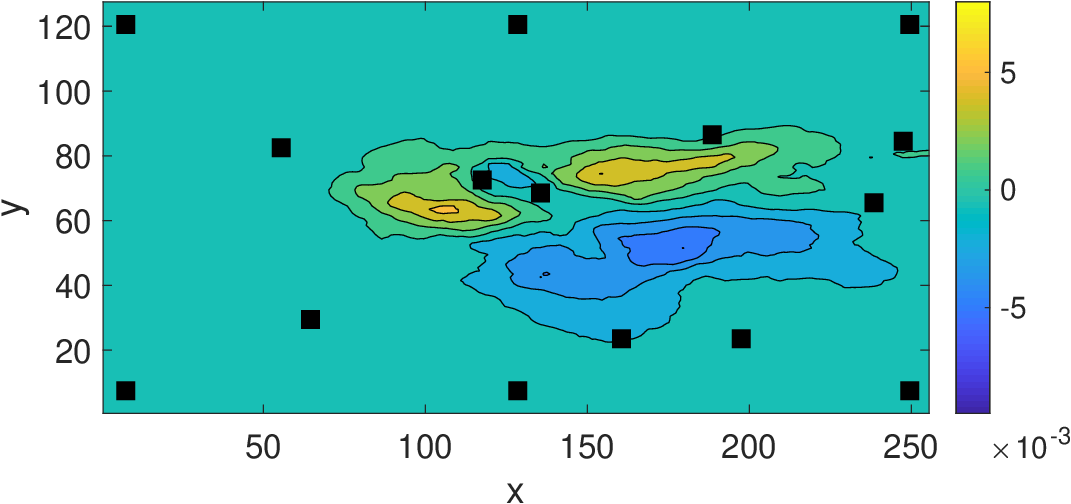}} \\
\subfigure[$\hy$ by MLMC-PhIK]{
\includegraphics[width=0.45\textwidth]{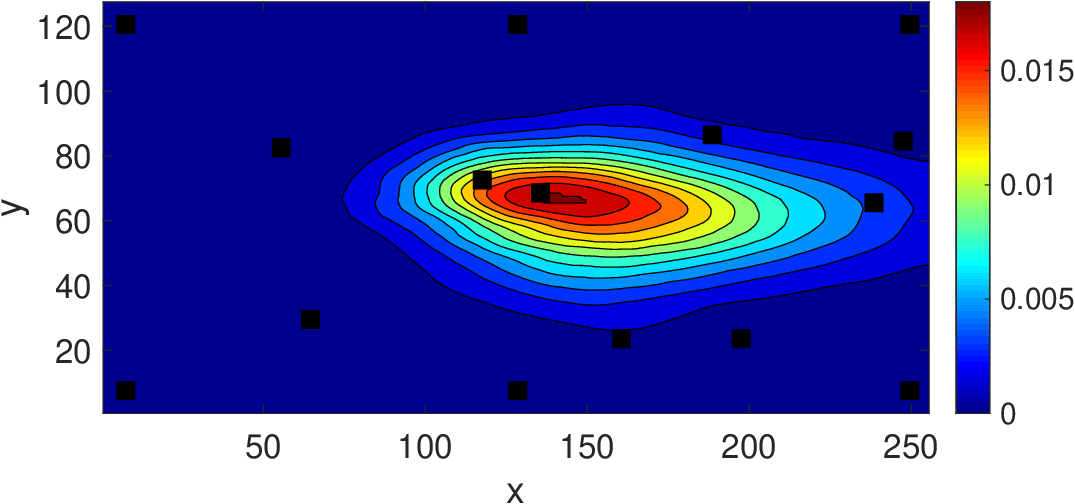}}\quad
\subfigure[$\hy-\ey$ by MLMC-PhIK]{
\includegraphics[width=0.45\textwidth]{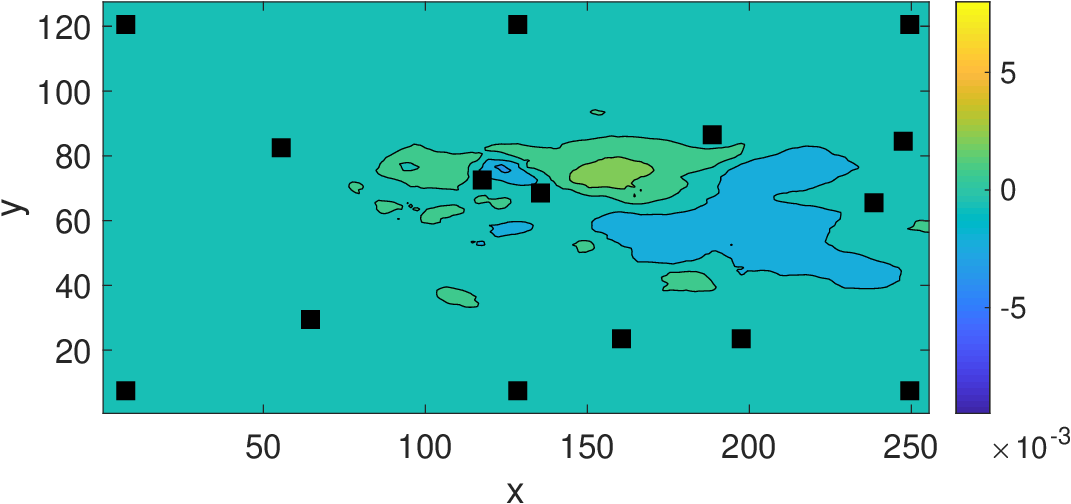}} 
\caption{Solute transport: reconstructed solute concentration field $\hy$ by Kriging, MC-based
  PhIK with $10$ high-resolution simulations, MLMC-based PhIK with $10$
  high-resolution (mesh size $1$m$\times1$m) simulations and $500$ low-resolution
  (grid size $4$m$\times 4$m) simulations, and their difference from the exact field 
  $\hy-\ey$. Black squares are the observation locations.}
\label{fig:adv_pred}
\end{figure}

Next, we study how the MLMC-based PhIK's accuracy depends on the number of 
high-resolution simulations $M_H$ for the fixed number of low-resolution 
simulations $M_L=500$. Figure~\ref{fig:adv_mlmc_err}(a) shows how the MLMC-based 
PhIK error $\Vert\hy-\ey\Vert_2/\Vert \ey\Vert_2$ decreases with 
increasing $M_H$. For comparison, we also compute error in the MC-based PhIK for
the same number of $M_H$. It is clear that MC-based PhIK is less accurate than 
MLMC-based PhIK, especially for small $M_H$.  
Also, the smaller error in MLMC-based PhIK is achieved with a smaller 
computational cost than that of MC-based PhIK. In this example, the number of 
degrees of freedom in the low-resolution simulation is $1/16$ of that in the 
high-resolution simulation. As we use an implicit scheme for the dispersion
operator and an explicit scheme for the advection operator, according to the CFL 
condition, the time step in a low-resolution simulation is approximately four 
times larger than the time step in a high-resolution simulation.  
Therefore, the computational cost of a high-resolution simulation is at least 
$64$ times that of a low-resolution simulation and the cost of $500$ 
low-resolution simulations is less than eight high-resolution ones. Thus, for
the considered problem, the MLMC-based PhIK using $10$ high-resolution and $500$ 
low-resolution simulations is less costly than MC-based PhIK with $18$ high-resolution
simulations, while its accuracy is better than the latter with $90$ 
high-resolution simulations (as shown in Figure~\ref{fig:adv_mlmc_err}(a)).
Here, the accuracy of MLMC-based PhIK changes slowly as $M_H$
increase from $20$ to $90$. 
This can be partially explained by examining the mean and covariance
computed from MLMC. We use the mean and covariance of $5000$ high-resolution
simulations as reference and compare them with results by MLMC, respectively,
in Figure~\ref{fig:adv_mlmc_err}(b). The differences are normalized by the
$\Vert y(\bm x)\Vert_2$ to investigate their influence on the MLMC-based PhIK's
accuracy. Here, we abuse the notation to set $\Vert\bm c(\bm
x)\Vert_F=\left(\sum_{\bm x\in D}
\Vert \bm c(\bm x)\Vert_2^2 \right)^{1/2}$, where $\bm c(\bm x)$ is the
vector used in GPR prediction.
It is clear that the influence from the difference in estimating the
$\bm c(\bm x)$ (triangles) is very small, and difference between the 
covariance matrix (not shown) is even smaller than the difference in 
$\bm c(\bm x)$ because the $k$th row of $\tensor C$ is $\bm c(\bm x^{(k})$.
The difference in estimating the mean (squares) changes slowly when the number 
of high-resolution simulations is larger than $30$ which is consistent with the
trend of the diamonds in Figure~\ref{fig:adv_mlmc_err}(a).

\begin{figure}[!h]
\centering
  \subfigure[Field reconstruction error]{
  \includegraphics[width=0.4\textwidth]{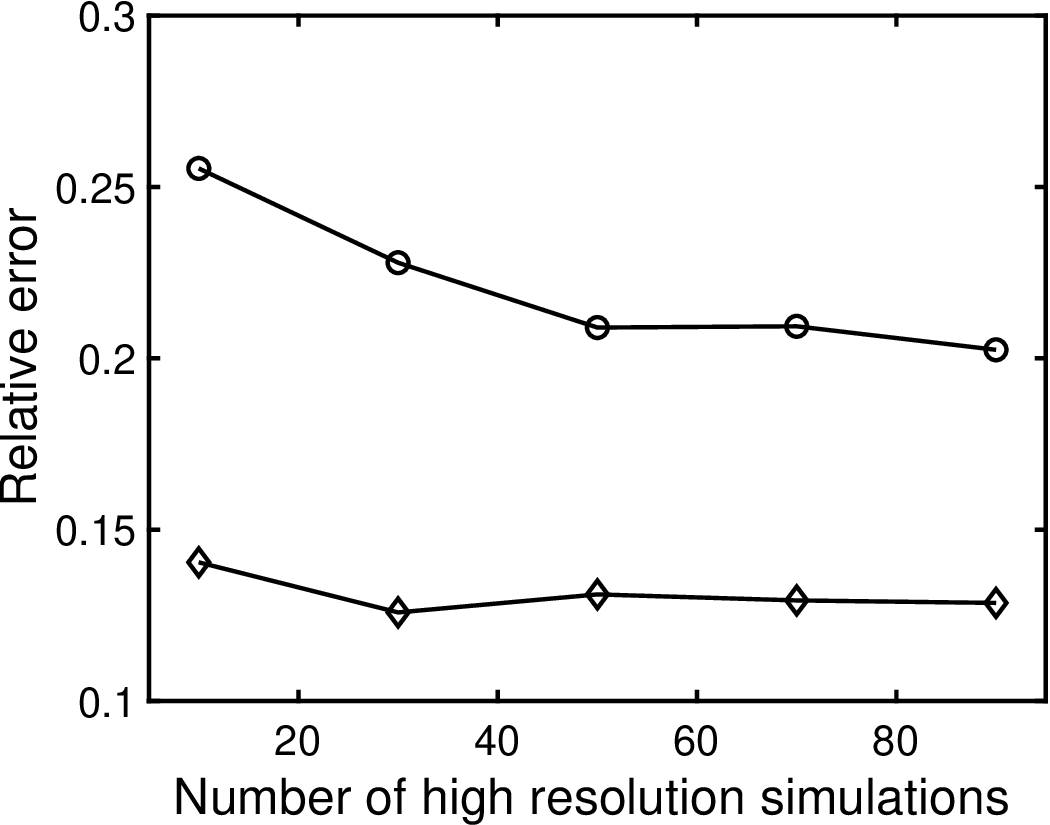}}\quad
\subfigure[MLMC approximation error]{
  \includegraphics[width=0.4\textwidth]{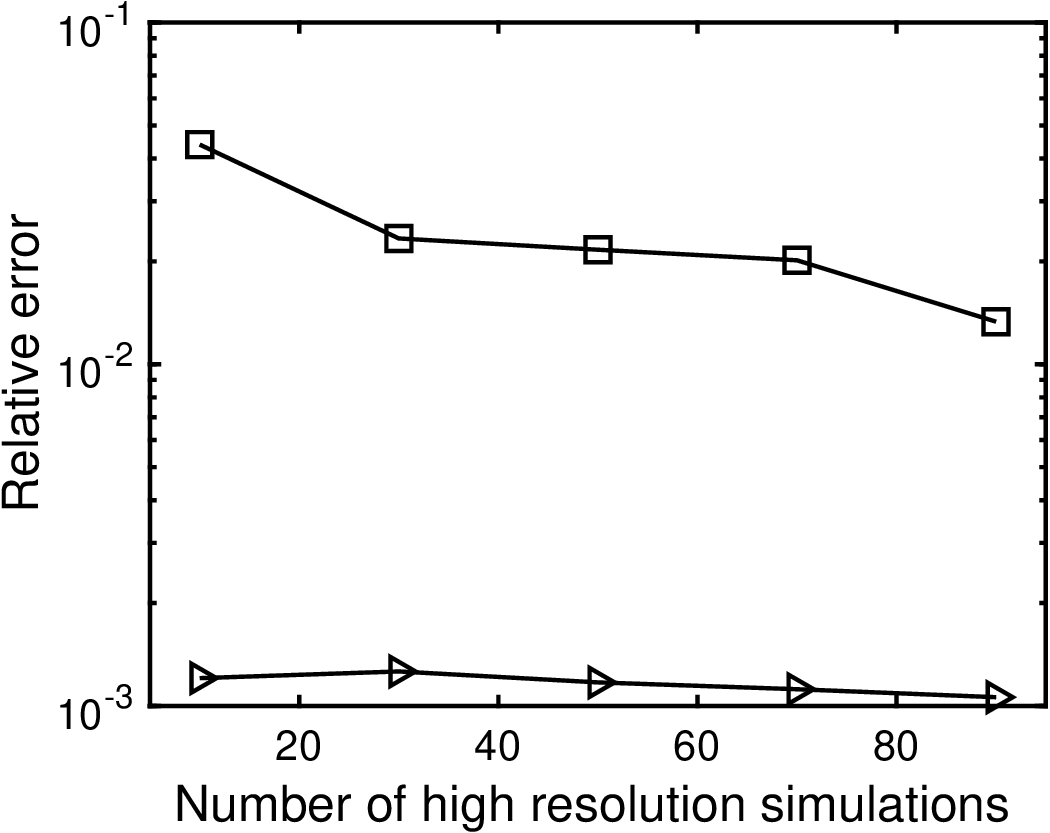}}
\caption{Solute transport: (a) Relative error of solute concentration 
  $\Vert\hy-\ey\Vert_2/\Vert \ey\Vert_2$ by PhIK using different numbers of
  high-resolution simulations (mesh size $1$m$\times 1$m) only
  (``$\circ$") and $500$ low-resolution simulations (mesh size 
  $4$m$\times 4$m) in addition to different numbers of high-resolution simulations
  (``$\diamond$"). (b) $\Vert \mu_{_{MC}}(\bm x) - \mu_{_{MLMC}}
  (\bm x) \Vert_2/\Vert y(\bm x)\Vert_2$ (``$\square$'') and $\Vert
  c_{_{MC}}(\bm x) - c_{_{MLMC}} (\bm x) \Vert_F/\Vert y(\bm x)\Vert_2$
  (``$\triangleright$''). }
\label{fig:adv_mlmc_err}
\end{figure}

Moreover, we denote the cost of each single low-fidelty simulation as $C_L$ and
the cost of each high-fidelity simulation as $C_H$. The total cost of
simulations for MC-based PhIK is $C_HM$, where $M\approx M_L$, while the total
cost of simulations for MLMC-based PhIK is $C_HM_H+C_LM_L$. The cost ratio (MLMC
cost over MC cost) is approximately $M_H/M_L+C_L/C_H$. In this specific case, 
$M_H/M_L=10/500=0.02$, and $C_L/C_H=1/64=0.015625$. So the ratio 
of the cost is $0.035$, i.e., $18$ high-resolution simulations against $500$
ones.

\subsubsection{Active learning}

We now compare the performance of the active learning algorithm based on
Kriging and MLMC-based PhIK with ensembles $\Hyh$ and $\Hyl$ with $M_H=10, M_L=500$.
Because we demonstrated that MLMC-based PhIK is more accurate and less costly than 
MC-based PhIK, we do not use the latter in this comparison. 
Figures~\ref{fig:adv_std} shows $\hs$ for Kriging, and MLMC-PhIK,
both using the initial 15 observations (locations are denoted by squares). 
Note that $\hs$ in MLMC-PhIK is much smaller than that in Kriging and the
locations of local maxima differ. 
\begin{figure}[!h]
\centering
\subfigure[Kriging $\hat s$]{
\includegraphics[width=0.45\textwidth, trim=10 70 10 80, clip ]{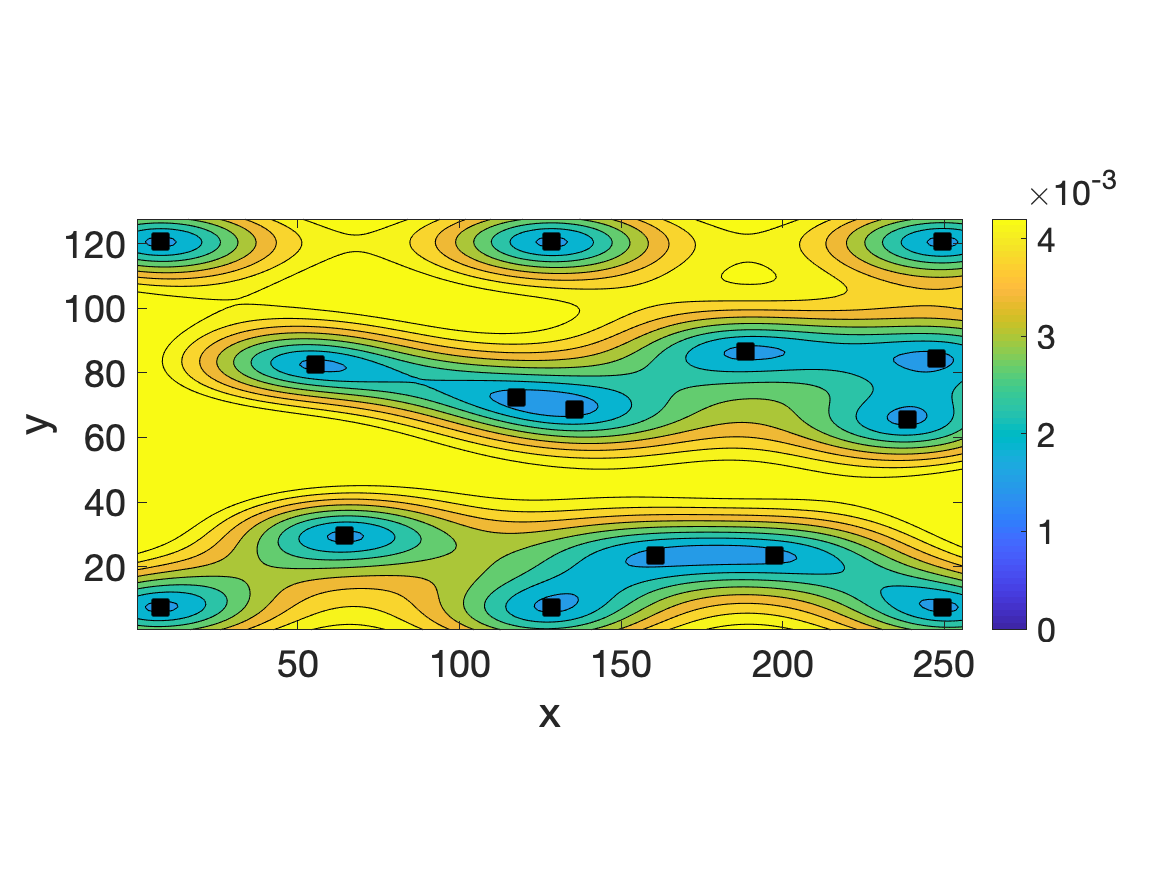}}
\subfigure[MLMC-based PhIK $\hat s$]{
\includegraphics[width=0.44\textwidth]{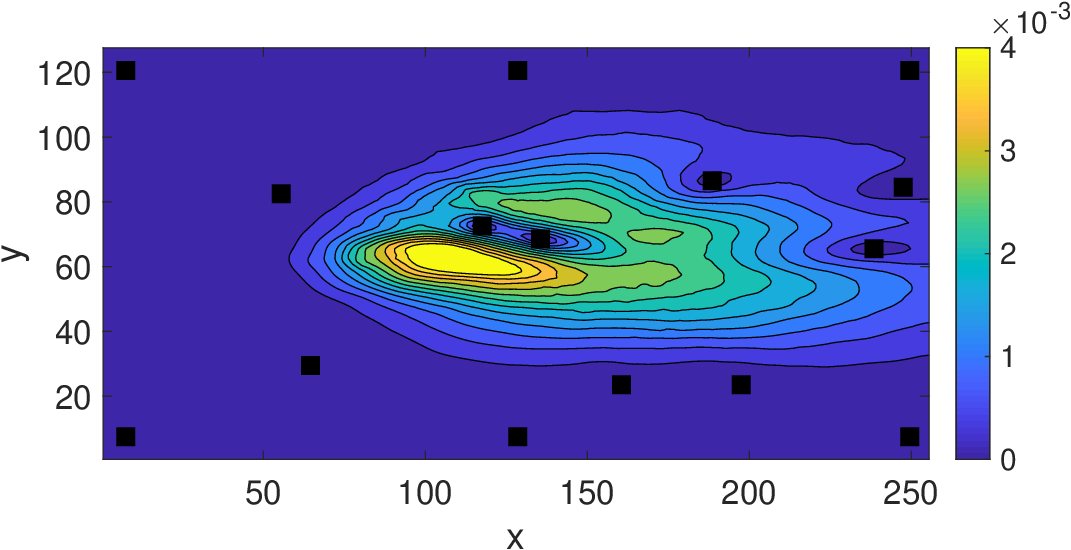}}
\caption{Solute concentration: (a) $\hs$
 of Kriging using $15$ observations; 
 (b) $\hs$ by MLMC-based PhIK using $15$ observations.}
\label{fig:adv_std}
\end{figure}

Next, we use Algorithm~\ref{algo:act} in combination with Kriging 
MLMC-based PhIK to add new observations one by one. In these figures, the 
initial $15$ observation locations are marked as squares and new locations are 
marked as stars. 
Figure~\ref{fig:adv_act} compares these results when $15$ new observations are
added. It shows that
MLMC-based PhIK consistently outperforms Kriging as 
quantitatively confirmed by the comparison in Figure~\ref{fig:adv_comp_l2}. For both 
methods, the error and uncertainty decrease with an increasing number of 
observations. 
Also, there are significant differences in the results. In Kriging, most 
new points are added near the boundary, while in MLMC-based PhIK, new measurements are
added inside the domain close
to the plume center. This is because the error in Kriging is dominated by the 
extrapolation error at the boundary. In MLMC-based PhIK, the boundary conditions in
the physical model provide sufficient information near the boundaries.
Consequently, the active learning algorithm explores more information around the
plume. Moreover, compared with the first numerical example, the Gaussian kernel used 
in the Kriging method is less suitable for approximating the inhomogeneous field
$\bm F$ in this example. As a result, PhIK
achieves higher accuracy than Kriging with a smaller number of observations.
\begin{figure}[!h]
\centering
\subfigure[Kriging $\hy$]{
\includegraphics[width=0.45\textwidth]{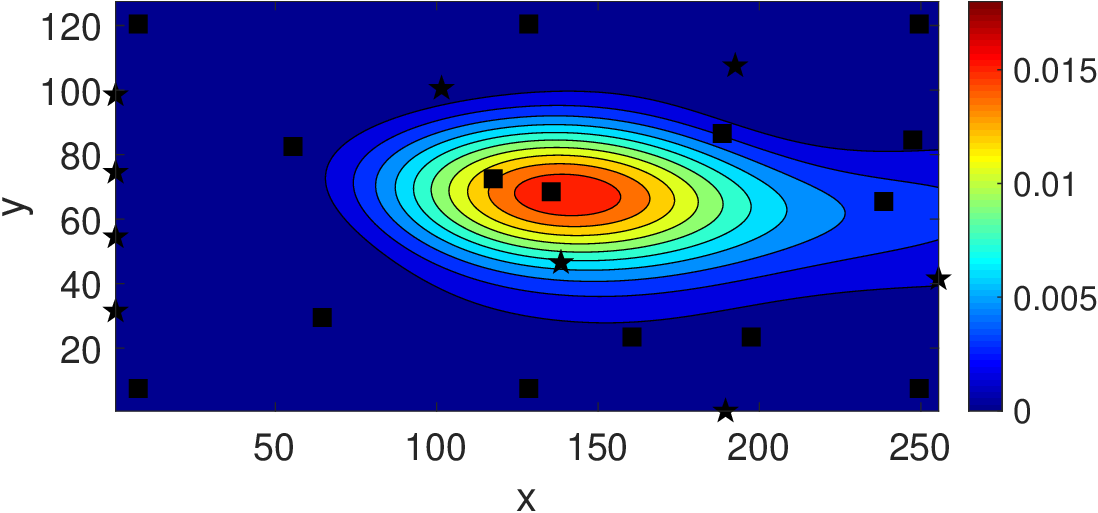}} \quad
\subfigure[PhIK $\hy$]{
\includegraphics[width=0.45\textwidth]{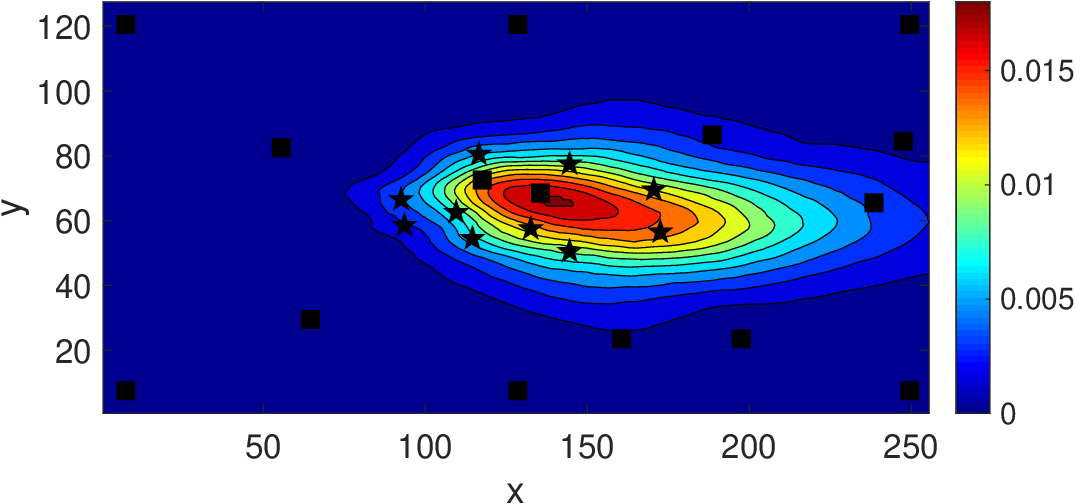}} \\
\subfigure[Kriging $\hs$]{
\includegraphics[width=0.45\textwidth]{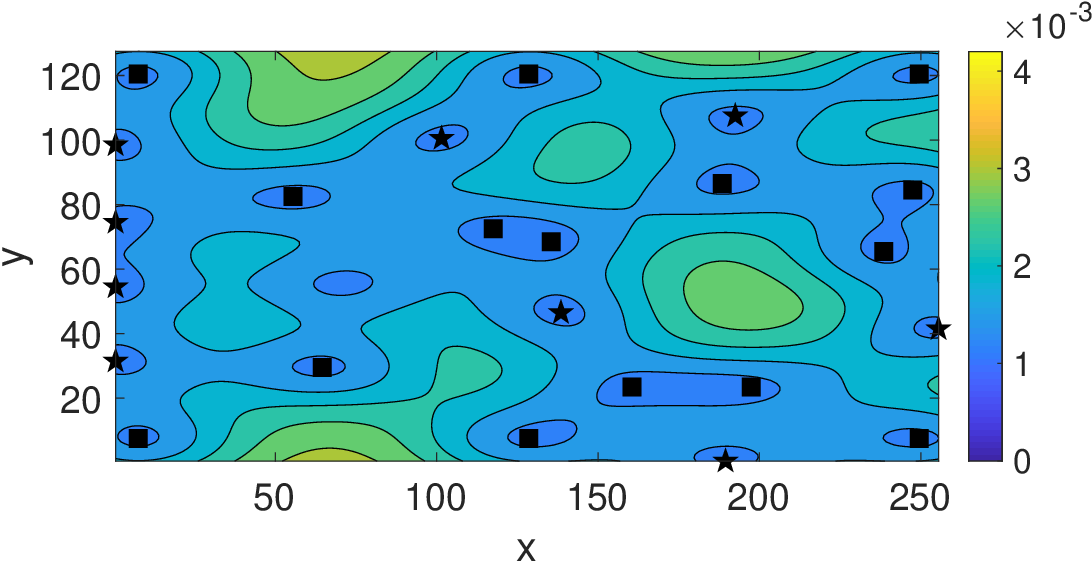}} \quad
\subfigure[PhIK $\hs$]{
\includegraphics[width=0.45\textwidth]{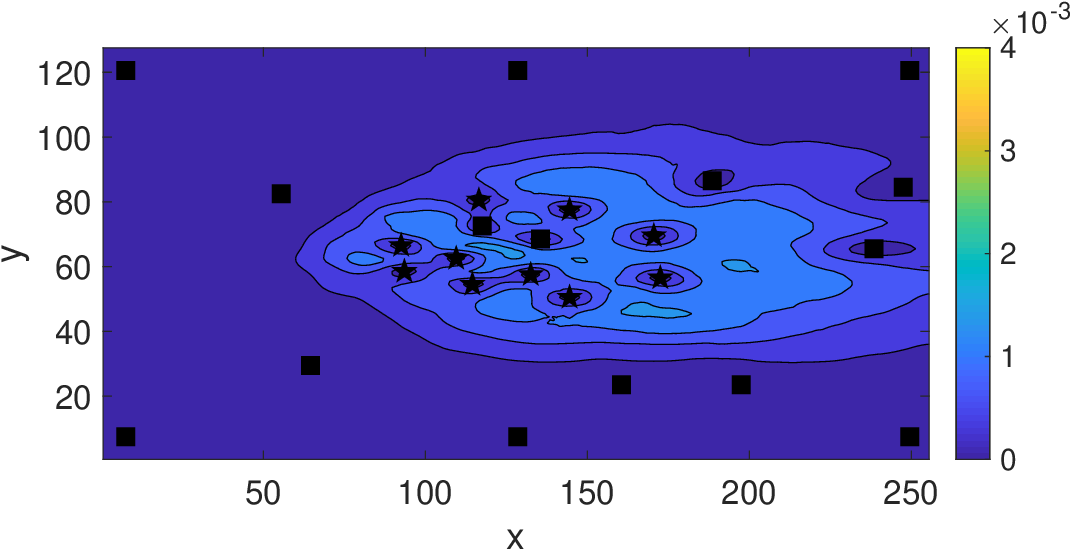}}  \\
\subfigure[Kriging $\hy-\ey$ ]{
\includegraphics[width=0.45\textwidth]{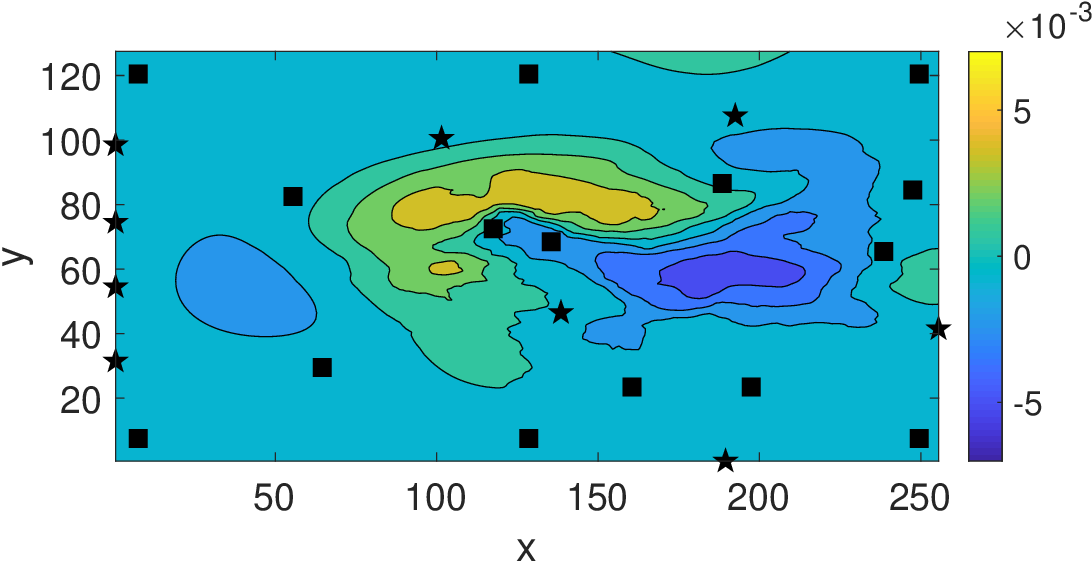}} \quad
\subfigure[PhIK $\hy-\ey$]{
\includegraphics[width=0.45\textwidth]{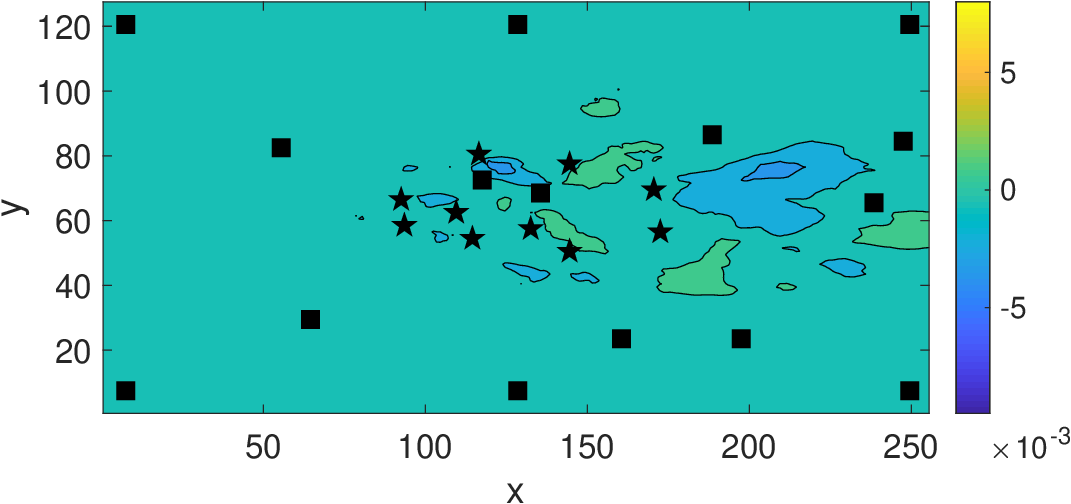}} 
\caption{Solute transport: reconstruction of the solute concentration by Kriging and MLMC-based
  PhIK via active learning.
  Black squares mark the locations of the original eight observations, and stars 
  are newly added observations. Left colum: Kriging; right column: PhIK. First 
  row: $\hy$; second row: $\hs$; third row: $\hy-\ey$.}
\label{fig:adv_act}
\end{figure}
\begin{figure}[!h]
\centering
\includegraphics[width=0.4\textwidth]{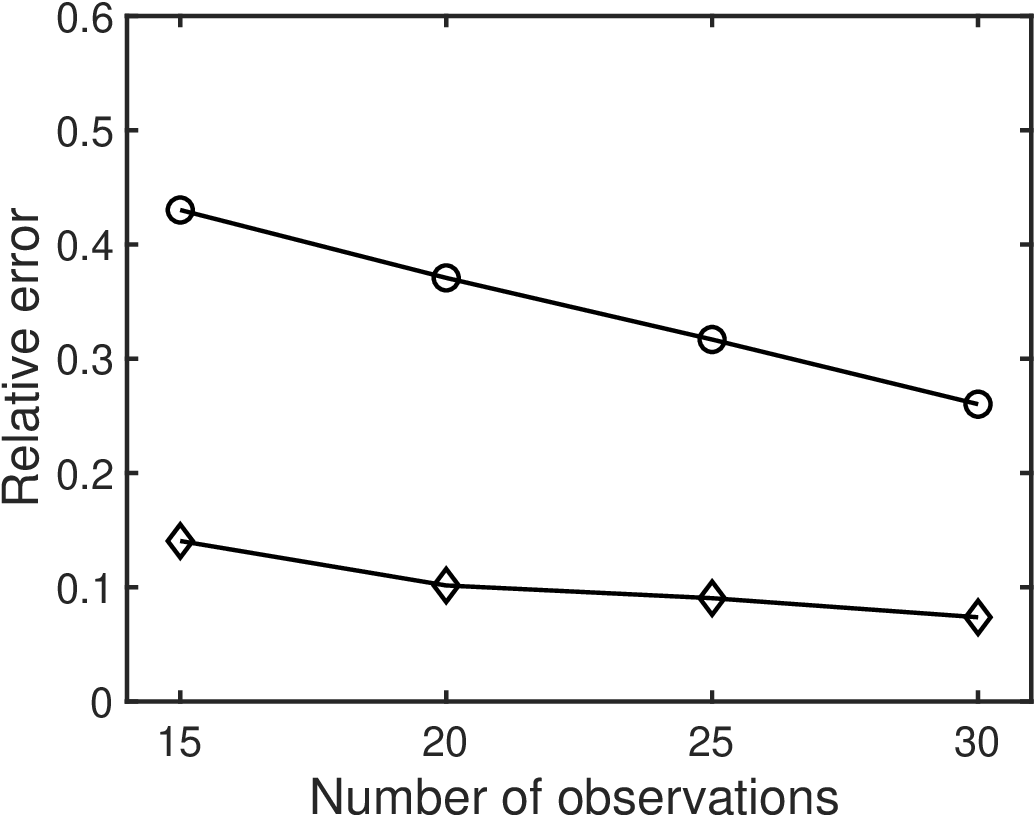}
\caption{Solute transport: relative error of 
  $\Vert\hy-\ey\Vert_2/\Vert\hy \Vert_2$ of Kriging (``$\circ$") and 
  MLMC-based PhIK (``$\diamond$") using different numbers of total observations
  via active learning.}
\label{fig:adv_comp_l2}
\end{figure}
Finally, as an example of preserving linear constraints, the
Dirichlet boundary on the left side is kept very well with maximum number
smaller than 1e-6.


\section{Conclusion}
\label{sec:concl}
In this work, we propose the PhIK method, where the mean and covariance 
function in the GP model are computed from a partially known physical model of
the states. We also propose a novel MLMC estimate of the covariance function 
that, in combination with the standard MLMC estimate of the mean, leads to 
significant cost reduction in estimating statistics compared to the standard
MC method. The resulting statistics in PhIK is non-stationary as 
can be expected for states of many physical systems due to their
intrinsic inhomogeneity.
This is different from the standard ``data-driven'' Kriging, where the mean and
kernel are estimated from data only and usually requires an assumption of
stationarity. In addition, PhIK avoids the need for estimating hyperparameters 
in the covariance function, which can be a challenging optimization problem. 

We prove that PhIK preserves the physical knowledge if it is in the form of a
deterministic linear operator. 
We also provide an upper error bound in the PhIK prediction in the presence of 
numerical errors. These theoretical results indicate that the accuracy of PhIK 
prediction depends on the physical model's accuracy
($\Vert\bm y-\bm\mu\Vert_2$), numerical error ($\epsilon$)
the physical model's stochastic properties, and the selection of observation 
locations ($(\Vert\tensor C+\alpha\tensor I)^{-1}\Vert_2$). We demonstrate that
an active learning
algorithm in combination with PhIK suggests very different locations for new
observations than the data-driven Kriging and has the potential to result in
significantly more accurate predictions with reduced uncertainty using fewer
measurements. Other Kriging methods, e.g., university Kriging, may perform 
better than ordinary Kriging. However, such methods require non-stationary mean
or kernel with larger numbers of hyperparameters, which adds to the difficulty
of the optimization problem in identifying these hyperparameters.

Our method allows model and data convergence without solving complex
optimization problems. Moreover, this method is nonintrusive as 
it can utilize existing domain codes to compute mean and covariance functions for GPR. This differs
from other ``physics-informed'' GPR methods, e.g.,  
 \cite{hennig2015probabilistic, schober2014probabilistic, raissi2017machine, raissi2018numerical}, 
where physical laws are used to derive equations for the covariance function, which,
in general, must be solved numerically. 
PhIK is especially suitable for problems with very costly observations and
partially known physics models, including climate, oceanography, and transport
in porous media. Such applications are governed by conservation laws, but the
parameters, source/sink terms, and stresses in these conservation laws are often
unknown and could be modeled as random processes.

Finally, it is worth repeating that the accuracy of PhIK prediction depends on
the accuracy of the stochastic physical model. In other words, the PhIK accuracy depends on the distance between the
exact solution and the linear space spanned by the simulation ensemble. The
accuracy may be improved by adding correction terms, e.g,~\cite{yang2019physics}
and the cost of simulations may be further reduced by using other multi-fidelity
approaches~\cite{yang2020bifidelity}.

\section*{Acknowledgments}

This work was supported by the U.S. Department of Energy (DOE), Office of 
Science, Office of Advanced Scientific Computing Research (ASCR) as part of the
Multifaceted Mathematics for Complex Systems and Uncertainty Quantification in 
Advection-Diffusion-Reaction Systems projects. A portion of the research 
described in this paper was conducted under the Laboratory Directed Research
and Development Program at Pacific Northwest National Laboratory (PNNL). PNNL is
operated by Battelle for the DOE under Contract DE-AC05-76RL01830.


\appendix
\section{Proof of Theorems~\ref{thm:err_bound}}
\label{sec:app_proof1}
\begin{proof}
The Kriging prediction Eq.~\eqref{eq:krig_pred0} can be rewritten as the following function form:
  \begin{equation}
    \label{eq:krig_form0}
  \hat y(\bm x) = \mu(\bm x) + \sum_{i=1}^N a_i k(\bm x, \bm x^{(i)}), 
  \end{equation}
  where $\bm x\in D$, $a_i$ is the $i$-th entry of $(\tensor C+\alpha\tensor I)^{-1}(\bm y-\bm\mu)$. 
Similarly, the PhIK prediction can be written as
  \begin{equation}
    \label{eq:krig_form1}
    \hat y(\bm x) = \mu_{_{MC}}(\bm x) + \sum_{i=1}^N \tilde a_i k_{_{MC}}(\bm x, \bm x^{(i)}), 
  \end{equation}
  where $\tilde a_i$ is the $i$-th entry of 
  $(\tensor C_{_{MC}}+\alpha\tensor I)^{-1}(\bm y-\bm\mu_{_{MC}})$. 
We have
  \[\begin{aligned} \Vert\mathcal{L}\mu_{_{MC}}(\bm x)-\overline{g(\bm x)}\Vert 
      & = \Big\Vert\dfrac{1}{M}\sum_{m=1}^M\mathcal{L} Y^m(\bm x)-\dfrac{1}{M}\sum_{m=1}^Mg(\bm x;\omega^m)\Big\Vert \\
   & \leq \dfrac{1}{M}\sum_{m=1}^M\left\Vert\mathcal{L} Y^m(\bm x)-g(\bm
  x;\omega^m)\right\Vert \leq \epsilon. \end{aligned}\]
Also,
\begin{equation}
  \label{eq:comp_err_bound}
  \begin{aligned}
    & \Vert\mathcal{L}k_{_{MC}}(\bm x,\bm x^{(i)})\Vert \\
     = &\left\Vert\dfrac{1}{M-1} \sum_{m=1}^M\left(Y^m(\bm
    x^{(i)})-\mu_{_{MC}}(\bm x^{(i)})\right) \mathcal{L}\Big(Y^m(\bm
    x)-\mu_{_{MC}}(\bm x)\Big)\right\Vert \\
     \leq & \dfrac{1}{M-1} \sum_{m=1}^M\left\vert Y^m(\bm x^{(i)})-\mu_{_{MC}}(\bm
    x^{(i)})\right\vert\Big\Vert \mathcal{L}\Big(Y^m(\bm x)-\mu_{_{MC}}(\bm
    x)\Big)\Big\Vert \\
     \leq & \dfrac{1}{M-1} \sum_{m=1}^M\left\vert Y^m(\bm
    x^{(i)})-\mu_{_{MC}}(\bm x^{(i)})\right\vert\cdot \\
    &\left\{\Big\Vert \mathcal{L}Y^m(\bm x)-g(\bm x;\omega^m) - \Big(\mathcal{L}\mu_{_{MC}}(\bm x)-\overline{g(\bm
  x)}\Big)\Big\Vert + \Vert g(\bm x;\omega^m)-\overline{g(\bm x)}\Vert\right\} \\
     \leq &\dfrac{2\epsilon}{M-1} \bigg(M\sum_{m=1}^M \left\vert Y^m(\bm
x^{(i)})-\mu_{_{MC}}(\bm x^{(i)})\right\vert^2\bigg)^{\half} \\ 
    & + \dfrac{1}{M-1} \bigg(\sum_{m=1}^M \left\vert Y^m(\bm x^{(i)})-\mu_{_{MC}}(\bm
  x^{(i)})\right\vert^2\bigg)^{\half}\bigg(\sum_{m=1}^M \Vert g(\bm
  x;\omega^m)-\overline{g(\bm x)}\Vert^2 \bigg)^{\half} \\ 
  = &2\epsilon \sqrt{\dfrac{M}{M-1}}\bigg(\dfrac{1}{M-1}\sum_{m=1}^M
\left\vert Y^m(\bm x^{(i)})-\mu_{_{MC}}(\bm x^{(i)})\right\vert^2\bigg)^{\half} \\ 
    &  + \bigg(\dfrac{1}{M-1}\sum_{m=1}^M \left\vert Y^m(\bm
x^{(i)})-\mu_{_{MC}}(\bm x^{(i)})\right\vert^2\bigg)^{\half}
\bigg(\dfrac{1}{M-1}\sum_{m=1}^M \Vert g(\bm x;\omega^m)-\overline{g(\bm
x)}\Vert^2 \bigg)^{\half}\\ 
 = & \left(2\epsilon \sqrt{\dfrac{M}{M-1}}+ \sigma(g(\bm x;\omega^m))\right) \sigma(Y^m(\bm x^{(i)})).
  \end{aligned}
\end{equation}
Thus, according to Eq.~\eqref{eq:krig_form1}:
\[\begin{aligned}
    \Vert\mathcal{L}\hat y(\bm x)-\overline{g(\bm x)}\Vert & \leq
    \epsilon+\left[ 2\epsilon\sqrt{\dfrac{M}{M-1}} + \sigma(g(\bm x;\omega^m))\right]\sum_{i=1}^N |\tilde a_i| \sigma(Y^m(\bm x^{i})) \\
    & \leq\epsilon+ \left[ 2\epsilon\sqrt{\dfrac{M}{M-1}} + \sigma(g(\bm
    x;\omega^m))\right]\max_{1\leq i\leq N}|\tilde a_i| \sum_{i=1}^N \sigma(Y^m(\bm x^{i})) 
\end{aligned}
\]
  Because $\max_i\vert\tilde a_i\vert= \left\Vert(\tensor
  C_{_{MC}}+\alpha\tensor I)^{-1}(\bm y-\bm\mu_{_{MC}})\right\Vert_{\infty}$, the
conclusion holds.
\end{proof}

\section{Proof of Corollary~\ref{cor:err_bound2}}
\label{sec:app_proof2}
\begin{proof}
  \[\begin{aligned}
      \Vert\mathcal{L}\overline Y^m(\bm x)\Vert & = \Vert\mathcal{L} Y_H^m(\bm
      x)-\mathcal{L} Y_L^m(\bm x)\Vert \\ & = \Vert\mathcal{L} Y_H^m(\bm
  x)-g(\bm x;\omega^m) -(\mathcal{L} Y_L^m(\bm x)-g(\bm x;\omega^m))\Vert  \\
  & \leq \epsilon_{_H} + \epsilon_{_L}.\end{aligned} \]
  We denote $\mu_{_L}(\bm x)=\dfrac{1}{M_L}\sum_{m=1}^{M_L}Y_L^m(\bm x)$, and 
  $\overline\mu(\bm x)=\dfrac{1}{M_H}\sum_{m=1}^{M_H}\overline Y^m(\bm x)$.
  According to Eq.~\eqref{eq:mlmc_mean}, $\mu_{_{MLMC}}(\bm x)=\mu_{_L}(\bm
  x)+\overline\mu(\bm x)$. 
  By construction, 
  $\Vert\mathcal{L}\mu_{_L}(\bm x)-\overline{g(\bm x)}\Vert\leq \epsilon_{_L}$ 
  and $\Vert\mathcal{L}\overline\mu(\bm x)\Vert\leq \epsilon_{_L}+\epsilon_{_H}$. Thus,
  \[
    \Vert\mathcal{L}\mu_{_{MLMC}}(\bm x)-\overline{g(\bm x)}\Vert
    =\left\Vert\mathcal{L}\mu_{_L}(\bm x)-\overline{g(\bm x)} +
    \mathcal{L}\overline\mu(\bm x)\right\Vert\leq 2\epsilon_{_L}+\epsilon_{_H}.
  \]
  Following the same procedure in Eq.~\eqref{eq:comp_err_bound}, we have
  \begin{multline*}\left\Vert\dfrac{1}{M_L-1} \sum_{m=1}^{M_L}\left(Y_L^m(\bm
    x^{(i)})-\mu_{_{L}}(\bm x^{(i)})\right) \mathcal{L}\left(Y_L^m(\bm x)-\mu_{_{L}}(\bm x)\right)\right\Vert  \\
  \leq\bigg(2\epsilon_{_L} \sqrt{\dfrac{M_L}{M_L-1}}+\sigma(g(\bm
  x;\omega^m)\bigg)\sigma(Y_L^m(\bm x^{(i)})),\end{multline*}
  and
  \begin{multline*}\left\Vert\dfrac{1}{M_H-1} \sum_{m=1}^{M_H}\left(\overline Y^m(\bm
    x^{(i)})-\overline\mu(\bm x^{(i)})\right) \mathcal{L}\left[\overline Y^m(\bm
    x)-\overline\mu(\bm x)\right]\right\Vert \\
  \leq 2(\epsilon_{_H}+\epsilon_{_L}) \sqrt{\dfrac{M_H}{M_H-1}}\sigma(\overline
Y^m(\bm x^{(i)})). \end{multline*}
  As such,
  \[\begin{aligned}
      & \Vert\mathcal{L}\hat y(\bm x)-\overline{g(\bm x)}\Vert \\ \leq &\epsilon_{_H}+2\epsilon_{_L} +
      \left(2\epsilon_{_L}\sqrt{\dfrac{M_L}{M_L-1}} + \sigma(g(\bm
      x;\omega^m))\right)\sum_{i=1}^N\tilde a_i\sigma(Y_L^m(\bm x^{(i)})) \\
      & + 2(\epsilon_{_H}+\epsilon_{_L})\sum_{i=1}^N\tilde a_i\sqrt{\dfrac{M_H}{M_H-1}}\sigma(\overline Y^m(\bm x^{(i)}))\\
     = &\epsilon_{_H}\left(1+2\sum_{i=1}^N\tilde a_i\sqrt{\dfrac{M_H}{M_H-1}}\sigma(\overline Y^m(\bm x^{(i)}))\right)  \\
    & + \epsilon_{_L}\left[2+2\sum_{i=1}^N\tilde
    a_i\left(\sqrt{\dfrac{M_L}{M_L-1}}\sigma(Y_L^m(\bm
  x^{(i)}))+\sqrt{\dfrac{M_H}{M_H-1}}\sigma(\overline Y^m(\bm x^{(i)}))\right)\right] \\
  & + \sigma(g(\bm x;\omega^m))\sum_{i=1}^N\tilde a_i\sigma(Y_L^m(\bm x^{(i)})).
  \end{aligned}
  \]
\end{proof}

\bibliographystyle{siamplain}
\bibliography{ref}

\end{document}